\def\Eqref#1{Equation~\ref{#1}}
\definecolor{gold(metallic)}{rgb}{0.83, 0.69, 0.22}
\theoremstyle{plain}
\newtheorem{theorem}{Theorem}[section]
\newtheorem{proposition}[theorem]{Proposition}
\newtheorem{lemma}[theorem]{Lemma}
\newtheorem{corollary}[theorem]{Corollary}
\theoremstyle{definition}
\newtheorem{definition}[theorem]{Definition}
\theoremstyle{remark}
\newtheorem{remark}[theorem]{Remark}
\newcommand{\innerprod}[2]{\left\langle #1,#2 \right\rangle}
\newcommand{\norm}[1]{\lVert #1 \rVert}
\newcommand{\cL}{\mathcal{L}}
\newcommand{\w}{\mathbf{w}}
\newcommand{\x}{\mathbf{x}}
\newcommand{\thetab}{\bm{\theta}}
\renewcommand{\tilde}{\widetilde}
\DeclareMathOperator*{\argmin}{\arg\!\min}
\renewcommand*{\backref}[1]{}
\renewcommand*{\backrefalt}[4]{%
    \ifcase #1 (Not cited.)%
    \or        (Cited on page~#2.)%
    \else      (Cited on pages~#2.)%
    \fi}
\title{Flavors of Margin: Implicit Bias of Steepest Descent in Homogeneous Neural Networks}
\date{}
\author{
\begin{tabular}{c}
Nikolaos Tsilivis$^1$,
Eitan Gronich$^{2}$,
Julia Kempe$^{1, 3},$
Gal Vardi$^{2}$
\\
\\
\normalsize{$^1$New York University, $^2$Weizmann Institute of Science, $^3$Meta FAIR}\\
\\
\normalsize{\texttt{nt2231@nyu.edu, eitan.gronich@weizmann.ac.il, kempe@meta.com, gal.vardi@weizmann.ac.il}}
\end{tabular}
}
\begin{document}

\maketitle

\def\doublecolumn{0}


\begin{abstract}
We study the implicit bias of the general family of steepest descent algorithms with infinitesimal learning rate in deep homogeneous neural networks. We show that: (a) an algorithm-dependent geometric margin starts increasing once the networks reach perfect training accuracy, and (b) any limit point of the training trajectory corresponds to a KKT point of the corresponding margin-maximization problem. We experimentally zoom into the trajectories of neural networks optimized with various steepest descent algorithms, highlighting connections to the implicit bias of popular adaptive methods (Adam and Shampoo).
\end{abstract}

\section{Introduction}

Overparameterized neural networks excel in many natural supervised learning applications. A theory that aims to explain their strong generalization performance places optimization at the forefront: in problems where many candidate models are available, the optimization algorithm implicitly selects well-generalizing ones \citep{NTS15}. The term “implicitly” indicates that the loss/objective function does not explicitly favor simple, well-generalizing solutions, rather this occurs due to the properties of the optimization algorithm. Most existing theoretical results on this \textit{implicit bias of optimization} demonstrate, to some extent, that gradient descent in overparameterized problems biases the solution to be the \textit{simplest}, in terms of the lowest possible $\ell_2$ norm of the weights \citep{Sou+18, JiTe19, JiTe20, LyLi20, Nac+19}.

Simplicity, however, is a term that depends on the setting.
For instance, in logistic regression with many irrelevant features, an $\ell_1$-regularized solution is simpler than an $\ell_2$-regularized one \citep{Ng04}.
Moreover, in contemporary deep learning, Adam \citep{KiBa15}, AdamW \citep{LoHu19}, and related optimization algorithms are preferred for language modeling \citep{Zha+20}, and their implicit bias might be better suited for such applications than gradient descent.
It is therefore important to understand the types of solutions favored by optimization algorithms beyond (stochastic) gradient descent, in order to address the current (and future) applications of deep learning.

In this work, we contribute to this line of research by studying the large family of \textit{steepest descent} algorithms (Equation \ref{eq:steep_desc}) with respect to an arbitrary norm $\lVert \cdot \rVert$ in deep, non-linear, homogeneous neural networks. This class of methods extends gradient descent to optimization geometries other than the Euclidean, allowing the update rule to operate under a different norm. It includes coordinate descent (which has strong ties to boosting \citep{Mas+99}), sign gradient descent (which is closely related to Adam \citep{Kun+23}) and spectral steepest descent (which is similar to Shampoo~\citep{Gup+18}) as special cases.

\textbf{Our contributions.} We provide a unifying, rigorous analysis of any steepest descent algorithm in classification settings with locally-Lipschitz, homogeneous neural networks trained using an exponentially-tailed loss. Specifically, we focus on the late stage of training (after the network has achieved perfect training accuracy) in the limit of an infinitesimal learning rate.

Our first result characterizes the algorithm's tendency to increase an algorithm-dependent margin  (Theorem~\ref{thm:margin_monotonicity_main}): similar to prior work on gradient descent \citep{LyLi20}, we show that a \textit{soft} version of the geometric margin starts increasing immediately after fitting the training data.
We then turn our attention to the asymptotic properties of the algorithm.
As we show,
the limit points of training are along the direction of a Karush-Kuhn-Tucker (\textit{KKT}) point of the algorithm-dependent margin maximization problem (Theorem \ref{thm:main_result_main}). 

In total, these results establish (geometric) margin-maximization in \textit{any} steepest descent algorithm and significantly generalize prior results that concerned gradient descent only \citep{LyLi20,Nac+19}. They also generalize a previous version of the current paper, which only established bias towards KKT points in the case of algorithms whose squared norm is a smooth function~\citep{TVK25}. See the end of Section~\ref{ssec:main_result} for a technical discussion.

Finally, in Section~\ref{sec:expr}, we train neural networks with the three main steepest descent algorithms (gradient descent, sign gradient descent and coordinate descent). We perform experiments in: (a) teacher-student tasks, to elucidate the theoretical findings and assess the connection between implicit bias and generalization and (b) image classification tasks, to study the relationship between Adam~\citep{KiBa15}, Shampoo~\citep{Gup+18} and steepest descent algorithms.

\subsection{Related Work}
There have been numerous works studying the implicit bias of optimization in supervised learning and their relationship to geometric margin maximization - see~\citet{Var23} for a survey. 

Steepest descent algorithms with respect to non-Euclidean geometries have been explored before, both in supervised (e.g.~\citet{NSS15,Lar+24}) and non-supervised (e.g.~\citet{Car+15}) machine learning problems.
The implicit bias of this family of optimization methods was first studied in generality in \citet{Gun+18} in the context of linear models for separable data, where margin maximization was established. Their proof is based on a result on Adaboost due to \citet{Tel13}. Our results generalize the analysis of steepest descent algorithms to any homogeneous neural network. Most related to our paper are the works of~\citet{Nac+19,LyLi20} and~\citet{JiTe20}. \citet{Nac+19} studied infinitesimal regularization and its connection to margin maximization in both homogeneous and non-homogeneous deep models, while also proving (directional) convergence of gradient descent to a first order point of an $\ell_2$-margin maximization problem for homogeneous models under strong technical assumptions. \citet{LyLi20,JiTe20}, whose theoretical setup we mainly follow, significantly weakened the assumptions, under which such a result holds, and \citet{LyLi20} further demonstrated the experimental benefits of margin maximization in terms of robustness. \citet{KuYa+23} generalized these results to a broader class of networks with varying degree of homogeneity, while \citet{Cai+24} analyzed non-homogeneous 2-layer networks trained with a large learning rate. \citet{VSS22} identified cases where the KKT points of the $\ell_2$ margin maximization problems are not (even locally) optimal.

The implicit bias of Adam~\citep{KiBa15} has been previously studied in~\citet{Wan+21,Wan+22} for homogeneous networks, where it is proven that it shares the same asymptotic properties as gradient descent ($\ell_2$ margin maximization).
Recently,~\citet{ZZC24} analyzed a version of the algorithm in linear models, without a numerical precision constant, which arguably better captures realistic training runs, and found bias towards $\ell_1$ margin maximization - the same bias as in the case of sign gradient descent. This makes us optimistic that insights from our analysis, which is connected to sign gradient descent (normalized steepest descent with respect to the $\ell_\infty$ norm), can shed light on the poorly understood implicit bias of Adam in deep neural networks. See also~\citet{ShLi24} for a recently established connection between AdamW \citep{LoHu19} and sign gradient descent. An additional motivation for studying steepest descent algorithms is in improving the robustness of deep neural networks: \citet{Tsi+24}, recently, provided experimental evidence and theoretical arguments that adversarially trained deep networks exhibit significant differences in their (robust) generalization error, depending on which steepest descent algorithm was used in training.

\section{Background}

\paragraph{Learning Setup.} We consider binary classification problems with deep, homogeneous, neural networks. Formally, let $\mathcal{S} = \left\{\mathbf{x}_i, y_i\right\}_{i=1}^m$ be a dataset of i.i.d.~points sampled from an unknown distribution $\mathcal{D}$ with $\mathbf{x}_i \in \mathbb{R}^d$ and $y_i \in \{\pm 1\}$ for all $i \in [m]$, and let $f(\cdot ; \thetab): \mathbb{R}^d \to \mathbb{R}$ denote a neural network parameterized by $\bm{\theta} \in \mathbb{R}^p$. The vector $\thetab$ contains all the parameters of the neural network, concatenated into a single vector. We study training under an exponential loss $\mathcal{L} (\thetab) = \sum_{i = 1}^m e^{- y_i f(\x_i ; \thetab)}$. We focus on this setting for simplicity in the main text, but our results generalize to more common losses, such as the logistic loss, as well as its multi-class generalization - the cross-entropy loss. See Section~\ref{ssec:gen_loss} for details and extensions of our main result.

\paragraph{Algorithms.} The family of steepest descent algorithms generalizes gradient descent to different optimization geometries, allowing the update rule to operate under an arbitrary norm (instead of the usual Euclidean one) \citep{BoVa14}. Formally, the update rule for \textit{steepest descent} with respect to a norm $\lVert \cdot \rVert$ is:
\begin{equation}\label{eq:steep_desc}
    \begin{split}
        \thetab_{t+1}  & = \thetab_{t}+\eta_t \Delta \thetab_t,\text{ where }\Delta \thetab_{t}\text{ satisfies }  \\
         \Delta \thetab_t & = \argmin_{\| \mathbf{u} \| \leq \| \nabla \cL(\thetab_t) \|_{\star}} \innerprod{\mathbf{u}}{\nabla \mathcal{L}(\thetab_t)}, 
    \end{split}
\end{equation}
where the \textit{dual} norm $\lVert \cdot \rVert_\star$ of $\lVert \cdot \rVert$ is defined as $\|\mathbf{z}\|_\star = \max_{\mathbf{v}} \{ \left|\innerprod{\mathbf{z}}{\mathbf{v}}\right|: \|\mathbf{v}\| = 1\}$ for any $\mathbf{z}$, and $\eta_t$ is a learning rate. Gradient descent can be derived from~\Eqref{eq:steep_desc} with $\lVert \cdot \rVert = \lVert \cdot \rVert_2$. See Appendix~\ref{sec:steep_adaptive} for details on how steepest descent algorithms are closely related to popular adaptive methods, such as Adam~\citep{KiBa15} and Shampoo~\citep{Gup+18}.

\paragraph{Assumptions \& Technical Points.} In order to formally allow for commonly used activation functions, such as the ReLU, we theoretically analyze loss landscapes that are not necessarily differentiable. That is, we consider Clarke's subdifferentials \citep{Cla75} in our analysis:
\begin{equation}
    \partial f := \mathrm{conv} \left\{ \lim_{k \to \infty} \nabla f(\x_k): \x_k \to \x, f \text{ differentiable at } \mathbf{x}_k \right\},
\end{equation}
where $\mathrm{conv} (\cdot)$ stands for the convex hull of a set. \newline
Furthermore, we analyze steepest descent in the limit of infinitesimal step size, i.e. \textit{steepest flow}:
\begin{equation}\label{eq:steep_flow}
    \frac{d \thetab}{d t} \in \left\{ \argmin_{\| \mathbf{u} \| \leq \|\mathbf{g}_t\|_\star} \innerprod{\mathbf{u}}{\mathbf{g}_t}: \mathbf{g}_t \in \partial \cL(\thetab_t) \right\}.
\end{equation}
This choice simplifies the analysis while still capturing the essence of the bias of the algorithms. Finally, we make the following assumptions:
\begin{enumerate}[label=(A\arabic*), ref=A\arabic*]
    \item Local Lipschitzness: For any $\x_i \in \mathbb{R}^d$, $f(\x_i; \cdot): \mathbb{R}^p \to \mathbb{R}$ is locally Lipschitz (and admits a chain rule - see Theorem~\ref{thm:chain_rule}). \label{ass:1}
    \item $L$-Homogeneity: We assume that $f$ is $L$-homogeneous in the parameters, i.e. $f(\cdot; c\thetab) = c^L f(\cdot; \thetab)$ for any $c>0$ and $\thetab$. \label{ass:2}
    \item Realizability: There is a $t_0 > 0$, such that $\mathcal{L}(\thetab_{t_0}) < 1$. \label{ass:3}
\end{enumerate}
Assumption (\ref{ass:1}) is a minimal assumption on the regularity of the network, while assumption (\ref{ass:2}) includes many commonly used architectures. For instance, ReLU networks with an arbitrary number of layers, but without bias terms and skip connections, satisfy (\ref{ass:1}),(\ref{ass:2}).
Assumption (\ref{ass:3}) ensures that the algorithm will succeed in classifying the training points and allows us to focus on what happens beyond that point of separation. Indeed, we are particularly interested in understanding the geometric properties of the model $f(\cdot ; \thetab_t)$ as $t \to \infty$ (at convergence) -- the \textit{implicit bias} of the learning algorithms.

\section{Theory}

We analyze the behavior of steepest descent algorithms in the late stage of training and study their geometric properties and how these relate to geometric, algorithm-specific, margins.

\subsection{Algorithm-Dependent Margin Increases}

\sloppy
In {\em linear} models, where $f(\x;\thetab) = \innerprod{\thetab}{\x}$, the concept of $\lVert \cdot \rVert_\star$-\textit{geometric margin}
\footnote{In this paper, we diverge from the established terminology when it comes to naming margins, by calling it $\lVert \cdot \rVert_\star$-geometric margin (instead of $\lVert \cdot \rVert$-geometric margin) when it is defined with respect to the $\lVert \cdot \rVert$ norm of the parameters. We believe this is proper, since the $\lVert \cdot \rVert_\star$-geometric margin in linear models maximizes the metric induced by the $\lVert \cdot \rVert_\star$ norm (and not its dual, $\lVert \cdot \rVert$).} 
, $\min_{i \in [m]}\frac{y_i \innerprod{\thetab}{\x_i}}{\|\thetab\|}$, plays a central and fundamental role in the analysis of the convergence of training \citep{Nov63} as well as in the generalization of the final model \citep{Vap98}.
Ideally, we would like to track a similar quantity when training general, homogeneous, non-linear networks $f(\cdot;\thetab)$ with steepest descent with respect to the $\lVert \cdot \rVert$ norm:
\begin{equation}\label{eq:geometric_margin}
    \gamma (\thetab) = \frac{\min_{i \in [m]} y_i f(\x_i; \thetab)}{\|\thetab\|^L} = \min_{i \in [m]} y_i f\left(\x_i; \frac{\thetab}{\|\thetab\|}\right),
\end{equation}
where recall that $L$ is the level of homogeneity of the model. As it turns out, it is easier to follow the evolution of the following, \textit{soft}, geometric margin:
\begin{equation}\label{eq:smooth_margin}
    \Tilde{\gamma}(\thetab) = - \frac{\log \mathcal{L}(\thetab)}{\|\thetab\|^L}.
\end{equation}
The characterisation of ``soft'' comes from the definition of ``softmax'' (a.k.a.~log-sum-exp), which is often used in machine learning. The same idea is used here to approximate the numerator of~\Eqref{eq:geometric_margin}. The soft margin $\Tilde{\gamma}(\thetab)$ is at most an additive $\log m$ away from $\gamma(\thetab)$ and converges to $\gamma(\thetab)$ as $t \to \infty$ - see Lemma~\ref{lem:smooth_hard_margin} and Corollary~\ref{cor:soft_hard_convergence}. 

We show next that, given the algorithm has reached a small value in the loss, the soft margin is non-decreasing. This theorem is similar to part of Lemma 5.1 in \citep{LyLi20}, which is the key lemma in their result. Our proof is admittedly simpler, avoiding a beautiful polar decomposition which was crucial in their analysis, yet, unfortunately, pertinent to the $\ell_2$ case only.
\begin{theorem}[Soft margin increases]\label{thm:margin_monotonicity_main}
For almost any $t > t_0$, it holds: 
$$\frac{d \log \Tilde{\gamma}}{d t} \geq L \left\lVert \frac{d \thetab}{d t} \right\rVert^2 \left( \frac{1}{L \cL(\thetab_t) \log\frac{1}{\cL(\thetab_t)}} - \frac{1}{\|\thetab_t\|\left\lVert \frac{d \thetab}{d t} \right\rVert}\right) \geq 0.$$
\end{theorem}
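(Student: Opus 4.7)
My plan is to take the logarithm of $\tilde{\gamma}$ and differentiate along the flow, splitting the result into two contributions: one from the evolution of $\cL(\thetab_t)$ and one from the evolution of $\|\thetab_t\|$. First I would write $\log \tilde{\gamma}(\thetab_t) = \log(-\log \cL(\thetab_t)) - L \log \|\thetab_t\|$, which is well-defined once $\cL(\thetab_t) < 1$ (guaranteed by (A3)). For the first term, I invoke the defining property of the steepest flow (\Eqref{eq:steep_flow}): whenever $\mathbf{g}_t \in \partial \cL(\thetab_t)$ is the subgradient associated with the flow direction, $\frac{d\thetab}{dt}$ attains $\min_{\|\mathbf{u}\|\le\|\mathbf{g}_t\|_\star}\langle \mathbf{u},\mathbf{g}_t\rangle = -\|\mathbf{g}_t\|_\star^2$, with $\|\frac{d\thetab}{dt}\| = \|\mathbf{g}_t\|_\star$. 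By the Clarke chain rule (Theorem~\ref{thm:chain_rule}) this yields $\frac{d\cL}{dt} = \langle \mathbf{g}_t, \frac{d\thetab}{dt}\rangle = -\|\frac{d\thetab}{dt}\|^2$ for almost every $t$.

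Next, for the contribution of the norm, I exploit the fact that $\|\cdot\|$ is $1$-Lipschitz with respect to itself (triangle inequality), hence along the absolutely continuous trajectory $\bigl|\tfrac{d}{dt}\|\thetab_t\|\bigr| \leq \|\tfrac{d\thetab}{dt}\|$ a.e., so $\frac{d \log\|\thetab_t\|}{dt} \leq \frac{\|d\thetab/dt\|}{\|\thetab_t\|}$. Combining the two contributions gives
\[
\frac{d \log \tilde{\gamma}}{dt} = \frac{\|d\thetab/dt\|^2}{\cL(\thetab_t)\log(1/\cL(\thetab_t))} - L\,\frac{d \log\|\thetab_t\|}{dt} \geq \frac{\|d\thetab/dt\|^2}{\cL(\thetab_t)\log(1/\cL(\thetab_t))} - \frac{L \|d\thetab/dt\|}{\|\thetab_t\|},
\]
and factoring $L\|d\thetab/dt\|^2$ produces exactly the first inequality in the statement. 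This is where the proof simplifies relative to \citet{LyLi20}: only the self-Lipschitzness of $\|\cdot\|$ is needed, rather than smoothness of $\|\cdot\|^2$, so there is no need for the polar decomposition that their $\ell_2$-tailored argument relies on.

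For the second (non-negativity) inequality, it is enough to show $\|\thetab_t\|\,\|\tfrac{d\thetab}{dt}\| \geq L\cL(\thetab_t)\log(1/\cL(\thetab_t))$. By the dual-norm (Hölder) inequality, $\|\thetab_t\|\,\|\tfrac{d\thetab}{dt}\| = \|\thetab_t\|\,\|\mathbf{g}_t\|_\star \geq |\langle \mathbf{g}_t, \thetab_t\rangle|$. Euler's identity for $L$-homogeneous locally Lipschitz maps, applied via the chain rule under (A1)--(A2), gives $\langle \mathbf{g}_t, \thetab_t\rangle = -L \sum_i q_i e^{-q_i}$, where $q_i := y_i f(\x_i;\thetab_t)$ and $\cL(\thetab_t) = \sum_i e^{-q_i}$. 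Setting $p_i := e^{-q_i}/\cL(\thetab_t)$ (a probability vector), we have $\sum_i p_i q_i \geq \min_i q_i \geq -\log\sum_i e^{-q_i} = \log(1/\cL(\thetab_t))$, where the second bound is the standard log-sum-exp estimate of the minimum. Multiplying by $\cL(\thetab_t)$ gives $\sum_i q_i e^{-q_i} \geq \cL(\thetab_t)\log(1/\cL(\thetab_t))$, hence $|\langle \mathbf{g}_t,\thetab_t\rangle| \geq L\cL(\thetab_t)\log(1/\cL(\thetab_t))$, concluding the proof.

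The main technical obstacle I anticipate is the Clarke-subdifferential bookkeeping: the derivatives of $\cL(\thetab_t)$ and $\|\thetab_t\|$ along the flow exist only almost everywhere, and one needs the subgradient $\mathbf{g}_t$ witnessing the flow direction to simultaneously serve in the chain rule for $\cL$ and in Euler's identity. These are standard but delicate points handled by Assumption~(A1) together with the chain rule of Theorem~\ref{thm:chain_rule}; once they are in place, the remainder is a direct combination of the flow's optimality, the norm's self-Lipschitzness, $L$-homogeneity, and the Jensen/log-sum-exp bound.
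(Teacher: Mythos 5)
Your proof is correct and takes essentially the same route as the paper's: the same decomposition $\log\Tilde{\gamma}=\log\log\frac{1}{\cL}-L\log\|\thetab_t\|$, the same steepest-flow duality $\frac{d\cL}{dt}=-\left\lVert \frac{d\thetab}{dt}\right\rVert^2$ with $\left\lVert \frac{d\thetab}{dt}\right\rVert=\|\mathbf{g}_t\|_\star$, and the same Euler's-identity-plus-log-sum-exp bound $\innerprod{\thetab_t}{-\mathbf{g}_t}\geq L\cL\log\frac{1}{\cL}$ for the non-negativity. The only cosmetic difference is that you control $\frac{d}{dt}\log\|\thetab_t\|$ via the norm's $1$-Lipschitzness (triangle inequality) rather than via a subgradient $\mathbf{n}_t\in\partial\|\thetab_t\|$ with $\|\mathbf{n}_t\|_\star\leq 1$, which yields the identical estimate.
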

\begin{proof}[Simplified version]
    We present a proof for a simplified version of this theorem here, covering differentiable networks $f$, while we defer the full proof to Appendix~\ref{ssec:late_phase_proofs}. For differentiable losses, steepest flow corresponds to:
    \begin{equation}
        \frac{d \thetab}{d t} \in  \argmin_{\| \mathbf{u} \| \leq \|\nabla \cL(\thetab_t)\|_\star} \innerprod{\mathbf{u}}{\nabla \cL (\thetab_t)}.
    \end{equation}
    By the definition of the dual norm and chain rule, we have for any $t > 0$:
    \begin{equation}\label{eq:norms_equal_loss_descent_main}
        \left\lVert \frac{d \thetab}{d t} \right\rVert = \|\nabla \cL (\thetab_t)\|_\star \;\;\; \mathrm{and} \;\;\; \frac{d \cL (\thetab_t)}{d t} = - \left\lVert \frac{d \thetab}{d t} \right\rVert^2.
    \end{equation}

    Let $\mathbf{n}_t \in \partial \|\thetab_t\|$ (recall that a norm $\lVert \cdot \rVert$ might not be differentiable everywhere). For any $t > t_0$, we have:
    \begin{equation}\label{eq:margin_lemma_eq1_main}
        \begin{split}
            \frac{d \log \Tilde{\gamma}}{d t} & = \frac{d}{d t} \log \log \frac{1}{\cL (\thetab_t)} - L \frac{d}{d t} \log \|\thetab_t\| \\
            & = \frac{d}{d t} \log \log \frac{1}{\cL (\thetab_t)} - L \innerprod{\frac{\mathbf{n}_t}{\|\thetab_t\|}}{\frac{d \thetab}{d t}} \;\;\;\;\;\;\;\;\;\;\;\;\;\;\; (\text{Chain rule})  \\
            & \geq \frac{d}{d t} \log \log \frac{1}{\cL (\thetab_t)} - L \frac{\left\lVert \frac{d \thetab}{d t} \right\rVert}{\|\thetab_t\|} \;\;\;\;\; (\text{def. of dual norm and $\|\mathbf{n}_t \|_\star \leq 1$, Lemma~\ref{lem:subgrad_norm}}) \\
            & = - \frac{d \cL (\thetab_t)}{d t} \frac{1}{ \cL(\thetab_t) \log\frac{1}{\cL(\thetab_t)}} - L \frac{\left\lVert \frac{d \thetab}{d t} \right\rVert}{\|\thetab_t\|} \;\;\;\;\;\;\;\;\;\;\;\;\;\; (\text{Chain rule}) \\
            & = \left\lVert \frac{d \thetab}{d t} \right\rVert^2 \left( \frac{1}{ \cL(\thetab_t) \log\frac{1}{\cL(\thetab_t)}} - \frac{L}{\|\thetab_t\|\left\lVert \frac{d \thetab}{d t} \right\rVert}\right). \;\;\;\;\; (\text{\Eqref{eq:norms_equal_loss_descent_main}})
        \end{split}
    \end{equation}
    The first term inside the parenthesis can be related to the second one via the following calculation:
    \begin{equation}
        \begin{split}
            \innerprod{\thetab_t}{- \nabla \cL (\thetab_t)} & = \innerprod{\thetab_t}{\sum_{i = 1}^m e^{- y_i f(\x_i ; \thetab_t)} y_i \nabla f(\x_i; \thetab_t)} \\
            & = \sum_{i = 1}^m e^{- y_i f(\x_i ; \thetab_t)} y_i \innerprod{\thetab_t}{\nabla f(\x_i; \thetab_t)} \\
            & = L \sum_{i = 1}^m  e^{- y_i f(\x_i ; \thetab_t)} y_i f(\x_i ; \thetab_t),
        \end{split}
    \end{equation}
    where the last equality follows from Euler's theorem for homogeneous functions.
    Now, observe that this last term can be lower bounded as:
    \begin{equation}\label{eq:homog_ineq_main}
        \begin{split}
            \innerprod{\thetab_t}{- \nabla \cL (\thetab_t)} \geq L \sum_{i = 1}^m e^{- y_i f(\x_i ; \thetab_t)} \min_{i \in [m]} y_i f(\x_i ; \thetab_t) \geq L \cL (\thetab_t) \log \frac{1}{\cL (\thetab_t)},
        \end{split}
    \end{equation}
    where we used the fact $e^{- \min_{i \in [m]} y_i f(\x_i ; \thetab_t)} \leq \sum_{i = 1}^m e^{-y_i f(\x_i ; \thetab_t)} = \cL (\thetab_t)$. We have made the first term of~\Eqref{eq:margin_lemma_eq1_main} appear. By plugging~\Eqref{eq:homog_ineq_main} into~\Eqref{eq:margin_lemma_eq1_main}, we get:
    \begin{equation}
        \begin{split}
            \frac{d \log \Tilde{\gamma}}{d t} & \geq \left\lVert \frac{d \thetab}{d t} \right\rVert^2 \left( \frac{L}{\innerprod{\thetab_t}{- \nabla \cL (\thetab_t)}} - \frac{L}{\|\thetab_t\|\left\lVert \frac{d \thetab}{d t} \right\rVert}\right) \\
            & \geq \left\lVert \frac{d \thetab}{d t} \right\rVert^2 \left( \frac{L}{\|\thetab_t\| \|\nabla \cL (\thetab_t)\|_\star} - \frac{L}{\|\thetab_t\|\left\lVert \frac{d \thetab}{d t} \right\rVert}\right). \;\;\;\;\; (\text{definition of dual norm})
        \end{split}
    \end{equation}
    Noticing that $\|\nabla \cL (\thetab_t)\|_\star = \left\lVert \frac{d \thetab}{d t} \right\rVert$ (from~\Eqref{eq:norms_equal_loss_descent_main}) concludes the proof.
\end{proof}
\begin{remark}
    Observe that it is the geometric margin induced by the dual norm of the algorithm that is non-decreasing, and not \textit{any} geometric margin. The proof crucially relies on the fact that $\|\nabla \cL (\thetab_t)\|_\star = \left\lVert \frac{d \thetab}{d t} \right\rVert$.
\end{remark}

\subsection{Convergence to KKT Points of the Max-Margin Problem}\label{ssec:main_result}

The previous theorem is a first indication that steepest flow implicitly maximizes the $\lVert \cdot \rVert_\star$-geometric margin in deep neural networks.
However, the monotonicity of the (soft) margin alone does not imply anything about its final value and its optimality.
In this section, we provide a concrete characterization of the {\em asymptotic} behavior of steepest flow: we show that any limit point of the iterates produced by steepest flow is along the direction of a Karush-Kuhn-Tucker (\textit{KKT}) point of the following margin maximization (MM) optimization problem:
\begin{align}\label{eq:min_norm_problem_main}\tag{MM}
    \begin{split}
        &\min_{\thetab \in \mathbb{R}^p} \frac{1}{2}\|\thetab\|^2 \\
        \text{s.t. }&  y_i f(\x_i;\thetab) \geq 1, \; \forall i \in [m].
    \end{split}
\end{align}
Let us recall the definition of a Karush-Kuhn-Tucker point \citep{Kar39,KuTu51}.
\begin{definition}[KKT point]\label{def:kkt}
    A feasible point $\thetab \in \mathbb{R}^p$ of~(\ref{eq:min_norm_problem_main}) is a Karush-Kuhn-Tucker (KKT) point, if there exist $\lambda_1, \ldots, \lambda_m \geq 0$ such that:
    \begin{enumerate}
        \item $\partial \frac{1}{2} \| \thetab \|^2 + \sum_{i = 1}^m \lambda_i \partial \left(1 - y_i f(\x_i;\thetab)\right) \ni 0$.
        \item $\lambda_i (1 - y_i f(\x_i;\thetab)) = 0, \; \forall i \in [m]$.
    \end{enumerate}
\end{definition}
Notice that the first \textit{stationarity} condition is defined using set addition, since we are dealing with non-differentiable functions. See \citet{Dut+13} for more details on optimization problems with non-smooth objectives/constraints. Under some regularity assumptions, the KKT conditions become necessary conditions for global optimality and for non-convex problems like~(\ref{eq:min_norm_problem_main}) they might be the best characterization of optimality we can hope for. See Lemma~\ref{lem:mfcq} in Appendix~\ref{app:proofs} for details.

We are now ready to state our main result.
\begin{theorem}\label{thm:main_result_main}
    Under assumptions (\ref{ass:1}), (\ref{ass:2}), (\ref{ass:3}), consider steepest flow with respect to a norm $\lVert \cdot \rVert$ (\Eqref{eq:steep_flow}) on the exponential loss $\mathcal{L} (\thetab) = \sum_{i = 1}^m e^{- y_i f(\x_i ; \thetab)}$. Then, any limit point $\Bar{\thetab}$ of $\left\{ \frac{\thetab_t}{\|\thetab_t\|}\right\}_{t \geq 0}$ is along the direction of a KKT point of the optimization problem:
    \begin{align}
        \begin{split}
            &\min_{\thetab \in \mathbb{R}^p} \frac{1}{2}\|\thetab\|^2 \\
            \text{s.t. }&  y_i f(\x_i;\thetab) \geq 1, \; \forall i \in [m].
        \end{split}
    \end{align}
\end{theorem}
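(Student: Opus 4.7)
The plan is to mirror the Lyu--Li argument for gradient flow, replacing its $\ell_2$-specific ingredients by their primal--dual analogues for a general norm $\|\cdot\|$. First, I would establish asymptotic control: from realizability (\ref{ass:3}) and Theorem~\ref{thm:margin_monotonicity_main}, the soft margin $\tilde\gamma(\thetab_t)$ is non-decreasing after $t_0$ and uniformly upper bounded via Lemma~\ref{lem:smooth_hard_margin}, hence converges to some $\gamma^*>0$. Combined with the loss decrease $\tfrac{d\cL}{dt}=-\|\tfrac{d\thetab}{dt}\|^2$ and the identity $-\log\cL(\thetab_t)=\tilde\gamma(\thetab_t)\|\thetab_t\|^L$ (immediate from $L$-homogeneity), this forces $\cL(\thetab_t)\to 0$ and $\|\thetab_t\|\to\infty$. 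Then by compactness of the unit sphere, any subsequence of $\{\thetab_t/\|\thetab_t\|\}$ admits a convergent sub-subsequence with limit $\bar\thetab$; continuity and $L$-homogeneity of $f$ give $y_i f(\x_i;\bar\thetab)\geq\gamma^*>0$ for each $i$, so the rescaled point $\hat\thetab := \bar\thetab/(\gamma^*)^{1/L}$ lies along the direction of $\bar\thetab$ and is feasible for \eqref{eq:min_norm_problem_main}. It therefore suffices to show $\hat\thetab$ is a KKT point.

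For the KKT verification, I would take the natural exponential multipliers $\lambda_i(t) \propto e^{-y_i f(\x_i;\thetab_t)}/\cL(\thetab_t)$, with a prefactor absorbing the homogeneity rescaling between $\thetab_t$ and $\hat\thetab$. Complementary slackness follows because any strictly feasible index $i$ (with $y_i f(\x_i;\hat\thetab)>1$) carries exponentially negligible weight relative to the tight indices, driving $\lambda_i(t)\to 0$ along the subsequence. For stationarity, the key structural identity is $-\nabla\cL(\thetab_t) = \sum_i e^{-y_i f(\x_i;\thetab_t)}\,y_i\nabla f(\x_i;\thetab_t)$; normalising turns the right-hand side into the desired weighted sum $\sum_i\lambda_i(t)\,y_i\nabla f(\x_i;\thetab_t)$. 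The remaining task is to argue that, in the limit, this sum lies in $\partial\tfrac12\|\hat\thetab\|^2 = \|\hat\thetab\|\,\partial\|\hat\thetab\|$. This is where the steepest-flow dynamics enter: $\tfrac{d\thetab}{dt}$ and $-\nabla\cL(\thetab_t)$ form a primal--dual pair with $\|\tfrac{d\thetab}{dt}\|=\|\nabla\cL(\thetab_t)\|_\star$, and this pairing is the mechanism by which the subdifferential of the norm gets implicated.

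The main obstacle, and precisely the reason this generalises the earlier smooth-norm version, is the set-valuedness of $\partial\tfrac12\|\thetab\|^2$ when $\|\cdot\|$ is non-smooth (e.g.\ $\ell_1$, $\ell_\infty$, spectral). One loses the clean identity ``$\nabla\tfrac12\|\thetab\|^2 \propto \thetab$'' and must instead exhibit an explicit subgradient selection $\mathbf n_t\in\partial\|\thetab_t\|$ along the trajectory, chosen compatibly with the primal--dual pairing induced by steepest flow. One then has to show, after appropriate normalisation, that $\|\thetab_t\|\,\mathbf n_t - \sum_i\lambda_i(t)\,y_i\nabla f(\x_i;\thetab_t)\to 0$ along the subsequence, and invoke upper semicontinuity of the Clarke subdifferential to pass $\mathbf n_t$ to a limit $\mathbf n\in\partial\|\hat\thetab\|$ and the $\nabla f(\x_i;\thetab_t)$ to limits in $\partial f(\x_i;\hat\thetab)$. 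Carrying out this simultaneous selection-and-limit in the presence of both a non-smooth norm and a non-smooth network is the step that a smooth-squared-norm hypothesis sidestepped entirely.
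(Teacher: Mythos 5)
Your skeleton matches the paper's: show $\cL\to 0$ and $\|\thetab_t\|\to\infty$, rescale the limit point to a feasible $\hat\thetab$, use exponential multipliers, and verify stationarity plus complementary slackness in the limit (the complementary-slackness estimate via the negligible weight of strictly feasible indices is essentially the paper's $ze^{-z}\le 1/e$ bound). But at the step you yourself flag as the main obstacle --- showing that $\sum_i\lambda_i(t)\,y_i\mathbf{h}_i^{(t)}$ ends up in $\partial\tfrac12\|\hat\thetab\|^2$ --- your proposal names the goal without supplying the mechanism, and the mechanism you gesture at is not the one that works. You propose to exhibit a subgradient selection $\mathbf n_t\in\partial\|\thetab_t\|$ \emph{along the trajectory}, compatible with the primal--dual pairing, and to prove $\|\thetab_t\|\mathbf n_t-\sum_i\lambda_i(t)y_i\mathbf{h}_i^{(t)}\to 0$. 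Nothing in the dynamics guarantees that $-\mathbf g_t^\star/\|\mathbf g_t^\star\|_\star$ is close to $\partial\|\thetab_t\|$ at any finite time, and no recipe is given for such a selection; proving that the distance to this set vanishes is exactly as hard as the missing estimate itself. The paper avoids trajectory-wise selections entirely: the weighted sum with the multipliers $\lambda_{\ell,i}=\frac{\|\thetab_{t_\ell}\|}{\|\mathbf g^\star_{t_\ell}\|_\star}q_{\min}^{1-2/L}e^{-y_if(\x_i;\thetab_{t_\ell})}$ equals \emph{exactly} $-\|\tilde\thetab_{t_\ell}\|\,\mathbf g^\star_{t_\ell}/\|\mathbf g^\star_{t_\ell}\|_\star$, and the norm subdifferential enters only in the limit, via the characterization $\partial\|\bar\thetab\|=\{\mathbf v:\innerprod{\mathbf v}{\bar\thetab}=\|\bar\thetab\|,\ \|\mathbf v\|_\star\le 1\}$ (Lemma~\ref{lem:subgrad_norm}) applied to a limit $\mathbf u^\star$ of the normalized directions $-\mathbf g^\star_{t_n}/\|\mathbf g^\star_{t_n}\|_\star$.

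What is genuinely missing is the quantitative argument that produces such a sequence of times: one integrates the margin-monotonicity bound of Theorem~\ref{thm:margin_monotonicity_main} to show that on any interval the alignment $\innerprod{\thetab_t/\|\thetab_t\|}{-\mathbf g^\star_t/\|\mathbf g^\star_t\|_\star}$ must come close to $1$ at some time (Lemma~\ref{lem:alignment_bound}, using that $\log\tilde\gamma$ has bounded total increase while $\int\|\dot\thetab_t\|/\|\thetab_t\|\,dt$ diverges), and one controls $\bigl\lVert\tfrac{d}{dt}\tfrac{\thetab_t}{\|\thetab_t\|}\bigr\rVert\le 2\|\dot\thetab_t\|/\|\thetab_t\|$ (Lemma~\ref{lem:rate_of_change_normlzd_iter}) so that these times can be chosen while the normalized iterates still converge to the \emph{same} limit point $\bar\thetab$ (Lemma~\ref{lemma:gradient_sequence}). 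Then $\innerprod{\mathbf u^\star}{\bar\thetab}=1$ and $\|\mathbf u^\star\|_\star=1$ give $\mathbf u^\star\in\partial\|\bar\thetab\|$, hence $\|\hat\thetab\|\mathbf u^\star\in\partial\tfrac12\|\hat\thetab\|^2$, and upper semicontinuity of $\partial f(\x_i;\cdot)$ handles the $\mathbf h_i$'s as you say. You also need boundedness of the multipliers (to pass them and the error terms to the limit), which rests on the Euler-theorem lower bound $\innerprod{\thetab_t}{-\mathbf g^\star_t}\ge L\,\cL\log\tfrac1\cL$, i.e.\ $\|\mathbf g^\star_t\|_\star\ge\tfrac{L}{\|\thetab_t\|}\cL\log\tfrac1\cL$ --- another ingredient your sketch leaves implicit. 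Without the alignment-extraction lemma and this change of viewpoint (limit of gradient directions rather than trajectory-wise norm subgradients), the proposal does not close the gap that distinguishes the general-norm case from the smooth-squared-norm case.
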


Theorem~\ref{thm:main_result_main} implies that if the limit of the normalized iterates induced by steepest flow exists, then it is proportional to a KKT point of a margin maximization problem. The proof technique shows that a scaled version of any limit point $\Bar{\thetab}$ is an \textit{approximate} KKT point (Def.~\ref{def:approx_kkt}) with an arbitrarily small error. 
The full proof can be found in Appendix~\ref{app:proofs}.

Note that since Theorem~\ref{thm:main_result_main} is true for \textit{any norm} $\left \lVert \cdot \right \rVert$, the main contribution of \citet{LyLi20}, which characterizes the implicit bias of gradient flow in homogeneous deep networks, can be recovered by our result when $\lVert \cdot \rVert = \lVert \cdot \rVert_2$. Notably, Theorem~\ref{thm:main_result_main} generalizes the result of~\citet{LyLi20} in the following cases of algorithm norms:
\begin{itemize}
    \item Any \textit{$\ell_p$} norm with $p \in [1, \infty]$. This corresponds, for example, to \textit{coordinate descent} (steepest descent with respect to the $\ell_1$ norm) and \textit{sign gradient descent} (normalized steepest descent with respect to the $\ell_\infty$ norm). 
    \item The \textit{modular} norm which was recently introduced by~\citet{Lar+24} to accommodate scalable neural network training. In particular, let a feed-forward neural network with $L+1$ layers be $f\left(\mathbf{x}; \thetab = \{\mathbf{W}_1, \ldots, \mathbf{W}_L, \mathbf{u}\}\right) = \innerprod{\mathbf{u}}{\sigma\left( \mathbf{W}_L \sigma\left( \mathbf{W}_{L-1} \ldots \sigma \left( \mathbf{W}_1 \mathbf{x}\right) \right) \right)}$, where $\sigma: \mathbb{R} \to \mathbb{R}$ is a homogeneous activation applied element-wise. Then, the modular norm induced by the architecture $f$ is given by $\|\thetab\|_{\mathcal{W}} = \max\left( \|\mathbf{W}_L\|_L, \ldots, \|\mathbf{W}_1\|_1, \|\mathbf{u} \|_u \right)$ where each of the norms $\left\lVert \cdot \right\rVert_L, \ldots, \left\lVert \cdot \right\rVert_1, \left \lVert \cdot \right\rVert_u$ can be different. For instance, if the norm of each layer is chosen to be the \textit{spectral} norm, then steepest descent with respect to the modular norm $\left \lVert \cdot \right\rVert_{\mathcal{W}}$ corresponds to Shampoo~\citep{Gup+18} without momentum (see Appendix~\ref{sec:adam} for details).
\end{itemize}

Complementary to Theorem~\ref{thm:main_result_main}, we note that we can obtain a geometric characterization of the evolution of the algorithm after $t_0$: the (generalized)\footnote{See Definition~\ref{def:gen_bregman}.} Bregman divergence induced by the squared norm of the algorithm between the subgradient of the objective function of~\ref{eq:min_norm_problem_main} and the subgradient of its constraints reduces at a rate that depends on the \textit{alignment} between the normalized iterates and the negative normalized, minimum norm, subgradient of the loss $\innerprod{\frac{\thetab_t}{\| \thetab_t\|}}{\frac{- \mathbf{g}_t^\star }{\|\mathbf{g}_t^\star\|_\star}}$. In the case of algorithms whose norm squared is a smooth function, this implies that the finite-time iterates are along the direction of an approximate KKT point of~\ref{eq:min_norm_problem_main}. In contrast, the proof of Theorem~\ref{thm:main_result_main} characterizes the limit points directly and, as a result, does not provide any finite-time characterization of the iterates in terms of proximity to stationarity. See Appendix~\ref{app:bregman_measure} for details.

\paragraph{Comparison to a previous version.} An earlier version of the current work that appeared in ICLR 2025~\citep{TVK25} proved a weaker version of Theorem~\ref{thm:main_result_main}. In particular, it established that, under the same conditions, the limit points of steepest descent are along the direction of a class of \textit{generalized} KKT points of the margin maximization problem that we introduced. The proof proceeded by characterizing the direction of the iterates $\thetab_t$ in terms of their proximity to stationarity. In fact, this weaker notion of stationarity allowed us to explicitly control proximity to (generalized) stationarity, yet it implied convergence to KKT points only in the case of norms whose square is a smooth function. This disqualified interesting and practically relevant cases, such as coordinate descent and sign gradient descent. At a technical level, the proof of Theorem~\ref{thm:main_result_main} does not rely on sequences of approximate KKT points (unlike our previous version and the proof of~\citet{LyLi20}), but instead shows directly that a scaled version of any limit point is stationary.

To the best of our knowledge, this is a first result about the implicit bias of an algorithm in the parameter space of deep homogeneous neural networks which is not about $\ell_2$-geometric margin maximization. 

\section{Experiments}\label{sec:expr}

In this section, we train one-hidden layer neural networks with various steepest descent algorithms (gradient descent-\texttt{GD}, coordinate descent-\texttt{CD}, sign descent-\texttt{SD}) to confirm the validity and measure the robustness of the theoretical claims,
and to discuss the connection between \texttt{Adam}, \texttt{Shampoo} and steepest descent algorithms.
Amongst other quantities, we measure the three relevant geometric margins during training, which, in the context of one-hidden layer neural networks with 1-homogeneous activations and without biases, become:
\begin{equation}\label{eq:margin_def_expr}
        \gamma_1 = \min_{i \in [m]} \frac{y_i f(\x_i;\thetab)}{\|\thetab\|_\infty^2}, \;\;\;
        \gamma_2 = \min_{i \in [m]} \frac{y_i f(\x_i;\thetab)}{\|\thetab\|_2^2}, \;\;\;
        \gamma_\infty = \min_{i \in [m]} \frac{y_i f(\x_i;\thetab)}{\|\thetab\|_1^2}.
\end{equation}

\subsection{Teacher -- Student Experiments}

We first perform experiments in a controlled environment, where the generative process consists of Gaussian data passed through a one-hidden layer (``teacher'') neural network, which is sparse. Specifically:
\begin{equation}\label{eq:gen_process}
    \x \sim \mathcal{N}(0, I_d), \; \; y = \mathrm{sgn}\left(f_{\mathrm{teacher}}\left(\x; \thetab^\star =\{ u_j^\star, \w_j^\star\}_{j = 1}^k\right)\right)= \mathrm{sgn}\left(\sum_{j = 1}^k u_j^\star \sigma\left(\innerprod{\mathbf{w}_j^\star}{\x}\right)\right),
\end{equation}
where $\sigma(u) = \max(u, 0)$ is the ReLU activation, $\mathrm{sgn}(\cdot)$ returns the sign of a number, and 
$\|\thetab^\star\|_0$ is assumed to be small. We train 
 (``student'') neural networks of the same architecture, but of larger width
and with randomly initialized weights: $
f_{\mathrm{student}}(\x; \thetab) = \sum_{j = 1}^{k^\prime} u_j \sigma\left(\innerprod{\mathbf{w}_j}{\x}\right),$
with width $k^\prime > k$ and $w_{jl} \sim \mathcal{U}\left[- \frac{\alpha}{d}, \frac{\alpha}{d}\right], j \in [k^\prime], l \in [d], u_j \in \mathcal{U}\left[- \frac{\alpha}{k^\prime}, \frac{\alpha}{k^\prime} \right]$ (for \texttt{CD} we use: $w_{jl} \sim \mathcal{U}\left[- \frac{\alpha}{k^\prime}, \frac{\alpha}{k^\prime}\right], j \in [k^\prime], l \in [d]$ in order to keep all the individual parameters to the same scale at initialization). The magnitude of initialization $\alpha$ can control how fast the implicit bias of the algorithm kicks in, with smaller values  entering this ``rich'' regime faster \citep{Woo+20}. We compare the performance of (full batch) \texttt{GD}, \texttt{CD} and \texttt{SD} in minimizing the empirical exponential loss on $m$ independent points sampled from the generative process of~\Eqref{eq:gen_process}.
Section~\ref{app:expr_details} contains full experimental details.
\begin{figure}
    \centering
    \includegraphics[scale=0.325]{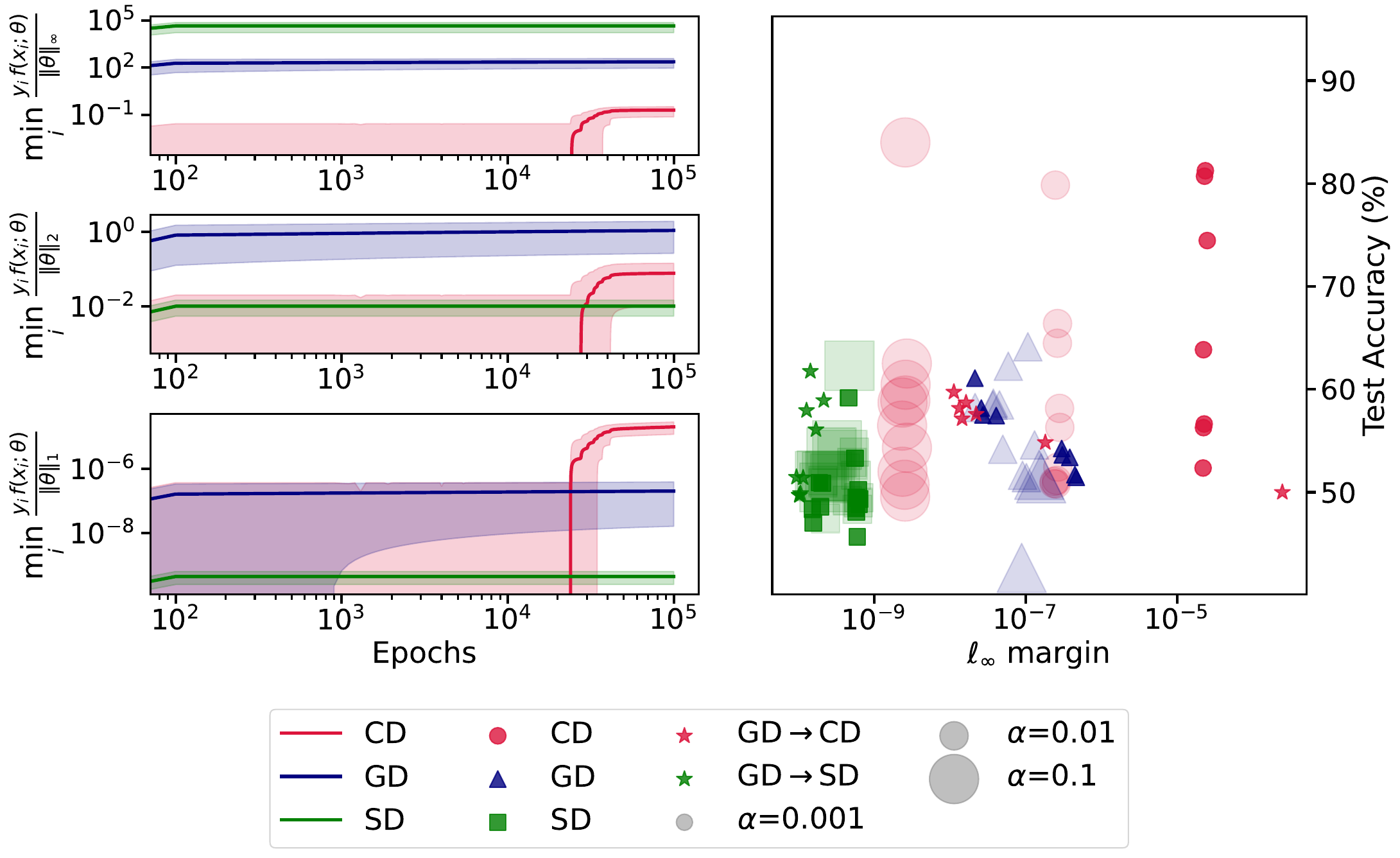}
    \caption{\textbf{Evaluation of steepest descent algorithms in a teacher-student setup.} \textit{Left:} Geometric margins ($\gamma_1, \gamma_2, \gamma_\infty$ in~\Eqref{eq:margin_def_expr}) over the course of training (average over 20 different seeds). \textit{Right:} Final test accuracy vs final $\ell_\infty$ margin ($\gamma_\infty$). Each point in the 2d space corresponds to a different run (only showing runs that did not diverge). Larger points correspond to larger initialization scales $\alpha$. The star points are produced by switching from \texttt{GD} to \texttt{CD} (red) or \texttt{SD} (green), right after the point of perfect train accuracy.}
    \label{fig:combined_student_teacher}
\end{figure}
According to Theorems~\ref{thm:margin_monotonicity_main} and~\ref{thm:main_result_main}, we expect \texttt{GD} to favor solutions with small $\ell_2$ norm. This is equivalent to a small sum of the product of the magnitude of incoming and outcoming weights across all neurons (Theorem 1 in \citet{NTS15}).
On the other hand, \texttt{CD} will seek to minimize the $\ell_1$ norm of the parameters, which translates to a narrow network with sparse 1st-layer weights. Finally, \texttt{SD}'s bias towards small $\|\thetab\|_\infty$ solutions does not appear to be useful for generalizing from few samples in this task. Therefore, we expect \texttt{CD} $>$ \texttt{GD} $>$ \texttt{SD} in terms of generalization.

Figure~\ref{fig:combined_student_teacher} displays our main results. We summarize our key findings below:
\begin{enumerate}
    \item[(i)] \textbf{Margins increase past} $t_0$. As expected from Lemma~\ref{thm:margin_monotonicity_main}, we observe that, right after the point of separation, each algorithm implicitly increases its corresponding geometric margin (Figure~\ref{fig:combined_student_teacher} left). Furthermore, we observe that the ordering of the algorithms is as expected for each margin (\texttt{SD} attains larger $\ell_1$ margin than \texttt{GD} and \texttt{CD}, etc.), despite the fact that Theorem~\ref{thm:main_result_main} only guarantees convergence to a KKT point of the margin maximization problem - note the log-log plot.
    \item[(ii)] \textbf{Smaller initialization produces larger geometric margin}. A smaller magnitude of initialization $\alpha$ causes a larger eventual value of the geometric margin (see Figure~\ref{fig:combined_student_teacher} right for \texttt{CD} and $\gamma_\infty$, where this effect is stronger, and Figure~\ref{fig:add_margin} in Appendix~\ref{app:expr_details} for $\gamma_1, \gamma_2$). 
    \item[(iii)] \textbf{Importance of early-stage dynamics for generalization}. 
    Figure~\ref{fig:combined_student_teacher}, right, shows the final test accuracy of the networks (20 different runs) vs the value of their final $\ell_\infty$ margin ($\gamma_\infty$). We observe that, while there exist more \texttt{CD} runs with good generalization (red circles), these do not always coincide with larger $\gamma_\infty$. Furthermore, intervening in the algorithms to encourage or discourage $\gamma_\infty$-maximization does not result in significant generalization changes: after running \texttt{GD} until the point of perfect train accuracy, we switch to either \texttt{SD} (green stars) or \texttt{CD} (red stars) to directly control the late stage geometric properties of the model. Switching to \texttt{CD} seems to bear marginal benefits in terms of generalization, even though all the switched runs reach smaller values of $\ell_\infty$ margin compared to the full, no-switching, \texttt{GD} runs. These benefits, however, pale in comparison to the full \texttt{CD} runs. Switching to \texttt{SD}, on the other hand, results in smaller $\gamma_\infty$ and similar or marginally worse test accuracy. See also Figure~\ref{fig:add_margin} in Appendix~\ref{app:expr_details} for test accuracy vs the other two geometric margins. We conclude that it is unlikely that large generalization benefits can solely and causally be linked to larger geometric margins in this setup, and it appears that the early stage dynamics play an important role for generalization.
\end{enumerate}

\subsection{Connection between Adam and Sign-GD}
\begin{wrapfigure}{r}{0.415\textwidth}
    \vspace{-7.5mm}
    \begin{center}
        \includegraphics[scale=0.195]{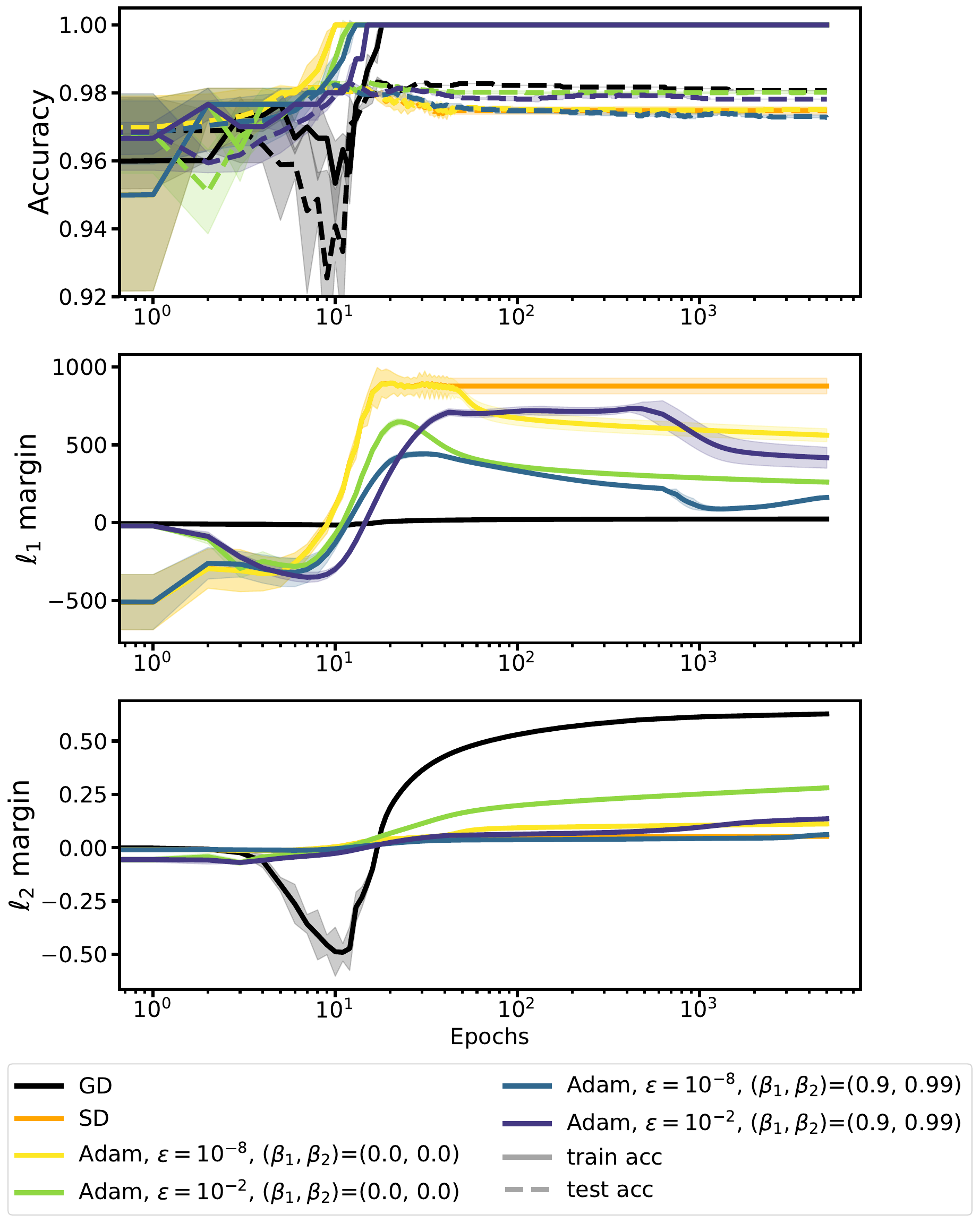}
    \end{center}
    \caption{\textbf{Accuracy, and $\ell_1, \ell_2$ margins  during training for \texttt{GD}, \texttt{SD} and \texttt{Adam} on MNIST (3 random seeds)}. Adam is parameterized by a numerical precision constant $\epsilon$ and two momentum parameters $(\beta_1, \beta_2)$ (defaulting to $10^{-8}$ and $(0.9, 0.99)$). We observe that \texttt{Adam} behaves similarly to \texttt{SD} for the period right after the point of perfect train accuracy.}
    \label{fig:mnist_combined}
\end{wrapfigure}
Adaptive optimization methods like \texttt{Adam} \citep{KiBa15} have been popular in deep learning applications, yet theoretically their value has been questioned \citep{Wil+17} and their properties remain poorly understood. \citet{Wan+21,Wan+22}  studied the implicit bias of \texttt{Adam} in homogeneous networks and concluded that \texttt{Adam} shares the same asymptotic properties as \texttt{GD}. More recently, this conclusion has been challenged~\citep{ZZC24}, in the sense that this asymptotic property crucially depends on a precision parameter of the algorithm and does not capture realistic runs of the algorithm (see Section~\ref{sec:adam} for details). In particular, it was shown that in linear models, \texttt{Adam}, without this precision parameter, implicitly maximizes the $\ell_1$-geometric margin \citep{ZZC24}, a property shared with \texttt{SD} and not \texttt{GD}. Indeed, \texttt{Adam} without momentum, and ignoring the precision parameter, is equivalent to \texttt{SD} (see Section~\ref{sec:adam}).
Setting the precision parameter to 0, on the other hand, is not useful in applications, as small initial values of the gradient result in divergence of the loss. A question arises: what, then, are the relevant geometric properties of \texttt{Adam} \textit{in practice}?

Figure~\ref{fig:mnist_combined} provides some experimental answers to this question, in light of Theorems~\ref{thm:margin_monotonicity_main} and~\ref{thm:main_result_main}. On a pair of digits extracted from MNIST we train two-layer neural networks  with \texttt{GD}, \texttt{SD} and \texttt{Adam}, with small initialization. See Section~\ref{app:expr_details} for experimental details. We observe that, as soon as the algorithms reach 100$\%$ train accuracy, the margins start to increase (as Theorem~\ref{thm:margin_monotonicity_main} suggests); \texttt{SD} reaches a larger value of $\gamma_1$, while \texttt{GD} reaches a larger value of $\gamma_2$. Interestingly, \texttt{Adam} with the default hyperparameters (precision $\epsilon=10^{-8}$ and non-zero momentum), initially, behaves similarly to \texttt{SD}, increasing $\gamma_1$, before it starts decreasing it, in order to slowly start increasing $\gamma_2$! Curiously, larger values of $\epsilon$ increase $\gamma_1$ even further and start the second phase slower, but more aggressively. Notice, however, that train and test accuracies have long converged, so it is unlikely that a typical run would have lasted long enough to see the second phase of $\ell_2$-margin maximization (in particular, the loss value needs to be smaller than $10^{-7}$ in order to observe such behavior). Similar observations hold for \texttt{Adam} without momentum (recall that without momentum and for $\epsilon \to 0$, we recover \texttt{SD}). Therefore, it appears that the $\ell_1$ bias of \texttt{SD} (Theorems~\ref{thm:margin_monotonicity_main},~\ref{thm:main_result_main} for $\lVert \cdot \rVert = \lVert \cdot \rVert_\infty$)  more faithfully describes a typical run of \texttt{Adam} in neural networks.

\subsection{Shampoo favors a spectral margin}

\begin{figure}
    \centering
    \includegraphics[scale=0.325]{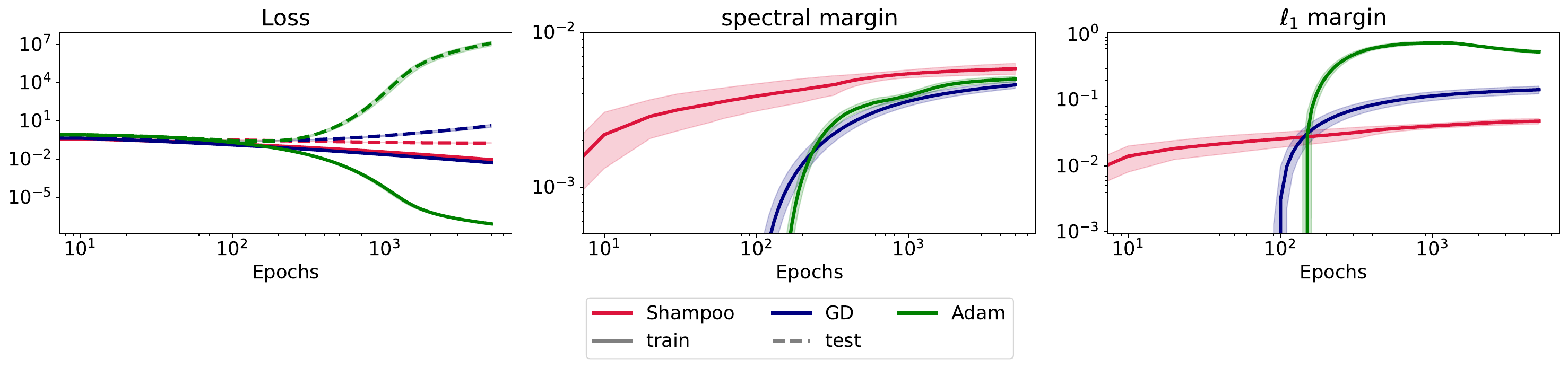}
    \caption{\textbf{Loss, spectral and $\ell_1$ margin during training for \texttt{GD}, \texttt{Adam} and \texttt{Shampoo} on MNIST with a two-layer neural network with a frozen second layer}. We observe that \texttt{Shampoo} ends up with a larger spectral margin.}
    \label{fig:shampoo}
\end{figure}

\citet{Gup+18} introduced \texttt{Shampoo} as an efficient method for deep learning optimization and the algorithm has attracted a lot of interest from practitioners and theorists alike. At a high level, \texttt{Shampoo} is a preconditioning method which respects the structure of the optimization space on which it operates. The algorithm and its variants have shown to speed up convergence in a variety of settings, including language modeling with autoregressive transformers~\citep{Jor+24}. However, little is understood about the generalization properties of the algorithm. Our results let us clarify some of its late-stage properties.

In particular, as observed by~\citet{BeNe24}, \texttt{Shampoo} operating on each layer with accumulation turned off corresponds to steepest descent with respect to the spectral norm of the weight matrix of that layer. Overall, it corresponds to steepest descent with respect to an architecture-dependent norm in parameter space. See Appendix~\ref{ssec:shampoo} for more details. For a two-layer neural network with a fully-connected layer, a frozen second layer and scalar output $f\left(\mathbf{x}; \mathbf{W}\right) = \innerprod{\mathbf{u}}{\sigma\left(\mathbf{W}\mathbf{x}\right)}$, \texttt{Shampoo} is steepest descent with respect to the spectral norm $\sigma_{\max}(\mathbf{W})$. As a result, Theorems~\ref{thm:margin_monotonicity_main} and \ref{thm:main_result_main} predict that \texttt{Shampoo} implicitly maximizes the \textit{spectral} margin:
\begin{equation}
    \gamma_\sigma = \min_{i \in [m]} \frac{y_i f(\x_i;\mathbf{W})}{ \sigma_{\max}(\mathbf{W})}.
\end{equation}
In Figure~\ref{fig:shampoo} we provide experimental evidence for the above. We train two-layer neural networks (with a frozen second layer) on a pair of digits extracted from MNIST, starting from small initialization. We compare an off-the-shelf implementation of \texttt{Shampoo}~\citep{Nov20} to \texttt{Adam} and \texttt{GD}, all without momentum. As we observe, the spectral margin $\gamma_\sigma$ increases once the algorithms reach perfect train accuracy, while \texttt{Shampoo} achieves the largest value among the three.

\section{Conclusion}

In our work, we considered the large family of steepest descent algorithms with respect to an arbitrary norm $\lVert \cdot \rVert$ and provided a unifying theoretical analysis of their late-stage implicit bias when training homogeneous neural networks. Our main result establishes that homogeneous neural networks implicitly maximize the geometric margin of the training points induced by the norm of the algorithm.

Our results can reinforce several recent efforts that attempt to understand deep learning through the lens of implicit bias. In particular, questions about generalization, robustness, and privacy can now be asked more broadly: (a) What are the important properties of language data distributions which render algorithms such as Adam ($\approx$ sign gradient descent) and Shampoo ($\approx$ spectral steepest descent) appealing for training deep networks on such tasks?
(b) An interesting connection has been made between implicit bias and the ability to retrieve data points from the parameters of a trained neural network~\citep{Hai+22}. In particular, when the parameters converge to a stationary point of a certain margin-maximization problem, they satisfy equations with respect to the training dataset, which allow to reconstruct portions of the training data.  Our work leads to the question:  Is it also possible to extract training samples from neural networks optimized with Adam or Shampoo?(c) Can we leverage our implicit bias results to design more sample-efficient algorithms for adversarially robust training, as argued by \citet{Tsi+24}? (d) A central topic in deep network generalization is their ability to avoid overfitting despite the presence of noise in the data. In prior work, this phenomenon has been attributed to the geometric properties of the networks, which are a consequence of the implicit bias of gradient descent~\citep{Fre+22,Sha23}. Is benign overfitting an inherent feature of first-order methods, or are current results specifically tailored to gradient descent?

\paragraph{Acknowledgments.}
NT and JK acknowledge support through the NSF under award 1922658. 
GV is supported by The Israel Science Foundation (grant No. 2574/25), by a research grant from Mortimer Zuckerman (the Zuckerman STEM Leadership Program), and by research grants from the Center for New Scientists at the Weizmann Institute of Science, and the Shimon and Golde Picker -- Weizmann Annual Grant.
We would like to thank anonymous referees that helped improving an earlier version of this work. We would like to thank Zhiyuan Li and Kaifeng Lyu for a useful discussion. Early parts of this work were done while NT was visiting the Toyota Technological Institute of Chicago (TTIC) during the winter of 2024, and NT would like to thank everyone at TTIC for their hospitality, which enabled this work. Part of this work was done while JK and NT were hosted by the Centre Sciences de Donnees at the
École Normale Supérieure (ENS) in 2023/24, whose hospitality we gratefully acknowledge. This work was supported in part through the NYU IT High Performance Computing resources, services, and staff expertise.

\bibliography{refs}
\bibliographystyle{apalike}

\newpage

\appendix


\section{Proof Details}\label{app:proofs}
In this section, we provide proofs for the results stated in the main text.

\subsection{Steepest Flow}

We first present a series of technical results, which are about steepest flow in the case of non-differentiable loss functions.
In what follows, we will denote with $\mathbf{g}_t^\star$ any loss subderivative with minimum $\left\lVert \cdot \right\rVert_\star$ norm, i.e. $\mathbf{g}_t^\star \in \argmin_{\mathbf{u} \in \partial \cL(\thetab_t)} \|\mathbf{u}\|_\star$.
In the case of subdifferentials, chain rule holds as an inclusion:
\begin{theorem}[Theorem 2.3.9 and 2.3.10 in \cite{Cla90}]\label{thm:chain_rule_inclusion}
    Let $z_1, \ldots, z_n : \mathbb{R}^d \to \mathbb{R}$, $f: \mathbb{R}^n \to \mathbb{R}$ be locally Lipschitz functions and define $\mathbf{z} = (z_1, \ldots, z_n)$. Let $(f \circ \mathbf{z}) (\mathbf{x}) = f(z_1(\mathbf{x}), \ldots, z_n(\x))$ be the composition of $\mathbf{z}$ with $f$. Then, it holds:
    \begin{equation}
        \partial (f \circ \mathbf{z}) (\mathbf{x}) \subseteq \mathrm{conv}\left\{ \sum_{i = 1}^n \alpha_i \mathbf{h}_i : \bm{\alpha} \in \partial f(z_1(\x), \ldots, z_n(\x)), \mathbf{h}_i \in \partial z_i(\x) \right\}.
    \end{equation}
\end{theorem}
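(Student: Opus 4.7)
The plan is to reduce the claimed set inclusion to an inequality between support functions, using the well-known duality $\partial h(\mathbf{x}) = \{\bm{\xi} : \langle \bm{\xi}, \mathbf{v}\rangle \leq h^\circ(\mathbf{x}; \mathbf{v}) \text{ for all } \mathbf{v}\}$, valid for any locally Lipschitz $h$, where $h^\circ(\mathbf{x}; \mathbf{v}) = \limsup_{\mathbf{y} \to \mathbf{x}, t \downarrow 0}\tfrac{h(\mathbf{y}+t\mathbf{v}) - h(\mathbf{y})}{t}$ is Clarke's generalized directional derivative. Let $K$ denote the right-hand side of the claimed inclusion. Because each of $\partial f(\mathbf{z}(\mathbf{x}))$ and $\partial z_i(\mathbf{x})$ is nonempty, compact, and convex, $K$ is the convex hull of the continuous image of a product of compacta in $\mathbb{R}^d$, hence compact and convex. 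It therefore suffices to verify
\[
(f \circ \mathbf{z})^\circ(\mathbf{x}; \mathbf{v}) \;\leq\; \sigma_K(\mathbf{v}) \;=\; \max\Big\{\sum_i \alpha_i \langle \mathbf{h}_i, \mathbf{v}\rangle : \bm{\alpha} \in \partial f(\mathbf{z}(\mathbf{x})),\; \mathbf{h}_i \in \partial z_i(\mathbf{x})\Big\}
\]
for every direction $\mathbf{v} \in \mathbb{R}^d$.

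The core estimate uses Lebourg's nonsmooth mean-value theorem twice. Applied to $f$ on the segment joining $\mathbf{z}(\mathbf{y})$ and $\mathbf{z}(\mathbf{y}+t\mathbf{v})$, for each $(\mathbf{y}, t)$ appearing in the limsup there exist $\mathbf{p}_{\mathbf{y}, t}$ on that segment and $\bm{\alpha}_{\mathbf{y}, t} \in \partial f(\mathbf{p}_{\mathbf{y}, t})$ satisfying
\[
\frac{f(\mathbf{z}(\mathbf{y}+t\mathbf{v})) - f(\mathbf{z}(\mathbf{y}))}{t} \;=\; \sum_{i=1}^n (\alpha_{\mathbf{y}, t})_i \cdot \frac{z_i(\mathbf{y}+t\mathbf{v}) - z_i(\mathbf{y})}{t}.
\]
Applied again to each $z_i$ along the segment from $\mathbf{y}$ to $\mathbf{y}+t\mathbf{v}$, there exist $\mathbf{q}_i^{(\mathbf{y},t)}$ on the segment and $\mathbf{h}_i^{(\mathbf{y},t)} \in \partial z_i(\mathbf{q}_i^{(\mathbf{y},t)})$ with $(z_i(\mathbf{y}+t\mathbf{v}) - z_i(\mathbf{y}))/t = \langle \mathbf{h}_i^{(\mathbf{y},t)}, \mathbf{v}\rangle$. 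Substituting yields an exact representation of the difference quotient of the composition as $\sum_i (\alpha_{\mathbf{y}, t})_i \langle \mathbf{h}_i^{(\mathbf{y},t)}, \mathbf{v}\rangle$.

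To close the argument, extract a subsequence realizing the limsup. Continuity of $\mathbf{z}$ gives $\mathbf{p}_{\mathbf{y},t} \to \mathbf{z}(\mathbf{x})$ and $\mathbf{q}_i^{(\mathbf{y},t)} \to \mathbf{x}$. The local Lipschitz constants of $f$ and of each $z_i$ uniformly bound $\bm{\alpha}_{\mathbf{y}, t}$ and $\mathbf{h}_i^{(\mathbf{y},t)}$ respectively, so by Bolzano-Weierstrass a further subsequence converges jointly to limits $\bm{\alpha}^\star, \mathbf{h}_1^\star, \ldots, \mathbf{h}_n^\star$. Upper semicontinuity of the Clarke subdifferential (a direct consequence of the limit-of-gradients definition given in the paper together with Rademacher's theorem) forces $\bm{\alpha}^\star \in \partial f(\mathbf{z}(\mathbf{x}))$ and $\mathbf{h}_i^\star \in \partial z_i(\mathbf{x})$. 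Hence $(f \circ \mathbf{z})^\circ(\mathbf{x}; \mathbf{v}) \leq \sum_i \alpha_i^\star \langle \mathbf{h}_i^\star, \mathbf{v}\rangle \leq \sigma_K(\mathbf{v})$, which by the support-function criterion yields $\partial(f \circ \mathbf{z})(\mathbf{x}) \subseteq K$.

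The main obstacle is the joint limit extraction: the Lebourg mean-value points for $f$ and for each of the $n$ component maps $z_i$ are not produced by a single compactness argument, so one needs a careful diagonal selection to obtain simultaneous convergence of all $n+1$ relevant vector sequences, while making sure that each limit lands in the Clarke subdifferential evaluated at the correct basepoint. Once this closed-graph bookkeeping is in place, the verification that $K$ itself is closed and convex (required for the support-function inequality to upgrade to the set inclusion) is immediate from finite-dimensional compactness.
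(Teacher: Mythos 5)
The paper does not prove this statement; it imports it verbatim from Clarke's book (Theorems 2.3.9--2.3.10), so there is no in-paper argument to compare against. Your proof is correct and is essentially the standard textbook argument: reduce the set inclusion to the support-function inequality $(f\circ\mathbf{z})^\circ(\mathbf{x};\mathbf{v})\le\sigma_K(\mathbf{v})$ (legitimate because $K$, being the convex hull of a continuous image of a product of compacta, is compact and convex in finite dimensions, which is also why Clarke's closed convex hull can be replaced by a plain convex hull here), establish the difference-quotient identity via two applications of Lebourg's mean value theorem, and pass to the limit using local Lipschitz bounds and the closed graph of the Clarke subdifferential. The one worry you flag --- needing a ``careful diagonal selection'' --- is not actually an obstacle: all $n+1$ sequences $(\bm{\alpha}_{\mathbf{y},t},\mathbf{h}_1^{(\mathbf{y},t)},\ldots,\mathbf{h}_n^{(\mathbf{y},t)})$ live in a single bounded subset of the finite-dimensional product space $\mathbb{R}^n\times(\mathbb{R}^d)^n$, so one Bolzano--Weierstrass extraction along the sequence realizing the $\limsup$ yields joint convergence, and continuity of $\mathbf{z}$ guarantees each mean-value basepoint converges to the correct evaluation point for the closed-graph step.
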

To further analyze steepest flows and to guarantee loss monotonicity, we need a stronger chain rule result.
This can be achieved for a large class of locally Lipschitz functions, as per the following theorem which is due to \cite{Dav+20}. 
\begin{theorem}{(Theorem 5.8 in \cite{Dav+20})}\label{thm:chain_rule}
    If $F: \mathbb{R}^k \to \mathbb{R}$ is locally Lipschitz and Whitney $C^1$-stratifiable, then it admits a chain rule: for all arcs (functions which are absolutely continuous on every compact subinterval) $\mathbf{u}: [0, \infty) \to \mathbb{R}^k$, almost all $t \geq 0$, and all $\mathbf{g} \in \partial F(\mathbf{u}(t))$, it holds:
    \begin{equation}
        \frac{d F(\mathbf{u}(t))}{dt} = \innerprod{\mathbf{g}}{\frac{d \mathbf{u}(t)}{dt}}.
    \end{equation}
\end{theorem}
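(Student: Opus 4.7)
The plan is to reduce the claimed chain rule to the classical smooth chain rule on each individual stratum, then use Whitney's condition (a) to control the extra freedom present in the Clarke subdifferential. By the hypothesis, there is a locally finite decomposition $\mathbb{R}^k = \bigsqcup_i M_i$ into $C^1$ submanifolds such that each restriction $F|_{M_i}$ is $C^1$, and whenever $M_i \subseteq \overline{M_j}$ and $M_j \ni \mathbf{x}_n \to \mathbf{x} \in M_i$ with $T_{\mathbf{x}_n}M_j \to T$, one has $T_{\mathbf{x}} M_i \subseteq T$. For the arc $\mathbf{u}$, partition time as $T_i = \{t \geq 0 : \mathbf{u}(t) \in M_i\}$; these are measurable sets that cover $[0,\infty)$.

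The first substantive step is tangency: for almost every $t \in T_i$, the velocity $\dot{\mathbf{u}}(t)$ lies in $T_{\mathbf{u}(t)} M_i$. I would take a density point $t_0$ of $T_i$ at which $\mathbf{u}$ is differentiable (a full-measure set, by absolute continuity of $\mathbf{u}$), use a local $C^1$ chart that realises $M_i$ as a graph over $T_{\mathbf{u}(t_0)} M_i$, and observe that a positive-density family of nearby times $t_0+h$ lands on $M_i$, which forces the difference quotients $\bigl(\mathbf{u}(t_0+h)-\mathbf{u}(t_0)\bigr)/h$ to converge to a vector in $T_{\mathbf{u}(t_0)} M_i$.

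The second, and hardest, step is the projection formula: for every $\mathbf{x} \in M_i$ and every $\mathbf{g} \in \partial F(\mathbf{x})$, the orthogonal projection of $\mathbf{g}$ onto $T_{\mathbf{x}} M_i$ equals $\nabla(F|_{M_i})(\mathbf{x})$. By the definition of $\partial F$, it suffices to check this when $\mathbf{g} = \lim_n \nabla F(\mathbf{x}_n)$ at differentiability points $\mathbf{x}_n \to \mathbf{x}$; convexity then extends the identity. Passing to a subsequence, we may assume all $\mathbf{x}_n$ lie in a single stratum $M_j$ with $M_i \subseteq \overline{M_j}$. Decompose $\nabla F(\mathbf{x}_n) = \nabla(F|_{M_j})(\mathbf{x}_n) + \mathbf{v}_n$ where $\mathbf{v}_n \perp T_{\mathbf{x}_n} M_j$. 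Whitney (a) gives $T_{\mathbf{x}} M_i \subseteq \lim T_{\mathbf{x}_n} M_j$, so $\mathbf{v}_n$ is asymptotically orthogonal to $T_{\mathbf{x}} M_i$ and its projection vanishes in the limit. It remains to identify $\lim_n \mathrm{proj}_{T_{\mathbf{x}} M_i}\bigl(\nabla(F|_{M_j})(\mathbf{x}_n)\bigr)$ with $\nabla(F|_{M_i})(\mathbf{x})$: for any $\mathbf{w}\in T_{\mathbf{x}} M_i$, I approximate $\mathbf{w}$ by $\mathbf{w}_n \in T_{\mathbf{x}_n} M_j$ using Whitney (a), test both sides against $\mathbf{w}_n$, and use continuity of $F$ across $M_i \cup M_j$ together with the smooth chain rules on $M_i$ and $M_j$ to conclude.

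Combining the two steps, for a.e.\ $t \in T_i$ and any $\mathbf{g}\in \partial F(\mathbf{u}(t))$, tangency of $\dot{\mathbf{u}}(t)$ allows us to replace $\mathbf{g}$ by $\mathrm{proj}_{T_{\mathbf{u}(t)}M_i}(\mathbf{g})=\nabla(F|_{M_i})(\mathbf{u}(t))$, and the ordinary smooth chain rule on the $C^1$ manifold $M_i$ yields $\langle \mathbf{g},\dot{\mathbf{u}}(t)\rangle = \tfrac{d}{dt}F(\mathbf{u}(t))$. The main obstacle is the projection formula, which is the only place Whitney's condition (a) is genuinely used and the only reason the identity holds simultaneously for \emph{every} subgradient rather than just one; the tangency step is a standard density-point argument, and the final combination is a single line.
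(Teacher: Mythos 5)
This theorem is imported verbatim from \citet{Dav+20}; the paper offers no proof of its own, so your attempt has to be measured against the standard argument there and in Bolte--Daniilidis--Lewis--Shiota. Your architecture --- tangency of $\dot{\mathbf{u}}(t)$ to the stratum, a projection formula for $\partial F$, then the smooth chain rule stratum by stratum --- is exactly the right skeleton, and the tangency step (density points plus a local graph chart) is fine. The genuine gap is in the projection formula. You impose Whitney's condition (a) on a stratification of the \emph{domain} and try to recover $\mathrm{proj}_{T_{\mathbf{x}}M_i}(\mathbf{g}) = \nabla(F|_{M_i})(\mathbf{x})$ from ``continuity of $F$ across $M_i\cup M_j$ together with the smooth chain rules on $M_i$ and $M_j$''. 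That implication is false. Take $k=2$, strata $M_1=\{y=0\}$, $M_2=\{y>0\}$, $M_3=\{y<0\}$ (condition (a) holds trivially since $M_2,M_3$ are open), and $F(x,y)=y\,\phi(x/y)$ for $y>0$, $F=0$ for $y\le 0$, with $\phi$ smooth, compactly supported, $\phi(1)=0$, $\phi'(1)=1$. Then $F$ is Lipschitz and each $F|_{M_i}$ is $C^1$, yet along $(1/n,1/n)\to(0,0)$ one has $\nabla F(1/n,1/n)=(1,-1)$, so $(1,-1)\in\partial F(0,0)$ while $\nabla(F|_{M_1})(0)=0$: the tangential projection of a subgradient need not equal the intrinsic gradient. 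Domain-level condition (a) plus continuity therefore cannot deliver the identification $\lim_n\innerprod{\nabla(F|_{M_j})(\mathbf{x}_n)}{\mathbf{w}_n}=\innerprod{\nabla(F|_{M_i})(\mathbf{x})}{\mathbf{w}}$ that your argument needs --- precisely the step you flag as the hardest.

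The missing ingredient is that ``Whitney $C^1$-stratifiable'' in \citet{Dav+20} means the \emph{graph} of $F$ admits a Whitney (a)-regular stratification, not merely the domain. The tangent plane to the graph stratum over $M_j$ at $(\mathbf{x}_n,F(\mathbf{x}_n))$ is $\{(\mathbf{w},\innerprod{\nabla F(\mathbf{x}_n)}{\mathbf{w}}):\mathbf{w}\in T_{\mathbf{x}_n}M_j\}$, so condition (a) applied upstairs says exactly that every pair $(\mathbf{w},\innerprod{\nabla(F|_{M_i})(\mathbf{x})}{\mathbf{w}})$ with $\mathbf{w}\in T_{\mathbf{x}}M_i$ is a limit of such pairs; since $\nabla F(\mathbf{x}_n)\to\mathbf{g}$ with $\mathbf{w}_n\to\mathbf{w}$, this forces $\innerprod{\mathbf{g}}{\mathbf{w}}=\innerprod{\nabla(F|_{M_i})(\mathbf{x})}{\mathbf{w}}$, which is the projection formula. (In the counterexample above the graph strata indeed violate condition (a) at the origin, which is why it does not contradict the theorem.) With condition (a) moved to the graph your proof closes along the lines of the published one; as written, the key step does not go through.
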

Whitney $C^1$-stratifiability includes a large family of functions, including functions defined in an o-minimal structure which has been a previous assumption in the literature \citep{JiTe20}. It excludes some pathological functions - see, for instance, Appendix J in \citep{LyLi20}. This version of chain rule allows us to derive the following central properties of steepest flows.
\begin{proposition}\label{prop:steep_flow_duality}
    Let $\cL: \mathbb{R}^p \to \mathbb{R}$ and assume that $\cL$ admits a chain rule. Then, for the steepest flow iterates of~\Eqref{eq:steep_desc}, it holds for almost any $t \geq 0$:
    \begin{equation}\label{eq:loss_descent}
        \frac{d \cL}{d t} = - \left\lVert \frac{d \thetab}{d t} \right\rVert^2 \leq 0,
    \end{equation}
    and
    \begin{equation}\label{eq:norms_equal}
        \innerprod{\frac{d \thetab}{d t}}{- \mathbf{g}_t^\star} = \left\lVert \frac{d \thetab}{d t} \right\rVert^2 = \|\mathbf{g}_t^\star\|_\star^2,
    \end{equation}
    where $\mathbf{g}_t^\star \in \argmin_{\mathbf{u} \in \partial \cL(\thetab_t)} \|\mathbf{u}\|_\star$.
\end{proposition}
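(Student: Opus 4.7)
The plan is to derive both identities by combining the chain rule of Theorem~\ref{thm:chain_rule} --- which guarantees $\frac{d\cL(\thetab_t)}{dt} = \innerprod{\mathbf{g}}{\frac{d\thetab}{dt}}$ for every $\mathbf{g} \in \partial \cL(\thetab_t)$ at almost every $t$ --- with the linear-programming duality that defines the steepest flow update in~\Eqref{eq:steep_flow}.

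First, I would instantiate the chain rule with the particular subgradient $\mathbf{g}_t \in \partial \cL(\thetab_t)$ that the flow selects, so that $\frac{d\thetab}{dt}$ is a minimizer of $\innerprod{\mathbf{u}}{\mathbf{g}_t}$ over $\|\mathbf{u}\| \leq \|\mathbf{g}_t\|_\star$. By the definition of the dual norm, the optimal value of this constrained linear problem is $-\|\mathbf{g}_t\|_\star^2$, attained at a point of primal norm exactly $\|\mathbf{g}_t\|_\star$ (on the boundary of the feasible set). Plugging back into the chain rule simultaneously yields $\frac{d\cL}{dt} = -\|\mathbf{g}_t\|_\star^2$ and $\lVert \frac{d\thetab}{dt}\rVert = \|\mathbf{g}_t\|_\star$, which together establish~\Eqref{eq:loss_descent}.

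Next, to promote $\mathbf{g}_t$ to the minimum dual-norm subgradient $\mathbf{g}_t^\star$, I would apply the chain rule a second time using $\mathbf{g}_t^\star$ in place of $\mathbf{g}_t$ to obtain $\frac{d\cL}{dt} = \innerprod{\mathbf{g}_t^\star}{\frac{d\thetab}{dt}}$. H\"older's inequality lower bounds the right-hand side by $-\|\mathbf{g}_t^\star\|_\star \lVert \frac{d\thetab}{dt}\rVert$, so combining with the previous identity $\frac{d\cL}{dt} = -\lVert \frac{d\thetab}{dt}\rVert^2$ gives $\lVert \frac{d\thetab}{dt}\rVert \leq \|\mathbf{g}_t^\star\|_\star$. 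The reverse inequality $\|\mathbf{g}_t^\star\|_\star \leq \|\mathbf{g}_t\|_\star = \lVert \frac{d\thetab}{dt}\rVert$ is immediate from the definition of $\mathbf{g}_t^\star$ as a minimum dual-norm element of $\partial\cL(\thetab_t)$. Hence $\lVert \frac{d\thetab}{dt}\rVert = \|\mathbf{g}_t^\star\|_\star$, and the equality case of H\"older then forces $\innerprod{\frac{d\thetab}{dt}}{-\mathbf{g}_t^\star} = \lVert \frac{d\thetab}{dt}\rVert \cdot \|\mathbf{g}_t^\star\|_\star = \lVert \frac{d\thetab}{dt}\rVert^2$, yielding~\Eqref{eq:norms_equal}.

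The main obstacle I anticipate is measure-theoretic rather than conceptual: both Theorem~\ref{thm:chain_rule} and the very existence of $\frac{d\thetab}{dt}$ only hold at almost every $t$, so I would first need to verify that the steepest flow $\thetab_t$ is an absolutely continuous arc --- a prerequisite for invoking Theorem~\ref{thm:chain_rule} --- and then restrict attention to the full-measure set of times at which $\frac{d\thetab}{dt}$ exists and the chain rule identity holds for every element of $\partial \cL(\thetab_t)$ simultaneously. Once this bookkeeping is in place, the remainder of the argument reduces to the duality definition of $\lVert \cdot \rVert_\star$ and a single application of H\"older's inequality.
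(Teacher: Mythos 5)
Your proof is correct and takes essentially the same route as the paper's: you apply the chain rule once with the subgradient selected by the flow (plus the dual-norm characterization of the constrained linear minimization) to obtain \Eqref{eq:loss_descent}, and once with $\mathbf{g}_t^\star$ (plus the dual-norm/H\"older inequality) to force $\left\lVert \frac{d \thetab}{d t} \right\rVert = \|\mathbf{g}_t^\star\|_\star$ and hence \Eqref{eq:norms_equal}. Your explicit statement of the reverse inequality via minimality of $\mathbf{g}_t^\star$ and of the absolute-continuity prerequisite for Theorem~\ref{thm:chain_rule} merely makes explicit what the paper leaves implicit.
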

\begin{proof}
    From Theorem~\ref{thm:chain_rule}, for almost any $t \geq 0$, it holds $\forall \; \mathbf{g}_t \in \partial \cL(\thetab_t)$:
    \begin{equation}\label{eq:chain_rule_general}
        \frac{d \cL}{d t} = \innerprod{\mathbf{g}_t}{\frac{d \thetab}{d t}}.
    \end{equation}
    Applying this for the element of $\partial \cL(\thetab_t), \mathbf{g}_t^\prime$, that corresponds to $\frac{d \thetab}{dt}$ from the definition of steepest flow~\Eqref{eq:steep_flow}, we get:
    \begin{equation}\label{eq:one_min_norm_path}
        \frac{d \cL}{d t} = \innerprod{\mathbf{g}_t^\prime}{\frac{d \thetab}{d t}} = - \left\lVert \frac{d \thetab}{d t} \right\rVert^2,
    \end{equation}
    where the last equality follows from the definition of the dual norm. But, \Eqref{eq:chain_rule_general} for $\mathbf{g}_t^\star \in \argmin_{\mathbf{u} \in \partial \cL(\thetab_t)} \|\mathbf{u}\|_\star$, yields:
    \begin{equation}\label{eq:two_min_norm_path}
        \left \lvert \frac{d \cL}{d t} \right \rvert = \left \lvert \innerprod{\mathbf{g}_t^\star}{\frac{d \thetab}{d t}} \right \rvert \leq \|\mathbf{g}_t^\star\|_\star \left\lVert \frac{d \thetab}{d t} \right\rVert.
    \end{equation}
    Thus, combining \Eqref{eq:one_min_norm_path},~\Eqref{eq:two_min_norm_path}, we obtain:
    \begin{equation}
        \left\lVert \frac{d \thetab}{d t} \right\rVert \leq \| \mathbf{g}_t^\star\|_\star,
    \end{equation}
    which implies that the update rule~\Eqref{eq:steep_flow} is equivalent to:
    \begin{equation}
        \frac{d \thetab}{d t} \in \left\{ \argmin_{\| \mathbf{u} \| \leq \|\mathbf{g}_t^\star\|_\star} \innerprod{\mathbf{u}}{\mathbf{g}_t^\star}: \mathbf{g}_t^\star \in \argmin_{\mathbf{u} \in \partial \cL(\thetab_t)} \|\mathbf{u}\|_\star \right\}.
    \end{equation}
    Therefore, from the definition of the dual norm, we have:
    \begin{equation}
        \innerprod{\frac{d \thetab}{d t}}{- \mathbf{g}_t^\star} = \left\lVert \frac{d \thetab}{d t} \right\rVert^2 = \|\mathbf{g}_t^\star\|_\star^2.
    \end{equation}
\end{proof}
Hence, under the mild assumptions of Theorem~\ref{thm:chain_rule}, the loss is non-increasing during training. 

\subsection{Late Phase Implicit Bias}\label{ssec:late_phase_proofs}

A useful standard characterization of the subdifferential of a norm is the following:
\begin{lemma}\label{lem:subgrad_norm}
    $$ \partial \|\mathbf{x}\| = \{\mathbf{v}: \innerprod{\mathbf{v}}{\mathbf{x}} = \|\mathbf{x}\|, \|\mathbf{v}\|_\star \leq 1\} $$
\end{lemma}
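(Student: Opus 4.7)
My plan is to exploit the fact that a norm is a convex function, so Clarke's subdifferential coincides with the convex subdifferential. Therefore $\mathbf{v} \in \partial\|\mathbf{x}\|$ is equivalent to the subgradient inequality $\|\mathbf{y}\| \geq \|\mathbf{x}\| + \innerprod{\mathbf{v}}{\mathbf{y} - \mathbf{x}}$ holding for every $\mathbf{y}$. I will then prove the two set inclusions separately.

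For the inclusion ($\supseteq$), I would take any $\mathbf{v}$ with $\innerprod{\mathbf{v}}{\mathbf{x}} = \|\mathbf{x}\|$ and $\|\mathbf{v}\|_\star \leq 1$, and verify the subgradient inequality via the dual norm bound: $\|\mathbf{x}\| + \innerprod{\mathbf{v}}{\mathbf{y}-\mathbf{x}} = \innerprod{\mathbf{v}}{\mathbf{y}} \leq \|\mathbf{v}\|_\star \|\mathbf{y}\| \leq \|\mathbf{y}\|$. This is just one line once the substitution $\innerprod{\mathbf{v}}{\mathbf{x}} = \|\mathbf{x}\|$ is used.

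For the inclusion ($\subseteq$), given $\mathbf{v} \in \partial\|\mathbf{x}\|$, I would extract the two conditions by cleverly choosing test points $\mathbf{y}$ in the subgradient inequality. Plugging in $\mathbf{y} = 2\mathbf{x}$ gives $\innerprod{\mathbf{v}}{\mathbf{x}} \leq \|\mathbf{x}\|$, and $\mathbf{y} = \mathbf{0}$ gives $\innerprod{\mathbf{v}}{\mathbf{x}} \geq \|\mathbf{x}\|$; together they yield the equality $\innerprod{\mathbf{v}}{\mathbf{x}} = \|\mathbf{x}\|$. Substituting this back into the subgradient inequality leaves $\innerprod{\mathbf{v}}{\mathbf{y}} \leq \|\mathbf{y}\|$ for all $\mathbf{y}$. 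Applying this also to $-\mathbf{y}$ (using $\|-\mathbf{y}\| = \|\mathbf{y}\|$) upgrades it to $|\innerprod{\mathbf{v}}{\mathbf{y}}| \leq \|\mathbf{y}\|$, which by definition of the dual norm gives $\|\mathbf{v}\|_\star \leq 1$.

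There is no real obstacle here; the only subtlety is justifying that the Clarke subdifferential of a convex function coincides with the classical convex subdifferential, which is a standard fact (see Proposition 2.2.7 in Clarke's book). Once that identification is made, everything reduces to elementary manipulations of the subgradient inequality together with the definition of the dual norm.
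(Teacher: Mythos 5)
Your proof is correct. The paper itself states Lemma~\ref{lem:subgrad_norm} as a standard characterization and offers no proof, so there is nothing to compare against; your argument is the standard convex-analysis derivation and it is complete. Both inclusions check out: the ($\supseteq$) direction is the one-line verification of the subgradient inequality via $\innerprod{\mathbf{v}}{\mathbf{y}} \leq \|\mathbf{v}\|_\star \|\mathbf{y}\|$, and the ($\subseteq$) direction correctly extracts $\innerprod{\mathbf{v}}{\mathbf{x}} = \|\mathbf{x}\|$ from the test points $\mathbf{y} = 2\mathbf{x}$ and $\mathbf{y} = \mathbf{0}$ and then gets $\|\mathbf{v}\|_\star \leq 1$ from $|\innerprod{\mathbf{v}}{\mathbf{y}}| \leq \|\mathbf{y}\|$, which matches the paper's definition of the dual norm. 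Your care in invoking the identification of the Clarke subdifferential with the convex subdifferential (Clarke, Proposition 2.2.7) is exactly the right bridge, since the paper works throughout with Clarke subdifferentials and a norm is convex and (globally) Lipschitz, so the identification applies; the argument also covers the case $\mathbf{x} = \mathbf{0}$ without modification.
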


We present the proofs for our results about the late stage of training in steepest flow algorithms. The next lemma quantifies the behavior of the smooth margin past the point $t_0$ (where, recall, zero classification error is achieved).
\begin{theorem}[Soft margin increases - full version]\label{thm:margin_monotonicity}
For almost any $t > t_0$, it holds: 
$$\frac{d \log \Tilde{\gamma}}{d t} \geq L \left\lVert \frac{d \thetab}{d t} \right\rVert^2 \left( \frac{1}{L \cL(\thetab_t) \log\frac{1}{\cL(\thetab_t)}} - \frac{1}{\|\thetab_t\|\left\lVert \frac{d \thetab}{d t} \right\rVert}\right) \geq 0.$$
\end{theorem}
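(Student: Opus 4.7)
The simplified proof in the main text already handles the case where the loss and norm are differentiable, using the standard chain rule and a gradient-based Euler identity. To promote this argument to the full non-smooth setting required by the theorem, the plan is to reproduce every step of the simplified proof, but with Clarke subdifferentials in place of gradients, using Theorem~\ref{thm:chain_rule} to recover chain rule as an equality at almost all times and Proposition~\ref{prop:steep_flow_duality} to relate the dynamics of $\cL$ to $\|d\thetab/dt\|$ and $\|\mathbf{g}_t^\star\|_\star$.

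First, I would fix a time $t > t_0$ at which the arc $\thetab_t$ is differentiable and Theorem~\ref{thm:chain_rule} applies to all functions involved ($\cL$, $\|\cdot\|$, the outer $\log\log(1/\cdot)$), which holds for almost every $t$. Applying chain rule to the composition $\log \tilde{\gamma}(\thetab_t) = \log\log\!\bigl(1/\cL(\thetab_t)\bigr) - L\log\|\thetab_t\|$, the time derivative splits as in \Eqref{eq:margin_lemma_eq1_main}. The loss term is handled by Proposition~\ref{prop:steep_flow_duality}, which supplies the identities $d\cL/dt = -\|d\thetab/dt\|^2$ and $\|d\thetab/dt\| = \|\mathbf{g}_t^\star\|_\star$. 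For the norm term I would pick any $\mathbf{n}_t \in \partial \|\thetab_t\|$, use Theorem~\ref{thm:chain_rule} to write $\frac{d}{dt}\|\thetab_t\| = \langle \mathbf{n}_t, d\thetab/dt\rangle$, and then invoke Lemma~\ref{lem:subgrad_norm} to obtain $\|\mathbf{n}_t\|_\star \leq 1$, which by the definition of the dual norm gives the bound $\bigl|\frac{d}{dt}\|\thetab_t\|\bigr| \leq \|d\thetab/dt\|$. This reproduces the first three lines of \Eqref{eq:margin_lemma_eq1_main} verbatim.

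The main technical step is the non-smooth Euler identity. I would show that for every $\mathbf{h}_i \in \partial_{\thetab} f(\x_i;\thetab_t)$ one has $\langle \thetab_t, \mathbf{h}_i\rangle = L f(\x_i;\thetab_t)$; this follows by differentiating the homogeneity relation $f(\x_i; c\thetab) = c^L f(\x_i;\thetab)$ in $c$ at $c=1$ along the line through $\thetab_t$, using Theorem~\ref{thm:chain_rule} applied to the univariate restriction, and is the standard generalization of Euler's theorem to the Clarke subdifferential of a locally Lipschitz homogeneous function. By the chain rule inclusion of Theorem~\ref{thm:chain_rule_inclusion} applied to $\cL$, every $\mathbf{g} \in \partial\cL(\thetab_t)$ lies in the convex hull of vectors of the form $-\sum_i e^{-y_i f(\x_i;\thetab_t)} y_i \mathbf{h}_i$ with $\mathbf{h}_i \in \partial f(\x_i;\thetab_t)$; pairing with $\thetab_t$ and using the Euler identity termwise yields $\langle \thetab_t, -\mathbf{g}\rangle = L\sum_i e^{-y_i f(\x_i;\thetab_t)} y_i f(\x_i;\thetab_t)$ for \emph{any} such $\mathbf{g}$, in particular for $\mathbf{g}_t^\star$. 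The elementary inequality $e^{-\min_i y_i f(\x_i;\thetab_t)} \leq \cL(\thetab_t)$ then produces the lower bound $\langle \thetab_t, -\mathbf{g}_t^\star\rangle \geq L\cL(\thetab_t)\log(1/\cL(\thetab_t))$, exactly as in \Eqref{eq:homog_ineq_main}.

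To close the argument I would combine this with Cauchy--Schwarz in the norm/dual-norm pair, $\langle \thetab_t, -\mathbf{g}_t^\star\rangle \leq \|\thetab_t\|\|\mathbf{g}_t^\star\|_\star = \|\thetab_t\|\|d\thetab/dt\|$, which converts the homogeneity lower bound on $1/[\cL(\thetab_t)\log(1/\cL(\thetab_t))]$ into an upper bound by $L/(\|\thetab_t\|\|d\thetab/dt\|)$; substituting into \Eqref{eq:margin_lemma_eq1_main} delivers the claimed inequality, and realizability (\ref{ass:3}) ensures $\cL(\thetab_t) < 1$ so that $\log(1/\cL(\thetab_t)) > 0$ and all quantities have the correct signs. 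The main obstacle is the Euler identity in the subdifferential sense: one must be careful that Theorem~\ref{thm:chain_rule_inclusion} only gives an inclusion for $\partial \cL$, so the identity must be proved pointwise for arbitrary elements of $\partial f(\x_i;\thetab_t)$ and then propagated through the convex hull, rather than derived directly for $\mathbf{g}_t^\star$.
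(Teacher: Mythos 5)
Your proposal is correct and follows essentially the same route as the paper's full proof in Appendix~\ref{ssec:late_phase_proofs}: the same decomposition of $\log\Tilde{\gamma}$, the same use of the chain rule of Theorem~\ref{thm:chain_rule} and of Proposition~\ref{prop:steep_flow_duality}, the bound $\|\mathbf{n}_t\|_\star \leq 1$ for $\mathbf{n}_t \in \partial\|\thetab_t\|$, the generalized Euler identity (which the paper imports from Theorem B.2 of \citet{LyLi20} rather than re-deriving), the bound $e^{-\min_i y_i f(\x_i;\thetab_t)} \leq \cL(\thetab_t)$, and the final dual-norm step with $\|\mathbf{g}_t^\star\|_\star = \left\lVert \frac{d\thetab}{dt}\right\rVert$. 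Your only deviation is a slightly more careful treatment of how elements of $\partial\cL(\thetab_t)$ decompose (propagating the Euler identity through the convex hull of Theorem~\ref{thm:chain_rule_inclusion} rather than asserting an exact representation), which is a harmless refinement of the same argument.
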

\begin{proof}
    Let $\mathbf{n}_t \in \partial \|\thetab_t\|$. We have:
    \begin{equation}\label{eq:margin_lemma_eq1}
        \begin{split}
            \frac{d \log \Tilde{\gamma}}{d t} & = \frac{d}{d t} \log \log \frac{1}{\cL (\thetab_t)} - L \frac{d}{d t} \log \|\thetab_t\| \\
            & = \frac{d}{d t} \log \log \frac{1}{\cL (\thetab_t)} - L \innerprod{\frac{\mathbf{n}_t}{\|\thetab_t\|}}{\frac{d \thetab}{d t}} \;\;\;\;\; (\text{Chain rule})  \\
            & \geq \frac{d}{d t} \log \log \frac{1}{\cL (\thetab_t)} - L \frac{\left\lVert \frac{d \thetab}{d t} \right\rVert}{\|\thetab_t\|} \;\;\;\;\; (\text{definition of dual norm and $\|\mathbf{n}_t \|_\star \leq 1$}) \\
            & = - \frac{d \cL (\thetab_t)}{d t} \frac{1}{ \cL(\thetab_t) \log\frac{1}{\cL(\thetab_t)}} - L \frac{\left\lVert \frac{d \thetab}{d t} \right\rVert}{\|\thetab_t\|} \;\;\;\;\; (\text{Chain rule}) \\
            & = \left\lVert \frac{d \thetab}{d t} \right\rVert^2 \left( \frac{1}{ \cL(\thetab_t) \log\frac{1}{\cL(\thetab_t)}} - \frac{L}{\|\thetab_t\|\left\lVert \frac{d \thetab}{d t} \right\rVert}\right) \;\;\;\;\; (\text{\Eqref{eq:loss_descent}}).
        \end{split}
    \end{equation}
    But, the first term inside the parenthesis can be related to the second one via the following calculation. Recall that, by Theorem~\ref{thm:chain_rule}, for any $\mathbf{g}_t \in \partial \cL (\thetab_t)$ there exist $\mathbf{h}_1 \in \partial y_1 f(\x_1; \thetab_t), \ldots, \mathbf{h}_m \in \partial y_m f(\x_m; \thetab_t)$ such that $\mathbf{g}_t = \sum_{i = 1}^m e^{- y_i f(\x_i ; \thetab_t)} \mathbf{h}_i$. Thus, for a minimum norm subderivative $\mathbf{g}_t^\star$, we have:
    \begin{equation}
        \begin{split}
            \innerprod{\thetab_t}{- \mathbf{g}_t^\star} & = \innerprod{\thetab_t}{\sum_{i = 1}^m e^{- y_i f(\x_i ; \thetab_t)} \mathbf{h}_i^\star} \\
            & = \sum_{i = 1}^m e^{- y_i f(\x_i ; \thetab_t)} \innerprod{\thetab_t}{\mathbf{h}_i^\star} \\
            & = L \sum_{i = 1}^m  e^{- y_i f(\x_i ; \thetab_t)} y_i f(\x_i ; \thetab_t),
        \end{split}
    \end{equation}
    where the last equality follows from Euler's theorem for homogeneous functions (whose generalization for subderivatives can be found in Theorem B.2 in \cite{LyLi20}). Now, observe that this last term can be lower bounded as:
    \begin{equation}\label{eq:homog_ineq}
        \begin{split}
            \innerprod{\thetab_t}{- \mathbf{g}_t^\star} \geq L \sum_{i = 1}^m e^{- y_i f(\x_i ; \thetab_t)} \min_{i \in [m]} y_i f(\x_i ; \thetab_t) \geq L \cL (\thetab_t) \log \frac{1}{\cL (\thetab_t)},
        \end{split}
    \end{equation}
    where we used the fact $e^{- \min_{i \in [m]} y_i f(\x_i ; \thetab_t)} \leq \sum_{i = 1}^m e^{-y_i f(\x_i ; \thetab_t)}$. We have made the first term of~\Eqref{eq:margin_lemma_eq1} appear. By plugging~\Eqref{eq:homog_ineq} into~\Eqref{eq:margin_lemma_eq1}, we get:
    \begin{equation}
        \begin{split}
            \frac{d \log \Tilde{\gamma}}{d t} & \geq \left\lVert \frac{d \thetab}{d t} \right\rVert^2 \left( \frac{L}{\innerprod{\thetab_t}{-\mathbf{g}_t^\star}} - \frac{L}{\|\thetab_t\|\left\lVert \frac{d \thetab}{d t} \right\rVert}\right) \\
            & \geq \left\lVert \frac{d \thetab}{d t} \right\rVert^2 \left( \frac{L}{\|\thetab_t\| \|\mathbf{g}_t^\star\|_\star} - \frac{L}{\|\thetab_t\|\left\lVert \frac{d \thetab}{d t} \right\rVert}\right) \;\;\;\;\; (\text{definition of dual norm}).
        \end{split}
    \end{equation}
    Noticing that $\|\mathbf{g}_t^\star\|_\star = \left\lVert \frac{d \thetab}{d t} \right\rVert$ (from Proposition~\ref{prop:steep_flow_duality}) concludes the proof.
\end{proof}

By extending Lemma B.6 of \cite{LyLi20}, we can further prove that the loss converges to 0 and, thus, the norm of the iterates diverges to infinity.

\begin{lemma}\label{lem:loss_and_norm_at_inf}
    As $t\to \infty$, $\mathcal{L}(\thetab_t) \to 0$ and $\|\thetab_t\| \to \infty$.
\end{lemma}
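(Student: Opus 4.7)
I will prove the two claims in sequence: first establish $\cL(\thetab_t) \to 0$, then deduce $\|\thetab_t\| \to \infty$ from it using continuity and strict positivity of $\cL$. The main tools are the loss-descent identity from Proposition~\ref{prop:steep_flow_duality}, the soft-margin monotonicity from Theorem~\ref{thm:margin_monotonicity}, and the homogeneity inequality \Eqref{eq:homog_ineq}, which states $\innerprod{\thetab_t}{-\mathbf{g}_t^\star} \geq L\, \cL(\thetab_t)\log(1/\cL(\thetab_t))$.

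For the first claim, since $\cL$ is non-increasing and non-negative, $\cL(\thetab_t)$ converges to some $\cL_\infty \geq 0$. I argue by contradiction, assuming $\cL_\infty > 0$. Since $\cL(\thetab_{t_0}) < 1$ by (\ref{ass:3}), the soft margin $\widetilde{\gamma}(\thetab_{t_0})$ is strictly positive, and Theorem~\ref{thm:margin_monotonicity} keeps it at least at this value for all $t \geq t_0$. Rearranging the definition of $\widetilde{\gamma}$ gives
\begin{equation*}
    \|\thetab_t\|^L \;\leq\; \frac{-\log \cL(\thetab_t)}{\widetilde{\gamma}(\thetab_{t_0})} \;\leq\; \frac{-\log \cL_\infty}{\widetilde{\gamma}(\thetab_{t_0})},
\end{equation*}
so $\|\thetab_t\|$ is bounded above. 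Combining H\"older's inequality with \Eqref{eq:homog_ineq} yields
\begin{equation*}
    \|\mathbf{g}_t^\star\|_\star \;\geq\; \frac{\innerprod{\thetab_t}{-\mathbf{g}_t^\star}}{\|\thetab_t\|} \;\geq\; \frac{L\, \cL(\thetab_t)\log(1/\cL(\thetab_t))}{\|\thetab_t\|}.
\end{equation*}
Since $\cL(\thetab_t) \in [\cL_\infty,\, \cL(\thetab_{t_0})] \subset (0,1)$, the continuous function $x \log(1/x)$ is bounded below by a positive constant on this interval, and with $\|\thetab_t\|$ bounded above, Proposition~\ref{prop:steep_flow_duality} yields a uniform lower bound $\|d\thetab/dt\|^2 = \|\mathbf{g}_t^\star\|_\star^2 \geq c > 0$ for all $t > t_0$. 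Integrating $d\cL/dt = -\|d\thetab/dt\|^2$ then forces $\cL(\thetab_t) \leq \cL(\thetab_{t_0}) - c(t - t_0)$, which becomes negative in finite time, contradicting $\cL \geq 0$. Hence $\cL_\infty = 0$.

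For the second claim, if $\|\thetab_t\| \not\to \infty$ there is a sequence $t_k \to \infty$ with $\|\thetab_{t_k}\| \leq C$ for some $C$. By assumption~(\ref{ass:1}), each $f(\x_i; \cdot)$ is continuous, so $\cL$ is continuous and attains a minimum value $\cL_{\min}$ on the compact ball $\{\thetab : \|\thetab\| \leq C\}$. Because $\cL$ is a sum of exponentials, $\cL > 0$ pointwise and hence $\cL_{\min} > 0$, which contradicts $\cL(\thetab_{t_k}) \to 0$.

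\textbf{Main obstacle.} The delicate step is the first part: ruling out that the loss stabilizes at a positive value. The trick is to use two consequences of soft-margin monotonicity simultaneously: it supplies the upper bound on $\|\thetab_t\|$ that controls the denominator of the H\"older bound, while \Eqref{eq:homog_ineq} keeps the numerator bounded away from 0. Only when these two bounds are coupled with loss descent does the contradiction materialize; neither alone rules out a positive loss limit.
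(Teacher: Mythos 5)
Your proof is correct, and it reaches the conclusion by a somewhat different route than the paper, even though both rest on the same three ingredients (the descent identity $d\cL/dt=-\lVert d\thetab/dt\rVert^2=-\lVert\mathbf{g}_t^\star\rVert_\star^2$, the dual-norm inequality combined with \Eqref{eq:homog_ineq}, and the monotonicity of $\tilde\gamma$). The paper substitutes the margin bound directly into the differential inequality, obtaining $-\frac{d\cL}{dt}\geq \frac{L^2\tilde\gamma(t_0)^{2/L}}{(\log\frac1\cL)^{2/L}}\bigl(\cL\log\frac1\cL\bigr)^2$, and then integrates with a change of variables to show $\int_{1/\cL(t_0)}^{1/\cL(t)}(\log u)^{2/L-2}\,du\to\infty$, which forces $\cL\to 0$ and, as a by-product, encodes an implicit convergence rate. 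You instead argue by contradiction: if $\cL_\infty>0$, margin monotonicity caps $\lVert\thetab_t\rVert$, the homogeneity bound then gives a uniform lower bound $\lVert\mathbf{g}_t^\star\rVert_\star^2\geq c>0$, and integrating the descent identity drives $\cL$ negative in finite time. This is more elementary (no change of variables, no analysis of the divergent integral) but purely qualitative, yielding only the limit statement -- which is all the lemma asks for. For the second claim, your compactness argument (bounded subsequence of iterates versus the strictly positive minimum of the continuous function $\cL$ on a closed ball) is an explicit version of the paper's one-line remark that $\cL\to0$ forces $\lVert\thetab_t\rVert\to\infty$; it is, if anything, more carefully justified. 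The only implicit assumptions you share with the paper are absolute continuity of $t\mapsto\cL(\thetab_t)$ and of $\tilde\gamma$ along the trajectory, so that the a.e.\ derivative bounds integrate to monotonicity statements; this is part of the standing chain-rule framework (Theorem~\ref{thm:chain_rule}) and is not a gap.
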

\begin{proof}
    We suppress the dependence of the loss and the iterates from time $t$, when it is obvious from the context.
    
    From the definition of the steepest flow update and chain rule (\Eqref{eq:loss_descent}), we have
    \begin{equation}\label{eq:norm_grad_dual_ineq}
        - \frac{d \cL}{d t} = \left\lVert \frac{d \thetab}{d t} \right\rVert^2 = \| \mathbf{g}_t^\star \|_\star^2 \geq \frac{1}{\|\thetab\|^2} \innerprod{\thetab}{- \mathbf{g}_t^\star}^2,
    \end{equation}
    where we applied~\Eqref{eq:norms_equal},~\Eqref{eq:loss_descent} and the definition of the dual norm. But, as we showed in~\Eqref{eq:homog_ineq}, the above inner product can be upper bounded by a function of the loss, so, by plugging in, we get:
    \begin{equation}
        -\frac{d \cL}{d t} \geq \frac{L^2}{\|\thetab\|^2} \left( \cL \log\frac{1}{\cL} \right)^2 = \frac{L^2}{\left( \log \frac{1}{\cL} \right)^{2/L}} \Tilde{\gamma}^{2/L} (t) \left( \cL \log \frac{1}{\cL}\right)^2 \geq \frac{L^2}{\left( \log \frac{1}{\cL} \right)^{2/L}} \Tilde{\gamma}^{2/L} (t_0) \left( \cL \log \frac{1}{\cL}\right)^2,
    \end{equation}
    which follows from the definition of the margin~(\Eqref{eq:smooth_margin}) and its monotonicity (Lemma \ref{thm:margin_monotonicity}).
    By rearranging:
    \begin{equation}
        -\frac{d \cL}{d t} \frac{1}{\cL^2} \left(\log \frac{1}{\cL}\right)^{2/L - 2} \geq L^2 \Tilde{\gamma}(t_0)^{2/L},
    \end{equation}
    and integrating over time from $t_0$ to $t > t_0$, we further have:
    \begin{equation}
        \int_{t_0}^t \left(\log \frac{1}{\cL}\right)^{2/L - 2} \frac{d}{d t} \frac{1}{\cL} dt \geq L^2 \Tilde{\gamma}(t_0)^{2/L} (t-t_0),
    \end{equation}
    or, by a change of variables,
    \begin{equation}
        \int_{1/\cL(t_0)}^{1/\cL({t})} \left(\log u\right)^{2/L - 2} du \geq L^2 \Tilde{\gamma}(t_0)^{2/L} (t-t_0).
    \end{equation}
    The RHS diverges to infinity as $t\to \infty$, hence so does the LHS, which can only happen if $\cL \to 0$. In order for $\cL(\thetab_t) = \sum_{i = 1}^m e^{-y_i f(\x_i;\thetab_t)} = \sum_{i = 1}^m e^{-y_i \|\thetab_t\|^L f\left(\x_i;\frac{\thetab_t}{\|\thetab_t\|}\right)}$ to go to zero, it must be $\|\thetab_t\| \to \infty$.
\end{proof}

The following Lemma quantifies the connection between soft and hard margin.

\begin{lemma}\label{lem:smooth_hard_margin}
    For any $\thetab$, it holds:
    \begin{equation}\label{eq:soft_hard_ineq}
        \frac{\min_{i \in [m]} y_i f(\x_i; \thetab) - \log m}{\|\thetab\|^L} \leq \Tilde{\gamma} \leq \frac{\min_{i \in [m]} y_i f(\x_i; \thetab)}{\|\thetab\|^L}.
    \end{equation}
\end{lemma}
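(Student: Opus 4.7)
The plan is to reduce this to the standard log-sum-exp sandwich inequality. Writing out the definition $\Tilde{\gamma}(\thetab) = -\frac{\log \mathcal{L}(\thetab)}{\|\thetab\|^L}$ and multiplying through by $\|\thetab\|^L$, the claim becomes equivalent to
\begin{equation*}
\min_{i \in [m]} y_i f(\x_i;\thetab) - \log m \;\leq\; -\log \mathcal{L}(\thetab) \;\leq\; \min_{i \in [m]} y_i f(\x_i;\thetab),
\end{equation*}
or, after exponentiating and negating, to the two-sided bound
\begin{equation*}
e^{-\min_i y_i f(\x_i;\thetab)} \;\leq\; \mathcal{L}(\thetab) \;\leq\; m\, e^{-\min_i y_i f(\x_i;\thetab)}.
\end{equation*}

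First I would prove the lower bound on $\mathcal{L}$ by observing that $\mathcal{L}(\thetab) = \sum_{i=1}^m e^{-y_i f(\x_i;\thetab)}$ is a sum of nonnegative terms, so it is at least its maximum summand, which is $e^{-\min_i y_i f(\x_i;\thetab)}$ since $x \mapsto e^{-x}$ is decreasing. Second, for the upper bound, I would use the trivial fact that a sum of $m$ nonnegative terms is at most $m$ times its maximum, yielding $\mathcal{L}(\thetab) \leq m\, e^{-\min_i y_i f(\x_i;\thetab)}$. Taking logarithms, negating, and dividing by $\|\thetab\|^L$ then recovers both inequalities of the lemma simultaneously.

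There is no real obstacle here: the entire content is the log-sum-exp approximation to the max, and $L$-homogeneity of $f$ is not even required for the statement (the $\|\thetab\|^L$ factor simply matches the normalization in the definition of $\Tilde{\gamma}$). The only thing worth noting for readers is that the additive gap $\log m$ in the soft-vs-hard margin comparison is precisely the looseness of the log-sum-exp upper bound, which vanishes after normalization once $\min_i y_i f(\x_i;\thetab) \to \infty$ (as established by Lemma~\ref{lem:loss_and_norm_at_inf}), explaining why the soft and hard margins coincide in the limit.
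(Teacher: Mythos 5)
Your proof is correct and matches the paper's argument exactly: the paper also derives the lemma from the sandwich $e^{-\min_i y_i f(\x_i;\thetab)} \leq \cL(\thetab) \leq m\, e^{-\min_i y_i f(\x_i;\thetab)}$, followed by taking logarithms and dividing by $\|\thetab\|^L$. Your additional remarks (homogeneity not needed, the $\log m$ gap vanishing after normalization) are accurate but not part of the paper's proof.
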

\begin{proof}
    Follows from:
    \begin{equation}
        e^{- \min_{i \in [m]} y_i f(\x_i; \thetab)} \leq \cL (\thetab) \leq m e^{- \min_{i \in [m]} y_i f(\x_i; \thetab)}.
    \end{equation}
\end{proof}

From the previous two Lemmata, we deduce that the soft margin converges to the hard margin as $t \to \infty$.
\begin{corollary}\label{cor:soft_hard_convergence}
For any $t > t_0$, $\thetab_t \in \mathbb{R}^p$, let $\gamma(\thetab_t) = \frac{\min_{i \in [m]} y_i f(\x_i; \thetab_t)}{\|\thetab_t\|^L}$. Then, it holds:
    \begin{equation}
        \lim_{t\to \infty} \vert \Tilde{\gamma}(\thetab_t) - \gamma(\thetab_t) \vert = 0.
    \end{equation}
\end{corollary}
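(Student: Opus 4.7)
The plan is to combine the two preceding lemmas directly: Lemma~\ref{lem:smooth_hard_margin} bounds the gap $|\tilde{\gamma}(\thetab_t) - \gamma(\thetab_t)|$ uniformly in terms of $\|\thetab_t\|$, and Lemma~\ref{lem:loss_and_norm_at_inf} guarantees that $\|\thetab_t\|$ diverges, so the bound vanishes.

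More concretely, first I would rearrange the two-sided inequality in Lemma~\ref{lem:smooth_hard_margin}. Dividing through by $\|\thetab_t\|^L$, the upper bound gives $\tilde{\gamma}(\thetab_t) \leq \gamma(\thetab_t)$, while the lower bound yields $\tilde{\gamma}(\thetab_t) \geq \gamma(\thetab_t) - \frac{\log m}{\|\thetab_t\|^L}$. Combining these two inequalities gives
\begin{equation}
0 \leq \gamma(\thetab_t) - \tilde{\gamma}(\thetab_t) \leq \frac{\log m}{\|\thetab_t\|^L},
\end{equation}
so $|\tilde{\gamma}(\thetab_t) - \gamma(\thetab_t)| \leq \frac{\log m}{\|\thetab_t\|^L}$.

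Next I would invoke Lemma~\ref{lem:loss_and_norm_at_inf}, which states that $\|\thetab_t\| \to \infty$ as $t \to \infty$. Since the homogeneity degree satisfies $L \geq 1 > 0$ and $m$ is a fixed finite number of training points, we have $\frac{\log m}{\|\thetab_t\|^L} \to 0$. The squeeze theorem then yields $\lim_{t \to \infty} |\tilde{\gamma}(\thetab_t) - \gamma(\thetab_t)| = 0$, completing the proof.

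The statement is essentially a direct corollary; there is no real obstacle. The only thing to double-check is that the divergence $\|\thetab_t\| \to \infty$ from Lemma~\ref{lem:loss_and_norm_at_inf} is applicable here (it is, since its hypotheses are inherited from the standing assumptions (\ref{ass:1})--(\ref{ass:3}) under which steepest flow is analyzed in this subsection).
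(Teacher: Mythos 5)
Your proof is correct and follows essentially the same route as the paper: rearrange the two-sided bound of Lemma~\ref{lem:smooth_hard_margin} and invoke $\|\thetab_t\|\to\infty$ from Lemma~\ref{lem:loss_and_norm_at_inf}. If anything, your squeeze-theorem phrasing, which bounds the difference $|\tilde{\gamma}(\thetab_t)-\gamma(\thetab_t)|$ directly, is slightly cleaner than the paper's step of passing to limits of $\gamma(\thetab_t)$ on both sides.
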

\begin{proof}
    By taking limits in~\Eqref{eq:soft_hard_ineq}, we have:
    \begin{equation}
        \begin{split}
            \lim_{t\to \infty} \frac{\min_{i \in [m]} y_i f(\x_i; \thetab)}{\|\thetab\|^L} - \frac{\log m}{\|\thetab\|^L} \leq \lim_{t\to\infty} \Tilde{\gamma}(\thetab_t) & \leq \lim_{t\to\infty} \frac{\min_{i \in [m]} y_i f(\x_i; \thetab)}{\|\thetab\|^L} \iff \\
            \lim_{t\to \infty} \gamma(\thetab_t) - \lim_{t \to \infty} \frac{\log m}{\|\thetab\|^L} \leq \lim_{t\to\infty} \Tilde{\gamma}(\thetab_t) & \leq \lim_{t\to\infty} \gamma(\thetab_t).
        \end{split}
    \end{equation}
    But, from Lemma~\ref{lem:loss_and_norm_at_inf}, we know that $\|\thetab_t\| \to \infty$. Thus, 
    \begin{equation}
        \lim_{t\to \infty} \gamma(\thetab_t) \leq \lim_{t\to\infty} \Tilde{\gamma}(\thetab_t) \leq \lim_{t\to\infty} \gamma(\thetab_t),
    \end{equation}
    which shows the claim.
\end{proof}

The last part of the proof consists of characterizing the (directional) convergence of the iterates in relation to stationary points of the following optimization problem (re-introduced here for convenience):
\begin{align}\label{eq:min_norm_problem_2}
    \begin{split}
        &\min_{\thetab \in \mathbb{R}^p} \frac{1}{2}\|\thetab\|^2 \\
        \text{s.t. }&  y_i f(\x_i;\thetab) \geq 1, \; \forall i \in [m].
    \end{split}
\end{align}

We first show that we can always construct a feasible point of~\Eqref{eq:min_norm_problem_2} from a scaled version of $\thetab_t$.

\begin{lemma}\label{lem:feasible}
    For any $t > 0$, $\Tilde{\thetab}_t = \frac{\thetab_t}{\left(\min_{i \in [m]} y_i f(\x_i; \thetab_t)\right)^{\frac{1}{L}}}$ is a feasible point of~\Eqref{eq:min_norm_problem_2}.
\end{lemma}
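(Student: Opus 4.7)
The plan is to verify feasibility of $\tilde{\thetab}_t$ directly by invoking $L$-homogeneity and the positivity of $\min_{i \in [m]} y_i f(\x_i;\thetab_t)$ after the realizability time $t_0$. Write $q_t := \min_{i \in [m]} y_i f(\x_i;\thetab_t)$ so that $\tilde{\thetab}_t = q_t^{-1/L}\thetab_t$.

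First I would argue that $q_t > 0$ for $t > t_0$, so the rescaling is well-defined. By (\ref{ass:3}) we have $\cL(\thetab_{t_0}) < 1$, and Proposition~\ref{prop:steep_flow_duality} (loss descent, \Eqref{eq:loss_descent}) gives $\cL(\thetab_t) \leq \cL(\thetab_{t_0}) < 1$ for all $t > t_0$. Since $\cL(\thetab_t) = \sum_{i=1}^m e^{-y_i f(\x_i;\thetab_t)}$ is a sum of positive terms, each summand satisfies $e^{-y_i f(\x_i;\thetab_t)} < 1$, whence $y_i f(\x_i;\thetab_t) > 0$ for every $i$, and thus $q_t > 0$.

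The main step is then a one-line computation via assumption (\ref{ass:2}): for any $i \in [m]$,
\begin{equation*}
    y_i f(\x_i;\tilde{\thetab}_t) = y_i f\!\left(\x_i; \tfrac{\thetab_t}{q_t^{1/L}}\right) = \frac{y_i f(\x_i;\thetab_t)}{q_t} \geq \frac{q_t}{q_t} = 1,
\end{equation*}
where the middle equality uses $L$-homogeneity with $c = q_t^{-1/L}$, and the inequality is the definition of $q_t$ as the minimum. This establishes all the inequality constraints of~(\ref{eq:min_norm_problem_main}), proving feasibility.

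There is no real obstacle here; the only minor subtlety is ensuring $q_t > 0$ so that we can raise it to a fractional power, which is handled by combining (\ref{ass:3}) with loss monotonicity. The lemma is essentially a normalization statement, and it will be used downstream so that approximate-KKT arguments can be run against the concrete feasible competitor $\tilde{\thetab}_t$.
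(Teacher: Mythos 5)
Your proof is correct and follows essentially the same route as the paper: the core step is the identical one-line homogeneity computation showing $y_i f(\x_i;\tilde{\thetab}_t) = y_i f(\x_i;\thetab_t)/q_t \geq 1$. Your additional verification that $q_t > 0$ (via realizability and loss monotonicity) is a sensible extra care that the paper leaves implicit, and it correctly notes that the rescaling is only meaningful after the separation time $t_0$, which is how the lemma is used downstream.
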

\begin{proof}
    From the homogeneity of $f$, we have:
    \begin{equation}
        \begin{split}
            y_i f(\x_i; \Tilde{\thetab_t}) & = y_i f\left(\x_i; \frac{\thetab_t}{\left(\min_{i \in [m]} y_i f(\x_i; \thetab_t)\right)^{\frac{1}{L}}}\right) = \frac{y_i f(\x_i; \thetab_t)}{\min_{i \in [m]} y_i f(\x_i; \thetab_t)} \geq 1
        \end{split}
    \end{equation}
    for all $i \in [m]$. So $\Tilde{\thetab}_t$ is a feasible point of~\Eqref{eq:min_norm_problem_2}.
\end{proof}

Under some regularity assumptions, the KKT conditions (Definition~\ref{def:kkt}) become necessary for global optimality (yet, not sufficient):
\begin{definition}
    We say that a feasible point of~\Eqref{eq:min_norm_problem_2} satisfies the \textit{Mangasarian-Fromovitz Constraint Qualifications} if there exists $\mathbf{v} \in \mathbb{R}^p$ such that for all $i \in [m]$ with $1 - y_i f(\x_i;\thetab) = 0$ and for all $\mathbf{h} \in \partial  \left(1 - y_i f(\x_i; \thetab) \right)$, it holds:
    \begin{equation}
        \innerprod{\mathbf{v}}{\mathbf{h}} > 0.
    \end{equation}
\end{definition}

The next Lemma shows that Problem~\ref{eq:min_norm_problem_2} satisfies the Mangasarian-Fromovitz Constraint Qualifications:
\begin{lemma}\label{lem:mfcq}
    Problem~\ref{eq:min_norm_problem_2} satisfies the Mangasarian-Fromovitz Constraint Qualifications at every feasible point $\thetab$.
\end{lemma}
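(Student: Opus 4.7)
The plan is to exhibit an explicit MFCQ direction, driven by the $L$-homogeneity of $f$. The natural candidate is $\mathbf{v} = -\thetab$, which uniformly certifies the condition at any feasible $\thetab$.

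First I would dispose of the trivial case: if no constraint is active at $\thetab$, the MFCQ condition is vacuous and any $\mathbf{v}$ suffices. Otherwise, activity of some constraint $i$ means $y_i f(\x_i;\thetab) = 1$, which forces $\thetab \neq 0$ (since $L$-homogeneity gives $f(\x_i;0) = 0$). Next I would compute the Clarke subdifferential of the constraint using standard calculus rules: constants drop out and scalar multiplication factors through, so $\partial(1 - y_i f(\x_i;\thetab)) = -y_i\, \partial f(\x_i;\thetab)$. Thus every $\mathbf{h}$ in this set has the form $\mathbf{h} = -y_i \mathbf{h}_i$ for some $\mathbf{h}_i \in \partial f(\x_i;\thetab)$.

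The core step is then to invoke the generalized Euler identity for locally Lipschitz, $L$-homogeneous functions (Theorem B.2 in \cite{LyLi20}, already used in the proof of Theorem~\ref{thm:margin_monotonicity}): namely, $\innerprod{\thetab}{\mathbf{h}_i} = L\, f(\x_i;\thetab)$ for every $\mathbf{h}_i \in \partial f(\x_i;\thetab)$. Plugging in and using activity $y_i f(\x_i;\thetab) = 1$, I would compute
\[
\innerprod{-\thetab}{\mathbf{h}} \;=\; \innerprod{-\thetab}{-y_i \mathbf{h}_i} \;=\; y_i \innerprod{\thetab}{\mathbf{h}_i} \;=\; L\, y_i f(\x_i;\thetab) \;=\; L \;>\; 0,
\]
uniformly over all $\mathbf{h} \in \partial(1 - y_i f(\x_i;\thetab))$ and all active indices $i$. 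This certifies that $\mathbf{v} = -\thetab$ is a valid MFCQ direction.

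The only conceptual subtlety is justifying the equality (rather than mere inclusion) $\partial(1 - y_i f(\x_i;\thetab)) = -y_i \partial f(\x_i;\thetab)$; this holds because the outer map $u \mapsto 1 - y_i u$ is smooth with nonvanishing derivative, making the Clarke chain rule exact here. Beyond that, the argument is essentially a one-line consequence of $L$-homogeneity, so I do not expect any substantial technical obstacle.
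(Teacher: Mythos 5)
Your proposal is correct and follows essentially the same route as the paper: taking $\mathbf{v}=-\thetab$ and applying the generalized Euler identity for $L$-homogeneous locally Lipschitz functions to get $\innerprod{\mathbf{v}}{\mathbf{h}} = L\,y_i f(\x_i;\thetab) = L > 0$ at every active constraint. The extra care you take with the exactness of the Clarke chain rule and the vacuous inactive case is fine but not needed beyond what the paper already records.
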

\begin{proof}
    Let $\mathbf{h}_i \in \partial (1 - y_i f(\x_i; \thetab))$ and $\mathbf{v} = - \thetab$, then for all $i \in [m]$ satisfying $y_i f(\x_i; \thetab) = 1$,  we have from Euler's theorem for homogeneous functions:
    \begin{equation}
        \innerprod{\mathbf{v}}{\mathbf{h}_i} = L y_i f(\x_i; \thetab) = L > 0.
    \end{equation}
\end{proof}

Our proof technique depends on the following relaxed notion of stationarity.
\begin{definition}[$(\epsilon, \delta)$-approximate KKT point]\label{def:approx_kkt}
    A feasible point $\thetab$ of Problem~\eqref{eq:min_norm_problem_2} is called an $( \epsilon, \delta)-$\textit{approximate} KKT point if there exist $\lambda_1, \ldots, \lambda_m \geq 0$, $\mathbf{h}_i \in \partial f(\x_i;\thetab)$ and $\mathbf{k} \in \partial \frac{1}{2} \| \thetab\|^2$ such that:
    \begin{enumerate}
        \item $\left\lVert \sum_{i = 1}^m \lambda_i y_i \mathbf{h}_i - \mathbf{k} \right\rVert_2 \leq \epsilon.$\label{eq:dedelta_condition_one}
        \item $\sum_{i = 1}^m \lambda_i (y_i f(\x_i;\thetab) - 1) \leq \delta$.
    \end{enumerate}
\end{definition}

Before we proceed with the main result, we state and prove three useful Lemmata. The first one lower bounds the alignment between normalized iterates and normalized loss gradients. This Lemma is key for showing that the alignment goes to 1 as $t \to \infty$.

\begin{lemma}\label{lem:alignment_bound}
    For all $t_2 > t_1 \geq t_0$, there exists $t_\star \in (t_1, t_2)$ such that:
    \begin{equation}
        \left( \frac{1}{\innerprod{\frac{\thetab_{t_\star}}{\|\thetab_{t_\star}\|}}{\frac{-\mathbf{g}_{t_\star}^\star}{\left \lVert \mathbf{g}_{t_\star}^\star \right \rVert_\star}}} - 1 \right) \leq \frac{1}{L} \frac{\log \frac{\Tilde{\gamma}(t_2)}{\Tilde{\gamma}(t_1)}}{\displaystyle\int_{t_1}^{t_2} \frac{\left\lVert \frac{d \thetab_t}{d t} \right\rVert}{\|\thetab_t\|}dt},
    \end{equation}
    for all $\mathbf{g}_{t_\star}^\star \in \argmin_{\mathbf{u} \in \partial \cL(\thetab_{t_\star})} \|\mathbf{u}\|_\star$
\end{lemma}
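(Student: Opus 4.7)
The plan is to rewrite the soft-margin growth rate obtained in Theorem~\ref{thm:margin_monotonicity} as a product of a ``normalized speed'' $\lVert d\thetab/dt\rVert / \lVert \thetab_t\rVert$ and an ``alignment-gap'' factor, then integrate over $[t_1,t_2]$ and extract the existence of $t_\star$ by a weighted pigeonhole argument.

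First, I would sharpen the pointwise lower bound on $\frac{d \log \Tilde{\gamma}}{dt}$ used inside the proof of Theorem~\ref{thm:margin_monotonicity}. Writing $\frac{d \log \Tilde{\gamma}}{dt} = \frac{d}{dt}\log\log\frac{1}{\cL(\thetab_t)} - L\frac{d}{dt}\log\lVert \thetab_t\rVert$, Proposition~\ref{prop:steep_flow_duality} gives $\frac{d}{dt}\log\log\frac{1}{\cL} = \frac{\lVert \mathbf{g}_t^\star\rVert_\star^2}{\cL\log(1/\cL)}$; combined with the homogeneity bound $\innerprod{\thetab_t}{-\mathbf{g}_t^\star} \geq L\cL\log\frac{1}{\cL}$ from~\Eqref{eq:homog_ineq} this yields $\frac{d}{dt}\log\log\frac{1}{\cL(\thetab_t)} \geq L\lVert \mathbf{g}_t^\star\rVert_\star^2 / \innerprod{\thetab_t}{-\mathbf{g}_t^\star}$. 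For the second term, Lemma~\ref{lem:subgrad_norm} gives $\frac{d}{dt}\log\lVert\thetab_t\rVert \leq \lVert \mathbf{g}_t^\star\rVert_\star/\lVert\thetab_t\rVert$. Factoring out $L\lVert \mathbf{g}_t^\star\rVert_\star/\lVert \thetab_t\rVert$ and using the duality identity $\lVert d\thetab/dt \rVert = \lVert \mathbf{g}_t^\star\rVert_\star$ from~\Eqref{eq:norms_equal} produces the pointwise inequality
\[
    \frac{d\log\Tilde{\gamma}}{dt} \;\geq\; L\,\frac{\lVert d\thetab_t/dt\rVert}{\lVert \thetab_t\rVert}\left(\frac{1}{\innerprod{\thetab_t/\lVert\thetab_t\rVert}{-\mathbf{g}_t^\star/\lVert \mathbf{g}_t^\star\rVert_\star}} - 1\right).
\]
Both factors on the right-hand side are nonnegative: the alignment is at most $1$ by the dual-norm inequality, and strictly positive because the homogeneity bound guarantees $\innerprod{\thetab_t}{-\mathbf{g}_t^\star} > 0$ whenever $\cL(\thetab_t) < 1$, i.e. for $t > t_0$.

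Integrating this inequality from $t_1$ to $t_2$ yields $\log(\Tilde{\gamma}(t_2)/\Tilde{\gamma}(t_1)) \geq L\int_{t_1}^{t_2} w(t)\,h(t)\,dt$, where $w(t) = \lVert d\thetab/dt\rVert/\lVert\thetab_t\rVert \geq 0$ and $h(t)$ denotes the parenthesized alignment gap. Setting $W = \int_{t_1}^{t_2} w(t)\,dt$, a weighted pigeonhole argument closes the proof: if $h(t) > \frac{1}{LW}\log(\Tilde{\gamma}(t_2)/\Tilde{\gamma}(t_1))$ held for every $t\in(t_1,t_2)$, then $\int_{t_1}^{t_2} w(t) h(t)\,dt > \frac{1}{L}\log(\Tilde{\gamma}(t_2)/\Tilde{\gamma}(t_1))$, contradicting the integrated inequality; hence there exists $t_\star \in (t_1,t_2)$ achieving the claimed bound on $h(t_\star)$. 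The main subtlety I anticipate is the ``for all $\mathbf{g}_{t_\star}^\star$'' clause: this is handled by observing that the generalized Euler identity gives $\innerprod{\thetab_t}{\mathbf{h}} = L\,y_i f(\x_i;\thetab_t)$ for every $\mathbf{h} \in \partial y_i f(\x_i;\thetab_t)$, so by Theorem~\ref{thm:chain_rule_inclusion} and linearity, $\innerprod{\thetab_t}{\mathbf{g}_t}$ is the same constant for every $\mathbf{g}_t \in \partial \cL(\thetab_t)$; combined with the fact that $\lVert \mathbf{g}_{t_\star}^\star\rVert_\star$ is common to all minimum-norm choices, this makes the alignment independent of which $\mathbf{g}_{t_\star}^\star$ is selected, so the bound applies uniformly.
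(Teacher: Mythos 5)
Your proposal is correct and follows essentially the same route as the paper: the paper likewise takes the intermediate bound from the proof of Theorem~\ref{thm:margin_monotonicity}, rewrites it as $\frac{d\log\Tilde{\gamma}}{dt} \geq L\,\frac{\lVert d\thetab_t/dt\rVert}{\lVert\thetab_t\rVert}\bigl(\innerprod{\tfrac{\thetab_t}{\lVert\thetab_t\rVert}}{\tfrac{-\mathbf{g}_t^\star}{\lVert\mathbf{g}_t^\star\rVert_\star}}^{-1}-1\bigr)$ using $\lVert d\thetab_t/dt\rVert=\lVert\mathbf{g}_t^\star\rVert_\star$, integrates over $[t_1,t_2]$, and extracts $t_\star$ by contradiction, exactly your weighted pigeonhole. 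Your explicit remark that the alignment is independent of the choice of minimum-norm subgradient (via the Euler identity making $\innerprod{\thetab_t}{\mathbf{g}_t}$ constant over $\partial\cL(\thetab_t)$) is a welcome clarification of a point the paper leaves implicit, but it does not change the argument.
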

\begin{proof}
    From Lemma \ref{thm:margin_monotonicity}, we have for all $\mathbf{g}_t^\star \in \argmin_{\mathbf{u} \in \partial \cL(\thetab_t)} \|\mathbf{u}\|_\star$:
    \begin{equation}\label{eq:alignment_intermediate}
        \begin{split}
            \frac{d \log \Tilde{\gamma}}{d t} & \geq L \left\lVert \frac{d \thetab_t}{d t} \right\rVert^2 \left( \frac{1}{\innerprod{\thetab_t}{-\mathbf{g}_t^\star}} - \frac{1}{\|\thetab_t\| \left\lVert \frac{d \thetab_t}{d t} \right\rVert} \right) \\
            & = L \frac{\left\lVert \frac{d \thetab_t}{d t} \right\rVert}{\|\thetab_t\|} \left( \frac{1}{\innerprod{\frac{\thetab_t}{\|\thetab_t\|}}{\frac{-\mathbf{g}_t^\star(\thetab_t)}{\left \lVert \mathbf{g}_t^\star \right \rVert_\star}}} - 1 \right).
        \end{split}
    \end{equation}
    We then integrate the two sides from $t_1$ to $t_2 > t_1 > t_0$:
    \begin{equation}
        \int_{t_1}^{t_2} \left( \frac{1}{\innerprod{\frac{\thetab_t}{\|\thetab_t\|}}{\frac{-\mathbf{g}_t^\star}{\left \lVert \mathbf{g}_t^\star \right \rVert_\star}}} - 1 \right) \frac{\left\lVert \frac{d \thetab_t}{d t} \right\rVert}{\|\thetab_t\|} dt \leq \frac{1}{L} \log \frac{\Tilde{\gamma}(t_2)}{\Tilde{\gamma}(t_1)}.
    \end{equation}
    The desired existential statement follows from a proof by contradiction.
\end{proof}




Next, we bound the rate of change of the normalized iterates.

\begin{lemma}\label{lem:rate_of_change_normlzd_iter}
    For almost any $t > 0$, it holds:
    \begin{equation}
        \left \lVert \frac{d \frac{\thetab_t}{\|\thetab_t\|}}{d t} \right\rVert \leq 2 \frac{\left \lVert \frac{d \thetab_t}{dt} \right \rVert}{\|\thetab_t\|}.
    \end{equation}
\end{lemma}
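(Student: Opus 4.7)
The plan is to directly differentiate the normalized iterates via the quotient rule, then use the dual-norm characterization of the subdifferential of the norm (Lemma~\ref{lem:subgrad_norm}) together with the triangle inequality to get the factor of $2$.

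First I would apply Theorem~\ref{thm:chain_rule} to write, for almost every $t > 0$,
\begin{equation*}
\frac{d}{dt}\frac{\thetab_t}{\|\thetab_t\|} = \frac{1}{\|\thetab_t\|}\frac{d \thetab_t}{d t} - \frac{\thetab_t}{\|\thetab_t\|^2}\,\frac{d \|\thetab_t\|}{d t},
\end{equation*}
where $\frac{d \|\thetab_t\|}{d t} = \innerprod{\mathbf{n}_t}{\frac{d \thetab_t}{d t}}$ for some $\mathbf{n}_t \in \partial \|\thetab_t\|$.

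Next I would estimate the scalar derivative $\frac{d \|\thetab_t\|}{d t}$ using Lemma~\ref{lem:subgrad_norm}, which gives $\|\mathbf{n}_t\|_\star \leq 1$. The definition of the dual norm then yields
\begin{equation*}
\left|\frac{d \|\thetab_t\|}{d t}\right| = \left|\innerprod{\mathbf{n}_t}{\frac{d \thetab_t}{d t}}\right| \leq \|\mathbf{n}_t\|_\star \left\lVert \frac{d \thetab_t}{d t}\right\rVert \leq \left\lVert \frac{d \thetab_t}{d t}\right\rVert.
\end{equation*}

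Finally, applying the triangle inequality to the decomposition above, and noting that the vector $\thetab_t/\|\thetab_t\|^2$ has norm $1/\|\thetab_t\|$ (since $\|\thetab_t\|$ is its own norm), I obtain
\begin{equation*}
\left\lVert \frac{d}{dt}\frac{\thetab_t}{\|\thetab_t\|}\right\rVert \leq \frac{1}{\|\thetab_t\|}\left\lVert \frac{d \thetab_t}{d t}\right\rVert + \frac{1}{\|\thetab_t\|}\left|\frac{d \|\thetab_t\|}{d t}\right| \leq 2\,\frac{\left\lVert \frac{d \thetab_t}{d t}\right\rVert}{\|\thetab_t\|},
\end{equation*}
which is the claim. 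This is a routine calculation and I do not anticipate a substantial obstacle; the only subtle point is justifying the chain rule for $\|\thetab_t\|$ along the arc produced by steepest flow, which is handled exactly as in Theorem~\ref{thm:chain_rule} under the Whitney stratifiability assumption that is already in force throughout the paper.
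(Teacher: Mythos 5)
Your proposal is correct and follows essentially the same route as the paper's proof: the same quotient-rule decomposition of $\frac{d}{dt}\frac{\thetab_t}{\|\thetab_t\|}$, the same bound $\left|\innerprod{\mathbf{n}_t}{\frac{d\thetab_t}{dt}}\right| \leq \left\lVert \frac{d\thetab_t}{dt}\right\rVert$ via $\|\mathbf{n}_t\|_\star \leq 1$ from Lemma~\ref{lem:subgrad_norm}, and the same triangle-inequality step yielding the factor of $2$. No changes needed.
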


\begin{proof}
    The rate of change of the normalized iterates can be written as follows:
    \begin{equation}
        \begin{split}
            \frac{d \frac{\thetab_t}{\|\thetab_t\|}}{d t} & = \frac{1}{\|\thetab_t\|} \frac{d \thetab_t}{dt} + \thetab_t \left( - \frac{1}{\|\thetab_t\|^2} \frac{d \|\thetab_t\|}{dt} \right) \\
            & = \frac{1}{\|\thetab_t\|} \frac{d \thetab_t}{dt} + \thetab_t \left( - \frac{1}{\|\thetab_t\|^2} \innerprod{\mathbf{n}_t}{\frac{d \thetab_t}{dt}} \right), \;\;\;\;\; (\text{Chain rule})
        \end{split}
    \end{equation}
    where $\mathbf{n}_t \in \partial \|\thetab_t\|$. So, by the triangle inequality, its norm is bounded by:
    \begin{equation}
        \begin{split}
            \left \lVert \frac{d \frac{\thetab_t}{\|\thetab_t\|}}{d t} \right\rVert & \leq  \frac{\left \lVert \frac{d \thetab_t}{dt} \right \rVert}{\|\thetab_t\|} + \frac{1}{\|\thetab_t\|} \left | \innerprod{\mathbf{n}_t}{\frac{d \thetab_t}{dt}} \right | \\
            & \leq 2 \frac{\left \lVert \frac{d \thetab_t}{dt} \right \rVert}{\|\thetab_t\|}. \;\;\;\;\; (\text{definition of dual norm and $\|\mathbf{n}_t \|_\star \leq 1$})
        \end{split}
    \end{equation}
\end{proof}

Finally, we prove an auxiliary lemma that constructs a convenient sequence.

\begin{lemma}\label{lemma:gradient_sequence} For any limit point $\Bar{\thetab}$ of $\left\{\frac{\thetab_t}{\|\thetab_t\|}\right\}_{t \geq 0}$, there exists a sequence $t_n$ with $t_n \rightarrow \infty$ so that $\frac{\thetab_{t_n}}{\| \thetab_{t_n}\|} \rightarrow \bar \thetab$ and $-\frac{\mathbf{g}^{\star}_{t_n}}{\|\mathbf{g}^{\star}_{t_n}\|_{\star}} \rightarrow \mathbf{u}^{\star}$ for some $\mathbf{u}^{\star} \in \partial \norm{\bar \thetab}$.
\end{lemma}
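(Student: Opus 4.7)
The overall strategy is to produce $t_n\to\infty$ such that simultaneously $\thetab_{t_n}/\|\thetab_{t_n}\|\to\bar\thetab$ and the alignment $\left\langle\thetab_{t_n}/\|\thetab_{t_n}\|,\,-\mathbf{g}_{t_n}^\star/\|\mathbf{g}_{t_n}^\star\|_\star\right\rangle\to 1$. Given these two properties, the sequence $-\mathbf{g}_{t_n}^\star/\|\mathbf{g}_{t_n}^\star\|_\star$ lies in the unit ball of the dual norm, which is compact in $\mathbb{R}^p$, so a subsequence converges to some $\mathbf{u}^\star$ with $\|\mathbf{u}^\star\|_\star\le 1$. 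Passing to the limit in the alignment identity yields $\langle\bar\thetab,\mathbf{u}^\star\rangle=\|\bar\thetab\|$, and Lemma~\ref{lem:subgrad_norm} then places $\mathbf{u}^\star\in\partial\|\bar\thetab\|$.

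The construction of $t_n$ will start from any $s_n\to\infty$ with $\thetab_{s_n}/\|\thetab_{s_n}\|\to\bar\thetab$ and search inside short windows $(s_n,s_n+\delta_n)$. Denote $\phi(t):=\|d\thetab_t/dt\|/\|\thetab_t\|$. Theorem~\ref{thm:margin_monotonicity_main} together with Lemma~\ref{lem:smooth_hard_margin} gives that $\tilde\gamma$ is non-decreasing past $t_0$ and bounded above by the hard margin (itself bounded by continuity of $f$ on the compact unit sphere), so $\tilde\gamma_\infty:=\lim_{t\to\infty}\tilde\gamma(t)$ exists and $B_n:=\log(\tilde\gamma_\infty/\tilde\gamma(s_n))\to 0$. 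The chain-rule bound $\left|\frac{d}{dt}\log\|\thetab_t\|\right|\le \phi(t)$ combined with Lemma~\ref{lem:loss_and_norm_at_inf} gives $\int_{s_n}^{\infty}\phi\,dt=\infty$; by continuity and monotonicity of $\delta\mapsto\int_{s_n}^{s_n+\delta}\phi\,dt$ together with the intermediate value theorem, I will pick $\delta_n$ achieving any prescribed $a_n>0$ for this integral. Running the averaging/contradiction argument that underlies Lemma~\ref{lem:alignment_bound} on this window and restricted to $\{\phi>0\}$ will then yield $t_n\in(s_n,s_n+\delta_n)$ with $\phi(t_n)>0$ (so the subgradient normalization is well-defined) and alignment gap at most $B_n/(L a_n)$, while Lemma~\ref{lem:rate_of_change_normlzd_iter} bounds the direction drift by $\left\|\thetab_{t_n}/\|\thetab_{t_n}\|-\thetab_{s_n}/\|\thetab_{s_n}\|\right\|\le 2a_n$.

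Choosing $a_n:=\max(\sqrt{B_n},1/n)$ is what drives both quantities to zero: $a_n\to 0$ keeps the direction close to $\bar\thetab$, while $B_n/a_n\le a_n\to 0$ (using $a_n\ge\sqrt{B_n}$) forces the alignment to $1$. The main obstacle is exactly this balancing act on $\delta_n$: the window must be long enough for the alignment-gap bound $B_n/(L a_n)$ to vanish, yet short enough that the direction drift $2a_n$ also vanishes. The geometric scale $a_n\sim\sqrt{B_n}$ is the sweet spot that makes both compatible; the $1/n$ floor handles the degenerate case $B_n=0$ in which the soft margin has already saturated at $s_n$. Once such $t_n$ are in hand, the compactness plus Lemma~\ref{lem:subgrad_norm} step from the first paragraph closes the argument.
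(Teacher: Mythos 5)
Your proposal is correct and follows essentially the same route as the paper's proof: pick $s_n$ with the normalized iterate near $\bar{\thetab}$, choose a window whose integral of $\left\lVert \frac{d\thetab_t}{dt}\right\rVert/\|\thetab_t\|$ is prescribed (possible since $\|\thetab_t\|\to\infty$), extract $t_n$ via the averaging bound of Lemma~\ref{lem:alignment_bound}, control the drift with Lemma~\ref{lem:rate_of_change_normlzd_iter}, and finish by compactness plus Lemma~\ref{lem:subgrad_norm}. The only difference is bookkeeping: the paper forces the margin tail to be at most $1/n^2$ and sets the window integral to $1/n$, while you adapt the window size $a_n=\max(\sqrt{B_n},1/n)$ to the tail $B_n$ — the same balancing idea.
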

\begin{proof}
Let $\epsilon_n = \frac{1}{n}$ for $n \geq 1$. We construct a sequence $\{t_n\}_{n \geq 1}$, by induction, in the following sense. Suppose $t_1 < \ldots < t_{n-1}$ have been constructed already. Since $\Bar{\thetab}$ is a limit point of the normalized iterates and $\log \Tilde{\gamma}_t \to \log \Tilde{\gamma}_\infty < \infty$ (as $\Tilde{\gamma}_t$ is non-decreasing and bounded from above), there exists $s_n > t_{n-1}$ such that:
\begin{equation}\label{eq:final_bounds}
    \left \lVert \frac{\thetab_{s_n}}{\|\thetab_{s_n}\|} - \Bar{\thetab} \right \rVert \leq \epsilon_n = \frac{1}{n} \;\;\;\;\;\; \text{and} \;\;\;\;\;\; \frac{1}{L} \log \frac{\Tilde{\gamma}_\infty}{\Tilde{\gamma}_{s_n}} \leq \epsilon_n^2 = \frac{1}{n^2}.
\end{equation}
Since $\frac{d \log \|\thetab_t\|}{dt} \leq \frac{\left \lVert \frac{d \thetab_t}{dt} \right \rVert}{\|\thetab_t\|}$ (as in~\Eqref{eq:margin_lemma_eq1}), we have that $\lim_{t \to \infty} \displaystyle\int_{t_A}^{t} \frac{\left\lVert \frac{d \thetab_{t^\prime}}{d {t^\prime}} \right\rVert}{\|\thetab_{t^\prime}\|}dt^\prime \geq \lim_{t \to \infty}\log \frac{\left \lVert \thetab_t \right \rVert}{\left \lVert \thetab_{t_A} \right \rVert} = \infty$ for all $t_A > 0$. Thus, there exists $s_n^\prime > s_n$ such that $\displaystyle\int_{s_n}^{s_n^\prime} \frac{\left \lVert \frac{d \thetab_t}{dt} \right \rVert}{\|\thetab_t\|} dt = \frac{1}{n}$.
Now, from Lemma \ref{lem:alignment_bound}, we know there exists $t_n \in (s_n, s_n^\prime)$ with:
\begin{equation}
    \left( \frac{1}{\innerprod{\frac{\thetab_{t_n}}{\|\thetab_{t_n}\|}}{\frac{-\mathbf{g}_{t_n}^\star}{\left \lVert \mathbf{g}_{t_n}^\star \right \rVert_\star}}} - 1 \right) \leq \frac{1}{L} \frac{\log \frac{\Tilde{\gamma}_{s_n^\prime}}{\Tilde{\gamma}_{s_n}}}{\displaystyle\int_{s_n}^{s_n^\prime} \frac{\left\lVert \frac{d \thetab_t}{d t} \right\rVert}{\|\thetab_t\|}dt} \leq \frac{\frac{1}{n^2}}{\frac{1}{n}} = \frac{1}{n},
\end{equation}
which implies $\innerprod{\frac{\thetab_{t_n}}{\|\thetab_{t_n}\|}}{\frac{-\mathbf{g}_{t_n}^\star}{\left \lVert \mathbf{g}_{t_n}^\star \right \rVert_\star}} \geq \frac{1}{1+\frac{1}{n}} \to 1$ as $n \to \infty$. From the definition of the dual norm, $\innerprod{\frac{\thetab_{t_n}}{\|\thetab_{t_n}\|}}{\frac{-\mathbf{g}_{t_n}^\star}{\left \lVert \mathbf{g}_{t_n}^\star \right \rVert_\star}} \leq 1$, hence $\innerprod{\frac{\thetab_{t_n}}{\|\thetab_{t_n}\|}}{\frac{-\mathbf{g}_{t_n}^\star}{\left \lVert \mathbf{g}_{t_n}^\star \right \rVert_\star}} \to 1$ as $n \to \infty$.
Also, for the normalized iterates we have:
\begin{equation}
    \left \lVert \frac{\thetab_{t_n}}{\|\thetab_{t_n}\|} - \Bar{\thetab} \right \rVert \leq \left \lVert \frac{\thetab_{t_n}}{\|\thetab_{t_n}\|} - \frac{\thetab_{s_n}}{\|\thetab_{s_n}\|} \right \rVert + \left \lVert \frac{\thetab_{s_n}}{\|\thetab_{s_n}\|} - \Bar{\thetab} \right \rVert \stackrel{\text{\Eqref{eq:final_bounds}}}{\leq} \left \lVert \frac{\thetab_{t_n}}{\|\thetab_{t_n}\|} - \frac{\thetab_{s_n}}{\|\thetab_{s_n}\|} \right \rVert + \frac{1}{n}
\end{equation}
To deal with the first term, we can leverage Lemma \ref{lem:rate_of_change_normlzd_iter} which bounds the rate of change of the normalized iterates:
\begin{equation}
    \left \lVert \frac{\thetab_{t_n}}{\|\thetab_{t_n}\|} - \Bar{\thetab} \right \rVert \leq \displaystyle\int_{s_n}^{t_n} \left \lVert \frac{d \frac{\thetab_t}{\|\thetab_t\|}}{d t} \right \rVert dt + \frac{1}{n} \leq 2 \displaystyle\int_{s_n}^{t_n} \frac{\left\lVert \frac{d \thetab_t}{d t} \right\rVert}{\|\thetab_t\|}dt + \frac{1}{n} \leq 2 \displaystyle\int_{s_n}^{s_n^\prime} \frac{\left\lVert \frac{d \thetab_t}{d t} \right\rVert}{\|\thetab_t\|}dt + \frac{1}{n} = \frac{3}{n} \to 0
\end{equation}

From Bolzano-Weierstrass, let $t_{n_\ell}$ be a subsequence of $t_n$ so that $\frac{-\mathbf{g}^{\star}_{t_{n_\ell}}}{\|\mathbf{g}^{\star}_{t_{n_\ell}}\|_{\star}}$ converges to some limit point $\mathbf{u}^{\star}$.  This limit point therefore satisfies $\innerprod{\mathbf{u}^{\star}}{\Bar{\thetab}} = 1, \|\mathbf{u}^{\star}\|_{\star}=1$ which implies $\mathbf{u}^{\star} \in \partial \|\Bar{\thetab}\|$. The required sequence is $t_{n_\ell}$.
\end{proof}

We are now ready to state and prove our main result.
\begin{theorem}\label{thm:main_result}
    For steepest flow (\Eqref{eq:steep_flow}) on the exponential loss, under Assumptions (\ref{ass:1}), (\ref{ass:2}), (\ref{ass:3}), any limit point $\Bar{\thetab}$ of $\left\{ \frac{\thetab_t}{\|\thetab_t\|}\right\}_{t \geq 0}$ is along the direction of a KKT point of the following optimization problem:
    \begin{align}
        \begin{split}
            &\min_{\thetab \in \mathbb{R}^p} \frac{1}{2}\|\thetab\|^2 \\
            \text{s.t. }&  y_i f(\x_i;\thetab) \geq 1, \; \forall i \in [m].
        \end{split}
    \end{align}
\end{theorem}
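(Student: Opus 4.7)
The plan is to exhibit a positive scalar multiple of $\bar\thetab$ as a KKT point of~\eqref{eq:min_norm_problem_2}, passing to the limit along the sequence supplied by Lemma~\ref{lemma:gradient_sequence}. Fix $t_n\to\infty$ with $\hat\thetab_{t_n}:=\thetab_{t_n}/\|\thetab_{t_n}\|\to\bar\thetab$ and $-\mathbf{g}_{t_n}^\star/\|\mathbf{g}_{t_n}^\star\|_\star\to\mathbf{u}^\star\in\partial\|\bar\thetab\|$; note $\mathbf{u}^\star\neq 0$ since $\|\mathbf{u}^\star\|_\star=1$ by Lemma~\ref{lem:subgrad_norm}. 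The soft-margin limit $\gamma_*:=\lim_t\Tilde\gamma(\thetab_t)$ exists and is positive by the monotonicity in Theorem~\ref{thm:margin_monotonicity} (cf.\ the proof of Lemma~\ref{lemma:gradient_sequence}), and $\gamma_*=\min_i y_i f(\x_i;\bar\thetab)$ by Corollary~\ref{cor:soft_hard_convergence} together with $L$-homogeneity. The scaling $\tilde\thetab:=c^\star\bar\thetab$ with $c^\star=\gamma_*^{-1/L}$ is then feasible (Lemma~\ref{lem:feasible}), with active constraints exactly on the support set $I^\star:=\{i:y_i f(\x_i;\bar\thetab)=\gamma_*\}$. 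This $\tilde\thetab$ is the KKT candidate.

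The heart of the argument is to express $\mathbf{u}^\star$ as a non-negative combination of elements of $\partial f(\x_i;\bar\thetab)$. By the chain-rule inclusion (Theorem~\ref{thm:chain_rule_inclusion}), $\mathbf{g}_{t_n}^\star=-\sum_i e^{-y_i f(\x_i;\thetab_{t_n})}y_i\mathbf{h}_i^{(n)}$ with $\mathbf{h}_i^{(n)}\in\partial f(\x_i;\thetab_{t_n})$. The $(L-1)$-positive homogeneity of $\partial f(\x_i;\cdot)$ (a consequence of Assumption~\ref{ass:2}) rewrites $\mathbf{h}_i^{(n)}=\|\thetab_{t_n}\|^{L-1}\widetilde{\mathbf{h}}_i^{(n)}$ with $\widetilde{\mathbf{h}}_i^{(n)}\in\partial f(\x_i;\hat\thetab_{t_n})$; dividing through by $\|\mathbf{g}_{t_n}^\star\|_\star$ yields
\begin{equation*}
-\frac{\mathbf{g}_{t_n}^\star}{\|\mathbf{g}_{t_n}^\star\|_\star}=\sum_{i=1}^m\mu_i^{(n)}\,y_i\widetilde{\mathbf{h}}_i^{(n)},\qquad \mu_i^{(n)}:=\frac{e^{-y_if(\x_i;\thetab_{t_n})}\,\|\thetab_{t_n}\|^{L-1}}{\|\mathbf{g}_{t_n}^\star\|_\star}\geq 0.
\end{equation*}
Two quantitative controls on the $\mu_i^{(n)}$ drive the proof. \emph{Uniform boundedness:} pairing $\|\mathbf{g}_{t_n}^\star\|_\star\geq \innerprod{\thetab_{t_n}}{-\mathbf{g}_{t_n}^\star}/\|\thetab_{t_n}\|$ with~\eqref{eq:homog_ineq} gives $\|\mathbf{g}_{t_n}^\star\|_\star\cdot\|\thetab_{t_n}\|\geq L\cL(\thetab_{t_n})\Tilde\gamma(\thetab_{t_n})\|\thetab_{t_n}\|^L$, and using $e^{-y_if(\x_i;\thetab_{t_n})}\leq \cL(\thetab_{t_n})$ one obtains $\mu_i^{(n)}\leq 1/(L\Tilde\gamma(\thetab_{t_n}))\to 1/(L\gamma_*)$. \emph{Concentration on $I^\star$:} for $i\notin I^\star$ and any $k\in I^\star$, $\mu_i^{(n)}/\mu_k^{(n)}=\exp\bigl(-\|\thetab_{t_n}\|^L(y_if(\x_i;\hat\thetab_{t_n})-y_kf(\x_k;\hat\thetab_{t_n}))\bigr)\to 0$, since the difference inside the exponent converges to the strictly positive gap $y_i f(\x_i;\bar\thetab)-\gamma_*$ while $\|\thetab_{t_n}\|^L\to\infty$ by Lemma~\ref{lem:loss_and_norm_at_inf}; hence $\mu_i^{(n)}\to 0$ for $i\notin I^\star$.

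Along a further subsequence, compactness yields $\mu_i^{(n)}\to\mu_i^\infty\geq 0$ with $\mu_i^\infty=0$ for $i\notin I^\star$; local Lipschitzness of $f$ bounds the $\widetilde{\mathbf{h}}_i^{(n)}$, and outer semi-continuity (closed graph) of the Clarke subdifferential promotes any limit $\widetilde{\mathbf{h}}_i^{(n)}\to\mathbf{h}_i^\infty$ into $\partial f(\x_i;\bar\thetab)$. Taking limits, $\mathbf{u}^\star=\sum_{i\in I^\star}\mu_i^\infty\,y_i\mathbf{h}_i^\infty$. Using scale invariance of $\partial\|\cdot\|$ together with $(L-1)$-homogeneity of $\partial f$, and setting $\lambda_i:=(c^\star)^{2-L}\mu_i^\infty\geq 0$, this identity converts to $c^\star\mathbf{u}^\star=\sum_i\lambda_i y_i(c^\star)^{L-1}\mathbf{h}_i^\infty$, which is exactly the inclusion $0\in\partial\tfrac12\|\tilde\thetab\|^2-\sum_i\lambda_i\partial\bigl(y_if(\x_i;\tilde\thetab)\bigr)$ required by Definition~\ref{def:kkt}. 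Complementary slackness holds because $\lambda_i=0$ outside $I^\star$ while $y_if(\x_i;\tilde\thetab)=1$ on $I^\star$, and feasibility is Lemma~\ref{lem:feasible}. Thus $\tilde\thetab=c^\star\bar\thetab$ is a KKT point along the direction of $\bar\thetab$, as claimed.

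The main obstacle is the controlled passage to the limit of the multipliers: both the uniform bound (which buys compactness) and the concentration on $I^\star$ (which buys complementary slackness) rest on $\gamma_*>0$ and $\|\thetab_{t_n}\|^L\to\infty$, while non-triviality of the limiting multipliers is automatic from $\mathbf{u}^\star\neq 0$. Compared to the ICLR~2025 version, which had to detour through sequences of approximate KKT points and was therefore limited to norms whose square is smooth, this direct-limit strategy works for \emph{any} norm, covering in particular $\|\cdot\|_1$ (coordinate descent), $\|\cdot\|_\infty$ (sign descent), and the modular norm discussed in Section~\ref{ssec:main_result}.
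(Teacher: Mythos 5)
Your proof is correct, and up to the decisive final step it shares the paper's skeleton: you take the sequence from Lemma~\ref{lemma:gradient_sequence}, decompose the minimum-norm subgradient via the chain rule, rescale subgradients by homogeneity (Theorem B.2(a) in \citet{LyLi20}), bound the multipliers by pairing the dual-norm inequality with \Eqref{eq:homog_ineq}, and use local boundedness plus the closed graph of the Clarke subdifferential to pass to the limit. Where you genuinely diverge is in certifying stationarity and complementary slackness. The paper fixes the candidate $\hat\thetab$, shows it is an $(\varepsilon_\ell,\delta_\ell)$-approximate KKT point (Definition~\ref{def:approx_kkt}) with vanishing errors — in particular bounding the slackness term by $\frac{m}{e\tilde\gamma(t_0)^{2/L}L\log\frac1\cL}$ — and then invokes Theorem C.4 in \citet{LyLi20} (a Dutta-et-al.-type result resting on the MFCQ of Lemma~\ref{lem:mfcq}) to upgrade to an exact KKT point. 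You instead prove that the multipliers of inactive constraints vanish outright, via the ratio $\mu_i^{(n)}/\mu_k^{(n)}=\exp\bigl(-\|\thetab_{t_n}\|^L\,(y_if(\x_i;\hat\thetab_{t_n})-y_kf(\x_k;\hat\thetab_{t_n}))\bigr)\to0$, driven by the strictly positive margin gap and $\|\thetab_{t_n}\|\to\infty$ (Lemma~\ref{lem:loss_and_norm_at_inf}); the limiting multipliers then satisfy exact complementary slackness and exact stationarity, so you never need the approximate-KKT apparatus or any constraint qualification. Your route is more self-contained and makes the support of the limiting multipliers (the active set $I^\star$) explicit; the paper's route reuses established machinery and its $\delta_\ell$ bound doubles as a quantitative finite-time estimate of the kind exploited in Appendix~\ref{app:bregman_measure}. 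Two small points to tighten: Lemma~\ref{lem:feasible} is stated for the iterates $\thetab_t$, so feasibility of $c^\star\Bar{\thetab}$ should be justified by the same one-line homogeneity computation at $\Bar{\thetab}$ (valid since $\gamma_*>0$); and the identification $\gamma_*=\min_i y_i f(\x_i;\Bar{\thetab})$ should be spelled out as continuity of $f$ along $\hat\thetab_{t_n}\to\Bar{\thetab}$ combined with Corollary~\ref{cor:soft_hard_convergence} — precisely the facts the paper also uses when it writes $\lim_{t\to\infty}\gamma(t)=\lim_{t\to\infty}\tilde\gamma(t)=\tilde\gamma_\infty$.
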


\begin{proof}
\sloppy

For convenience, denote 
\begin{equation}\label{eq:qmin_def}
q_{\mathrm{min}}(\thetab) = \min_{i \in [m]}y_i f(\mathbf{x}_i;\thetab) = \|\thetab\|^L\gamma(t)    
\end{equation}
We prove that
\begin{equation}
    \hat \thetab := \frac{\Bar{\thetab}}{q_{\mathrm{min}}(\Bar{\thetab})^{1/L}}
\end{equation}
is a KKT point, by showing that $\hat \thetab$ is a $(\varepsilon_\ell, \delta_\ell)$-approximate KKT point (Def.~\ref{def:approx_kkt})
for sequences $\varepsilon_\ell \rightarrow 0, \delta_\ell \rightarrow 0$.

From Lemma \ref{lemma:gradient_sequence}, let $t_n$ be a sequence with $\frac{\thetab_{t_n}}{\|\thetab_n\|} \rightarrow \Bar{\thetab}$ and $-\frac{\mathbf{g}^{\star}_{t_n}}{\left\|{\mathbf{g}^{\star}_{t_n}}\right\|_{\star}} \rightarrow \mathbf{u}^{\star}$ for some $\mathbf{u}^{\star} \in \partial \|\Bar{\thetab}\|$. Denote $\tilde \thetab _{t_n} = \frac{\thetab _{t_n}}{q_{\mathrm{min}}(\thetab _{t_n})^{1/L}}$. Notice that by continuity of $f$,
\begin{equation}
    \lim_{n \rightarrow \infty} \tilde \thetab _{t_n} = \lim_{n \rightarrow \infty}\frac{ \thetab_{t_n}}{\Big(\min_{i \in [m]}y_i f(\mathbf{x}_i;  \thetab_{t_{n}})\Big)^{1/L}} = \lim_{n \rightarrow \infty}\frac{ \frac{\thetab_{t_n}}{\|\thetab_{t_n}\|}}{\frac{1}{\left\|\theta_{t_n}\right\|}\Big(\min_{i \in [m]}y_i f(\mathbf{x}_i;  \thetab_{t_{n}})\Big)^{1/L}} = \hat \thetab.
\end{equation}

By the chain rule, denote $\mathbf{h}_i^{(t_n)} \in \partial f(\mathbf{x}_i; \thetab_{t_n})$ so that 
\begin{equation}
    \mathbf{g}_{t_n}^{\star} = - \sum_{i=1}^m e^{-y_i f(\mathbf{x}_i;\thetab_{t_n})}y_i \mathbf{h}_i^{(t_n)}
\end{equation}
Define $\tilde{\mathbf{h}}_i^{(t_n)} = q_{\mathrm{min}}(\thetab_{t_n})^{\frac1L - 1}\mathbf{h}_i^{(t_n)}$. Since $\tilde \thetab _{t_n} = q_{\mathrm{min}}(\thetab_{t_n})^{-\frac1L}\thetab_{t_n}$, it holds that $\tilde{\mathbf{h}}_i^{(t_n)} =q_{\mathrm{min}}(\thetab_{t_n})^{(-\frac 1L)(L-1)}\mathbf{h}_i^{(t_n)} \in \partial f(\mathbf{x}_i; \tilde{\thetab}_{t_n})$ by Theorem B.2(a) (sub-derivatives of homogeneous functions) in~\citep{LyLi20}. Since $f$ is locally Lipschitz, it holds that $\partial f(\mathbf{x}_i;\cdot)$ is bounded and has a closed graph around $\hat \thetab$, so for every $i \in [m]$ there exists a subsequence $t_\ell$ of $t_n$ with $\tilde{\mathbf{h}}_i^{(t_\ell)} \rightarrow \hat{\mathbf{h}}_i$ for some $\hat{\mathbf{h}}_i \in \partial f(\mathbf{x}_i; \hat \thetab)$. By iteratively choosing subsequences for $i=1,..., m$ we may assume that $\tilde{\mathbf{h}}_i^{(t_\ell)} \rightarrow \hat{\mathbf{h}}_i$ for all $i \in [m]$.

Set $\lambda_{\ell, i} = \frac{\|\thetab_{t_\ell}\|}{\|\mathbf{g}_{t_\ell}^{\star}||_{\star}}q^{1 - \frac2L}_{\mathrm{min}}e^{-y_i f(\mathbf{x}_i;\thetab_{t_\ell})} \geq 0$. Plugging in,
\begin{equation}
    \sum_{i=1}^m\lambda_{\ell, i} y_i \tilde{ \mathbf{h}}_i^{(t_\ell)} = \sum_{i=1}^m\lambda_{\ell, i}q_{\mathrm{min}}^{\frac1L-1}y_i \mathbf{h}_i^{(t_\ell)} = \frac{\|\thetab_{t_\ell}\|}{q^{1/L}_{\mathrm{min}}\|\mathbf{g}_{t_\ell}^{\star}\|_{\star}} \sum_{i=1}^m e^{-y_i f(\mathbf{x}_i;\thetab_{t_\ell})}y_i \mathbf{h}_i^{(t_\ell)} = -\left\|\tilde{ \thetab}_{t_\ell}\right\|\frac{\mathbf{g}_{t_\ell}^{\star}}{\left\|\mathbf{g}_{t_\ell}^{\star}\right\|_{\star}}
\end{equation}

Notice that since $\mathbf{u}^{\star} \in \partial \left\|\Bar{\thetab} \right\| = \partial \left\|\hat \thetab\right\|$, we have $\left\|\hat \thetab\right\|\mathbf{u}^{\star} \in \partial \frac12 \left\|\hat \thetab \right\|^2$ as a primal vector, so we evaluate the following:

\begin{equation}
    \varepsilon_\ell := \left\|\sum_{i=1}^m\lambda_{\ell, i}y_i\hat{\mathbf{h}}_i - \left\|\hat \thetab\right\|\mathbf{u}^{\star}\right\|_2 \leq \left\|\sum_{i=1}^m\lambda_{\ell, i}y_i \left(\hat{\mathbf{h}}_i - \tilde{\mathbf{h}}_i ^{(t_\ell)}\right)\right\|_2 + \left\|\sum_{i=1}^m\lambda_{\ell, i} y_i \tilde{\mathbf{h}}_i ^{(t_\ell)} - \left\|\hat \thetab\right\|\mathbf{u}^{\star}\right\|_2
\end{equation}

The second term goes to 0 since $-\frac{\mathbf{g}^{\star}_{t_\ell}}{\|\mathbf{g}^{\star}_{t_\ell}\|_{\star}} \rightarrow \mathbf{u}^{\star}$. To show the first term goes to 0, we use $\tilde{\mathbf{h}}_i^{(t_\ell)} \rightarrow \hat{\mathbf{h}}_i$ and show $\lambda_{\ell, i}$ are bounded. Recall from Theorem~\ref{thm:margin_monotonicity} that $\left\|{g^{\star}_t}\right\|_{\star} \geq \frac{1}{\left\|\theta_t\right\|}L \mathcal{L} \log \left(\frac{1}{\mathcal{L}}\right)$, and by definition $\log \frac{1}{\cL(\thetab_t)} = \tilde \gamma(\thetab_t)\left \lVert \thetab_t \right \lVert^L$. Also, recall that $\lim_{t \to \infty }\gamma (t)=\lim_{t \to \infty } \tilde\gamma (t) = \tilde \gamma_\infty$. Altogether:
\begin{equation}
    \begin{split}
        \left|\lambda_{\ell, i}\right| & \leq \left\|\thetab_{t_\ell}\right\|^2 \frac{1}{L \cL \log (\frac 1 \cL)} \cdot \left\|\thetab_{t_\ell}\right\|^{L(1-\frac 2L)}\cdot \gamma(t_\ell)^{1-\frac 2L} \cL = \frac{\gamma(t_\ell)^{1-\frac 2L}}{L} \frac{\left\|\thetab_{t_\ell}\right\|^L}{\log \frac 1 \cL} \\
        & = \frac{\gamma(t_\ell)^{1-\frac{2}{L}}}{L} \cdot \frac{\left\|\thetab_{t_\ell}\right\|^L}{\left\|\thetab_{t_\ell}\right\|^L\tilde \gamma({t_\ell})} \\
        & \overset{\ell \rightarrow \infty}\longrightarrow \frac{\tilde \gamma_\infty^{-\frac 2L}}{L} < \infty.
    \end{split}
\end{equation}

For $\delta_\ell$, 
\begin{equation}
    \delta_\ell = \left|\sum_{i=1}^m\lambda_{\ell, i}(y_if(\mathbf{x}_i; \hat{\thetab}) - 1)\right| \leq \left|\sum_{i=1}^m\lambda_{\ell, i}(y_if(\mathbf{x}_i; \hat{\thetab}) - y_i f(\mathbf{x}_i; \tilde{\thetab}_{t_\ell}))\right| + \left|\sum_{i=1}^m\lambda_{\ell, i}(y_if(\mathbf{x}_i;  \tilde{\thetab}_{t_\ell}) - 1)\right|
\end{equation}
The first term goes to 0 from continuity of $f$ and boundedness of $\lambda_{\ell_,i}$. As for the second term, note that from Lemma \ref{lem:feasible}, each summand is non-negative. Plugging in,
\begin{equation}
    \begin{split}
        \sum_{i=1}^m \lambda_{\ell,i} \big(y_if(\mathbf{x}_i; \tilde{\thetab}_{t_\ell})-1\big) &=  \frac{\left\|\thetab_{t_\ell}\right\|}{\left\| \mathbf{g}_{t_\ell}^{\star}\right\|_{\star}}\sum_{i=1}^mq_{\mathrm{min}}^{1 - \frac2L}e^{-y_if(\mathbf{x}_i;\thetab _{t_\ell})}\Big(\frac{y_if(\mathbf{x}_i;\thetab _{t_\ell})}{q_{\mathrm{min}}}-1\Big) \\
        & = \frac{\left\|\thetab_{t_\ell}\right\|}{q^{2/L}_{\mathrm{min}}\left\|\mathbf{g}_{t_\ell}^{\star}\right\|_{\star}}\sum_{i=1}^me^{-y_if(\mathbf{x}_i;\thetab _{t_\ell})}\left(y_if(\mathbf{x}_i;\thetab_{t_\ell}) - q_{\mathrm{min}}\right)
    \end{split}
\end{equation}
Again we use $\left\|\mathbf{g}_{t_\ell}^{\star}\right\|_{\star} \geq \frac{L}{\left\|\thetab_{t_\ell}\right\|}\cL \log \frac{1}{\cL} \geq \frac{L}{||\thetab_{t_\ell}||}e^{-q_{\mathrm{min}}} \log \frac1 \cL$:
\begin{equation}
    \begin{split}
        \sum_{i=1}^m \lambda_{\ell,i}\big(y_i f(\mathbf{x}_i; \tilde{\thetab}_{t_\ell})-1\big) & \leq \frac{\left\|\thetab_{t_\ell}\right\|^2}{q^{2/L}_{\mathrm{min}}Le^{-q_{\mathrm{min}}}\log \frac 1 \cL} \sum_{i=1}^me^{-y_if(\mathbf{x}_i;\thetab_{t_\ell})}\big(y_if(\mathbf{x}_i; \thetab_{t_\ell}) - q_{\mathrm{min}}\big) \\
        & = \frac{\left\|\thetab_{t_\ell}\right\|^2}{q^{2/L}_{\mathrm{min}}L\log \frac 1 \cL} \sum_{i=1}^me^{-(y_if(\mathbf{x}_i;\thetab_{t_\ell}) - q_{\mathrm{min}})}\big(y_if(\mathbf{x}_i; \thetab_{t_\ell}) - q_{\mathrm{min}}\big) \\
         & \underset{\text{\Eqref{eq:qmin_def}}}{=} \frac{1}{ \gamma (t_\ell)^{2/L}L\log \frac 1 \cL} \sum_{i=1}^me^{-(y_if(\mathbf{x}_i;\thetab_{t_\ell}) - q_{\mathrm{min}})}\big(y_if(\mathbf{x}_i; \thetab_{t_\ell}) - q_{\mathrm{min}}\big) \\
        & \underset{\text{Lemma }\ref{lem:smooth_hard_margin}}{\leq} \frac{1}{\tilde \gamma (t)^{2/L}L\log \frac 1 \cL} \sum_{i=1}^me^{-(y_if(\mathbf{x}_i;\thetab_{t_\ell}) - q_{\mathrm{min}})}\big(y_if(\mathbf{x}_i; \thetab_{t_\ell}) - q_{\mathrm{min}}\big)  \\
        & \leq \frac{m}{e\tilde \gamma (t_0)^{2/L}L\log \frac 1 \cL} \\ &\rightarrow 0
    \end{split}
\end{equation}
In the last inequality we used monotonicity of $\tilde \gamma(t)$, and the fact $\forall z \geq0 :z e^{-z} \leq 1/e$. The limit holds since $\cL \overset{t \rightarrow \infty}{\longrightarrow} 0$ and $t_\ell \rightarrow \infty$. By Theorem C.4 in \cite{LyLi20} (which is itself based on a result due to \cite{Dut+13}), we get that $\frac{\Bar{\thetab}}{\left(\min_{i \in [m]} y_i f(\x_i; \Bar{\thetab})\right)^{\frac{1}{L}}}$ is a KKT point of~\Eqref{eq:min_norm_problem_2}.
\end{proof}


\subsection{Generalization to other losses}\label{ssec:gen_loss}

The previous results can be generalized to any loss with exponential tails. In particular, let us proceed to the following definition:
\begin{definition}
    Let $\Phi : \mathbb{R} \to \mathbb{R}$. Assume that $\cL (\thetab) = \sum_{i=1}^m e^{- \Phi\left(y_i f(\x_i; \thetab)\right)}, \thetab \in \mathbb{R}^p$, where $f : \mathbb{R}^d \to \mathbb{R}, y_i \in \{\pm 1\}$. We call the function $l : \mathbb{R} \to \mathbb{R}, l(u) : = e^{- \Phi(u)}$, exponentially tailed, if the following conditions hold:
    \begin{enumerate}
        \item[(i)] $\Phi$ is continuously differentiable.
        \item[(ii)] $\Phi^\prime(u) > 0$ for all $u \in \mathbb{R}$.
        \item[(iii)] The function $g(u) = \Phi^\prime(u) u$ is non-decreasing in $[0, \infty)$.
    \end{enumerate}
\end{definition}
Notice that the definition above covers the exponential loss for $\Phi(u) = u$ and the logistic loss for $\Phi(u) = - \log \log (1+e^{-u})$.
To accommodate different loss functions, Assumption \ref{ass:3} needs to be adjusted as follows:
\begin{itemize}
    \item There is a $t_0 > 0$, such that $\mathcal{L}(\thetab_{t_0}) < e^{- \Phi(0)}$.
\end{itemize}
See Section A in \citep{LyLi20} for a more general, albeit technical, definition that allows the extension of the full analysis to general exponentially-tailed losses.

Under these conditions, we can define the soft margin as follows:
\begin{equation*}
    \Tilde{\gamma} = \frac{l^{-1}(\cL)}{\|\thetab\|^L} = \frac{\Phi^{-1}\left( \log \frac{1}{\cL} \right)}{\|\thetab\|^L},
\end{equation*}
and prove a strict generalization of Theorem~\ref{thm:margin_monotonicity_main}.

\begin{theorem}[Soft margin increases - general loss function]\label{thm:margin_monotonicity_loss_general}
For almost any $t > t_0$, it holds: 
$$\frac{d \log \Tilde{\gamma}}{d t} \geq L \left\lVert \frac{d \thetab}{d t} \right\rVert^2 \left( \frac{\left(\Phi^{-1}\right)^\prime \left( \log \frac{1}{\cL (\thetab_t)}\right)}{L \cL(\thetab_t) \Phi^{-1}\left(\log \frac{1}{\cL(\thetab_t)}\right)} - \frac{1}{\|\thetab_t\|\left\lVert \frac{d \thetab}{d t} \right\rVert}\right) \geq 0.$$
\end{theorem}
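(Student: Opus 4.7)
The plan is to follow the same skeleton as the proof of Theorem~\ref{thm:margin_monotonicity_main}, only replacing each instance of the exponential-specific algebra with its general-loss counterpart, which naturally brings in the new quantities $\Phi^{-1}(\log\frac{1}{\cL})$ and $(\Phi^{-1})'(\log\frac{1}{\cL})$. Writing $\log \Tilde{\gamma}(\thetab_t) = \log \Phi^{-1}\bigl(\log \frac{1}{\cL(\thetab_t)}\bigr) - L \log \|\thetab_t\|$, I would differentiate in time, using Theorem~\ref{thm:chain_rule} (valid by assumption (\ref{ass:1}) and continuous differentiability of $\Phi$), together with $\frac{d\cL}{dt} = -\|\frac{d\thetab}{dt}\|^2$ from Proposition~\ref{prop:steep_flow_duality}. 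The first term becomes
\[
\frac{d}{dt} \log \Phi^{-1}\!\left(\log \tfrac{1}{\cL}\right) \;=\; \left\|\tfrac{d\thetab}{dt}\right\|^2 \cdot \frac{(\Phi^{-1})'\!\left(\log \tfrac{1}{\cL}\right)}{\cL\, \Phi^{-1}\!\left(\log \tfrac{1}{\cL}\right)},
\]
while the second term is bounded exactly as in the original proof via $\mathbf{n}_t\in\partial\|\thetab_t\|$ with $\|\mathbf{n}_t\|_\star \leq 1$ (Lemma~\ref{lem:subgrad_norm}). Collecting these two pieces already produces the first inequality of the statement.

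The substantive new step is the lower bound that certifies non-negativity, which replaces~\Eqref{eq:homog_ineq}. I would compute, via the chain-rule decomposition of a minimum-norm subderivative and Euler's theorem for homogeneous functions,
\[
\innerprod{\thetab_t}{-\mathbf{g}_t^\star} \;=\; L \sum_{i=1}^m e^{-\Phi(y_i f(\x_i;\thetab_t))}\, \Phi'(y_i f(\x_i;\thetab_t))\, y_i f(\x_i;\thetab_t) \;=\; L \sum_{i=1}^m e^{-\Phi(y_i f(\x_i;\thetab_t))} g(y_i f(\x_i;\thetab_t)),
\]
where $g(u)=\Phi'(u)u$ is the function assumed non-decreasing on $[0,\infty)$ by (iii). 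The adjusted realizability assumption $\cL(\thetab_{t_0}) < e^{-\Phi(0)}$ together with monotonicity of $\cL$ (Proposition~\ref{prop:steep_flow_duality}) and strict monotonicity of $\Phi$ from (ii) guarantees that every margin $y_i f(\x_i;\thetab_t)$ stays strictly positive for $t > t_0$, so $g$ may be applied on its monotonic regime.

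From $e^{-\Phi(y_j f(\x_j;\thetab_t))} \leq \cL(\thetab_t)$ for any $j$, picking the minimizer index and inverting $\Phi$ gives $\min_j y_j f(\x_j;\thetab_t) \geq \Phi^{-1}(\log \tfrac{1}{\cL})$. Applying $g$ non-decreasing twice — once to pull out the common factor $g(\min_j y_j f_j)$ from each summand and once to lower-bound it by $g(\Phi^{-1}(\log\tfrac{1}{\cL}))$ — yields, using the identity $\Phi'(\Phi^{-1}(v)) = 1/(\Phi^{-1})'(v)$,
\[
\innerprod{\thetab_t}{-\mathbf{g}_t^\star} \;\geq\; L\, \cL(\thetab_t)\, \frac{\Phi^{-1}(\log \tfrac{1}{\cL(\thetab_t)})}{(\Phi^{-1})'(\log \tfrac{1}{\cL(\thetab_t)})}.
\]
Combining this with $\|\thetab_t\|\,\|\mathbf{g}_t^\star\|_\star \geq \innerprod{\thetab_t}{-\mathbf{g}_t^\star}$ (definition of dual norm) and the duality $\|\mathbf{g}_t^\star\|_\star = \|\frac{d\thetab}{dt}\|$ from~\Eqref{eq:norms_equal} immediately delivers the desired comparison between the two terms inside the parenthesis, establishing $\frac{d\log\Tilde{\gamma}}{dt}\geq 0$.

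The one spot that requires care is the positivity claim $y_i f(\x_i;\thetab_t) > 0$ for all $i$ and all $t > t_0$: this is what licenses invoking monotonicity of $g$, which is only assumed on $[0,\infty)$. This is essentially automatic from the adjusted version of (\ref{ass:3}) and the fact that $\cL$ is non-increasing along steepest flow, but it is the only place where the proof genuinely uses the specific threshold $e^{-\Phi(0)}$ rather than any other value. Apart from this technical sanity check, the argument is a mechanical generalization: conditions (i)--(ii) ensure differentiability and invertibility of $\Phi$, condition (iii) plays exactly the role that the trivial identity $e^{-u}\cdot u$'s monotonicity played implicitly in the exponential case, and the rest of the computation proceeds verbatim as in Theorem~\ref{thm:margin_monotonicity}.
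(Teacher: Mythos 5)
Your proof is correct and follows essentially the same route as the paper's: differentiate $\log\Tilde{\gamma}$, bound the norm term via $\mathbf{n}_t\in\partial\|\thetab_t\|$ with $\|\mathbf{n}_t\|_\star\le 1$, lower-bound $\innerprod{\thetab_t}{-\mathbf{g}_t^\star}$ via the chain-rule decomposition, Euler's theorem, the monotonicity of $g(u)=\Phi'(u)u$, the bound $\min_i y_i f(\x_i;\thetab_t)\ge\Phi^{-1}\bigl(\log\frac{1}{\cL}\bigr)$ and the identity $\Phi'(\Phi^{-1}(v))=1/(\Phi^{-1})'(v)$, then close with the duality $\|\mathbf{g}_t^\star\|_\star=\bigl\lVert\frac{d\thetab}{dt}\bigr\rVert$. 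Your explicit verification that all margins stay positive after $t_0$ (so that monotonicity of $g$ on $[0,\infty)$ may legitimately be invoked) is a point the paper leaves implicit, and your inequality direction $y_i f(\x_i;\thetab_t)\ge\Phi^{-1}\bigl(\log\frac{1}{\cL}\bigr)$ is the correct one — the paper's parenthetical states it with the sign reversed, evidently a typo.
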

\begin{proof}
    Let $\mathbf{n}_t \in \partial \|\thetab_t\|$. We have:
    \begin{equation}\label{eq:margin_lemma_eq1_general}
        \begin{split}
            \frac{d \log \Tilde{\gamma}}{d t} & = \frac{d}{d t} \Phi^{-1}\left( \log \frac{1}{\cL (\thetab_t)}\right) - L \frac{d}{d t} \log \|\thetab_t\| \\
            & = \frac{d}{d t} \Phi^{-1}\left( \log \frac{1}{\cL (\thetab_t)}\right) - L \innerprod{\frac{\mathbf{n}_t}{\|\thetab_t\|}}{\frac{d \thetab}{d t}} \;\;\;\;\; (\text{Chain rule})  \\
            & \geq \frac{d}{d t} \Phi^{-1}\left( \log \frac{1}{\cL (\thetab_t)}\right) - L \frac{\left\lVert \frac{d \thetab}{d t} \right\rVert}{\|\thetab_t\|} \;\;\;\;\; (\text{definition of dual norm and $\|\mathbf{n}_t \|_\star \leq 1$}) \\
            & = - \frac{d \cL (\thetab_t)}{d t} \frac{\left(\Phi^{-1}\right)^\prime \left( \log \frac{1}{\cL (\thetab_t)}\right)}{\cL(\thetab_t) \Phi^{-1}\left(\log \frac{1}{\cL(\thetab_t)}\right)} - L \frac{\left\lVert \frac{d \thetab}{d t} \right\rVert}{\|\thetab_t\|} \;\;\;\;\; (\text{Chain rule}) \\
            & = \left\lVert \frac{d \thetab}{d t} \right\rVert^2 \left( \frac{\left(\Phi^{-1}\right)^\prime \left( \log \frac{1}{\cL (\thetab_t)}\right)}{\cL(\thetab_t) \Phi^{-1}\left(\log \frac{1}{\cL(\thetab_t)}\right)} - \frac{L}{\|\thetab_t\|\left\lVert \frac{d \thetab}{d t} \right\rVert}\right) \;\;\;\;\; (\text{\Eqref{eq:loss_descent}}).
        \end{split}
    \end{equation}
    But, the first term inside the parenthesis can be related to the second one via the following calculation. Recall that, by the chain rule for locally Lipschitz functions (Theorem~\ref{thm:chain_rule}), for any $\mathbf{g}_t \in \partial \cL (\thetab_t)$ there exist $\mathbf{h}_1 \in \partial y_1 f(\x_1; \thetab_t), \ldots, \mathbf{h}_m \in \partial y_m f(\x_m; \thetab_t)$ such that $\mathbf{g}_t = \sum_{i = 1}^m e^{- \Phi(y_i f(\x_i ; \thetab_t))} \Phi^\prime(y_i f(\x_i ; \thetab_t))) \mathbf{h}_i$. Thus, for a minimum norm subderivative $\mathbf{g}_t^\star$, we have:
    \begin{equation}
        \begin{split}
            \innerprod{\thetab_t}{- \mathbf{g}_t^\star} & = \innerprod{\thetab_t}{\sum_{i = 1}^m e^{- \Phi(y_i f(\x_i ; \thetab_t))} \Phi^\prime(y_i f(\x_i ; \thetab_t)) \mathbf{h}_i^\star} \\
            & = \sum_{i = 1}^m e^{- \Phi(y_i f(\x_i ; \thetab_t))} \Phi^\prime(y_i f(\x_i ; \thetab_t)) \innerprod{\thetab_t}{\mathbf{h}_i^\star} \\
            & = L \sum_{i = 1}^m  e^{- \Phi(y_i f(\x_i ; \thetab_t))} \Phi^\prime(y_i f(\x_i ; \thetab_t)) y_i f(\x_i ; \thetab_t),
        \end{split}
    \end{equation}
    where the last equality follows from Euler's theorem for homogeneous functions (whose generalization for subderivatives can be found in Theorem B.2 in \cite{LyLi20}). But, now observe that as per assumption, $u \to \Phi^\prime(u) u$ is non-decreasing and this last term can be lower bounded as:
    \begin{equation}
        \begin{split}
            \innerprod{\thetab_t}{- \mathbf{g}_t^\star} \geq L \sum_{i = 1}^m e^{- \Phi(y_i f(\x_i ; \thetab_t))} \Phi^\prime \left( \Phi^{-1} \left( \log \frac{1}{\cL (\thetab_t)} \right) \right) \Phi^{-1} \left( \log 
            \frac{1}{\cL (\thetab_t)}\right),
        \end{split}
    \end{equation}
    where we used the fact $y_i f(\x_i ; \thetab_t) \leq \Phi^{-1} \left( \log \frac{1}{\cL(\thetab_t)} \right)$ for all $i \in [m]$ (by the monotonicity of $\Phi$ the definition of $\cL$). Leveraging the fundamental property between the derivative of a function and its inverse's, we further get:
    \begin{equation}\label{eq:homog_ineq_gen}
        \begin{split}
            \innerprod{\thetab_t}{- \mathbf{g}_t^\star} \geq L \sum_{i = 1}^m e^{- \Phi(y_i f(\x_i ; \thetab_t))} \frac{\Phi^{-1} \left( \log 
            \frac{1}{\cL (\thetab_t)}\right)}{\left( \Phi^{-1} \right)^\prime \left( \log \frac{1}{\cL(\thetab_t} \right)} = L \cL (\thetab_t) \frac{\Phi^{-1} \left( \log 
            \frac{1}{\cL (\thetab_t)}\right)}{\left( \Phi^{-1} \right)^\prime \left( \log \frac{1}{\cL(\thetab_t} \right)}.
        \end{split}
    \end{equation}
    We have made the first term of~\Eqref{eq:margin_lemma_eq1_general} appear. By plugging~\Eqref{eq:homog_ineq_gen} into~\Eqref{eq:margin_lemma_eq1_general}, we get:
    \begin{equation}
        \begin{split}
            \frac{d \log \Tilde{\gamma}}{d t} & \geq \left\lVert \frac{d \thetab}{d t} \right\rVert^2 \left( \frac{L}{\innerprod{\thetab_t}{-\mathbf{g}_t^\star}} - \frac{L}{\|\thetab_t\|\left\lVert \frac{d \thetab}{d t} \right\rVert}\right) \\
            & \geq \left\lVert \frac{d \thetab}{d t} \right\rVert^2 \left( \frac{L}{\|\thetab_t\| \|\mathbf{g}_t^\star\|_\star} - \frac{L}{\|\thetab_t\|\left\lVert \frac{d \thetab}{d t} \right\rVert}\right) \;\;\;\;\; (\text{definition of dual norm}).
        \end{split}
    \end{equation}
    Noticing that $\|\mathbf{g}_t^\star\|_\star = \left\lVert \frac{d \thetab}{d t} \right\rVert$ (from Proposition~\ref{prop:steep_flow_duality}) concludes the proof.
\end{proof}

\section{Approximate KKT points at finite time through Bregman divergences}\label{app:bregman_measure}

In this section, we further analyze the late-phase geometric properties of the trajectory and, in particular, explain how the alignment $\innerprod{\frac{\thetab_t}{\| \thetab_t\|}}{\frac{- \mathbf{g}_t^\star }{\|\mathbf{g}_t^\star\|_\star}}$ can capture a notion of proximity to stationarity. For algorithms whose norm squared is smooth, this analysis culminates in showing that the (finite-time) iterates are approximate KKT points of the margin maximization problem.

Our analysis uses core ideas from the theory of conjugate functions and Fenchel's duality.
\begin{definition}[Convex conjugate]
    Let $\psi: \mathbb{R}^p \to \mathbb{R}$.
    We denote by $\psi^\star(\cdot)$ the \textit{convex conjugate} of $\psi(\cdot)$:
    \begin{equation}
        \psi^\star (\bm{\omega}) = \sup_{\thetab \in \mathbb{R}^p} \{ \innerprod{\bm{\omega}}{\thetab} - \psi(\thetab) \}.
    \end{equation}
\end{definition}
We will make use of the following properties of conjugate functions.
\begin{proposition}{(Conjugate subgradient theorem - Theorem 23.5 in \cite{Rock70}, Theorem 4.20 in \cite{Beck17})}\label{prop:grad_duality}
    Let $\psi: \mathbb{R}^p \to \mathbb{R}$ be convex and closed. For any $\thetab^\star \in \partial \psi^\star (\thetab)$, it holds $\partial \psi \left( \thetab^\star \right) \ni  \thetab$.
\end{proposition}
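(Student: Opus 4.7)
The plan is to reduce the statement to the equality case of the Fenchel-Young inequality combined with the biconjugate theorem. The Fenchel-Young inequality states that for any proper $\psi$ and any $\thetab, \thetab^\star$, one has $\psi(\thetab^\star) + \psi^\star(\thetab) \geq \innerprod{\thetab^\star}{\thetab}$, and this inequality becomes an equality precisely when $\thetab \in \partial \psi(\thetab^\star)$. I would first verify this characterization, which is a direct computation from the definition of the subgradient: $\thetab \in \partial \psi(\thetab^\star)$ means $\psi(\mathbf{w}) \geq \psi(\thetab^\star) + \innerprod{\thetab}{\mathbf{w} - \thetab^\star}$ for all $\mathbf{w}$, which rearranges to $\innerprod{\thetab}{\thetab^\star} - \psi(\thetab^\star) \geq \innerprod{\thetab}{\mathbf{w}} - \psi(\mathbf{w})$ for all $\mathbf{w}$; hence $\thetab^\star$ attains the supremum defining $\psi^\star(\thetab)$ and equality holds. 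Conversely, equality forces the subgradient inequality by reversing this calculation.

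Next, I apply the very same equivalence to the function $\psi^\star$, which is itself closed and convex (as a pointwise supremum of affine functions, regardless of assumptions on $\psi$). This yields
\begin{equation*}
\thetab^\star \in \partial \psi^\star(\thetab) \iff \psi^\star(\thetab) + \psi^{\star\star}(\thetab^\star) = \innerprod{\thetab^\star}{\thetab}.
\end{equation*}
The key remaining ingredient is the biconjugate theorem (Fenchel--Moreau), which states that if $\psi : \mathbb{R}^p \to \mathbb{R}$ is convex and closed, then $\psi^{\star\star} = \psi$. Substituting this identity in the equality above gives $\psi(\thetab^\star) + \psi^\star(\thetab) = \innerprod{\thetab^\star}{\thetab}$, which by the equality case of Fenchel--Young applied now to $\psi$ is exactly the statement $\thetab \in \partial \psi(\thetab^\star)$. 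This closes the chain of implications.

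The main obstacle is the biconjugate theorem, whose proof relies on a separating-hyperplane argument (Hahn--Banach) to show that a closed convex function coincides with the pointwise supremum of its affine minorants. The convexity-and-closedness hypothesis on $\psi$ is precisely what makes the Legendre--Fenchel duality symmetric between $\psi$ and $\psi^\star$; without closedness one only has $\psi^{\star\star} \leq \psi$ in general, and the implication from subgradients of $\psi^\star$ back to subgradients of $\psi$ can fail. Since in our intended applications $\psi$ is a squared norm (and thus continuous and convex on all of $\mathbb{R}^p$, hence closed), the hypothesis is satisfied and the theorem applies without further work.
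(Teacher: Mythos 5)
Your argument is correct: the Fenchel--Young equality characterization of subgradients, applied once to $\psi^\star$ and once to $\psi$, linked by the Fenchel--Moreau biconjugation identity $\psi^{\star\star}=\psi$ for closed convex $\psi$, is exactly the standard proof of this fact. The paper itself offers no proof --- it imports the result from Rockafellar (Theorem 23.5) and Beck (Theorem 4.20) --- and your chain of implications is essentially the argument given in those references, so there is nothing genuinely different to compare. One small point worth being explicit about: when you invoke the equality characterization for $\psi^\star$, you need $\psi^\star(\thetab)$ to be finite, but this is automatic here since $\partial\psi^\star(\thetab)\neq\emptyset$ forces finiteness (and $\psi^\star$ is proper because $\psi$ is real-valued and convex), so the step is sound.
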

\begin{lemma}{(Fenchel-Young inequality) \citep{Fen49}}
    For any $\psi: \mathbb{R}^p \to \mathbb{R}$ and $\bm{\omega}, \thetab \in \mathbb{R}^p$, it holds:
    \begin{equation}\label{eq:fenchel_young}
        \innerprod{\thetab}{\bm{\omega}} \leq \psi(\thetab) + \psi^\star(\bm{\omega}).
    \end{equation}
\end{lemma}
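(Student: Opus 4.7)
The statement to prove is the classical Fenchel-Young inequality, which follows in essentially one line from the definition of the convex conjugate. The plan is simply to unpack that definition and exploit the fact that a supremum is an upper bound on every single term it is taken over.

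Concretely, by the defining equation of the convex conjugate,
\begin{equation*}
\psi^\star(\bm{\omega}) = \sup_{\thetab' \in \mathbb{R}^p} \{ \innerprod{\bm{\omega}}{\thetab'} - \psi(\thetab') \}.
\end{equation*}
Since this supremum ranges over every $\thetab' \in \mathbb{R}^p$, I can instantiate it at the particular choice $\thetab' = \thetab$ to obtain the lower bound $\psi^\star(\bm{\omega}) \geq \innerprod{\bm{\omega}}{\thetab} - \psi(\thetab)$. A one-step rearrangement then yields exactly $\innerprod{\thetab}{\bm{\omega}} \leq \psi(\thetab) + \psi^\star(\bm{\omega})$, which is the desired inequality. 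There is no genuine obstacle in this argument, and in particular no need to assume convexity or closedness of $\psi$: the bound holds for an entirely arbitrary function because it relies only on the trivial property that a supremum dominates any individual argument.

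For context, equality in Fenchel-Young holds precisely when $\thetab$ attains the supremum defining $\psi^\star(\bm{\omega})$. For convex closed $\psi$, this equality condition coincides with $\bm{\omega} \in \partial \psi(\thetab)$, which dovetails with the conjugate subgradient statement of Proposition~\ref{prop:grad_duality} placed immediately before the lemma. Together these two facts form the standard duality toolkit that the authors are assembling in order to manipulate Bregman divergences in the finite-time analysis of Appendix~\ref{app:bregman_measure}.
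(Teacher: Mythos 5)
Your proof is correct: instantiating the supremum defining $\psi^\star(\bm{\omega})$ at $\thetab' = \thetab$ and rearranging is the standard one-line argument, and you rightly note it needs no convexity or closedness of $\psi$. The paper states this lemma without proof (citing Fenchel), so there is nothing further to compare; your argument is exactly the canonical one implied by the paper's definition of the convex conjugate.
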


We define the following generalized \textit{Bregman divergence}.
\begin{definition}[Generalized Bregman divergence]\label{def:gen_bregman}
    Let $\psi: \mathbb{R}^p \to \mathbb{R}$ with $\psi(\thetab) = \frac{1}{2} \|\thetab\|_\star^2$ for all $\thetab \in \mathbb{R}^p$. We define the (generalized) Bregman divergence $D_{\frac{1}{2} \|\cdot\|_\star^2}^{\mathbf{m}}(\cdot, \cdot) : \mathbb{R}^p \times \mathbb{R}^p \to \mathbb{R}$ induced by $\psi$ as follows:
    \begin{equation}\label{eq:gen_breg_div}
        D_{\frac{1}{2} \|\cdot\|_\star^2}^{\mathbf{m}} (\mathbf{y}, \mathbf{z}) = \frac{1}{2} \left\lVert \mathbf{y} \right \rVert_\star^2 - \frac{1}{2} \left \lVert \mathbf{z} \right \rVert_\star^2 - \innerprod{\mathbf{m}}{\mathbf{y} - \mathbf{z}},
    \end{equation}
    where $\mathbf{m} \in \partial \frac{1}{2} \left \lVert \mathbf{z} \right \rVert_\star^2$.
\end{definition}
\begin{remark}
Notice that if the function $\psi(\thetab) = \frac{1}{2} \|\thetab\|_\star^2$ is differentiable, then the subdifferental defined at any point collapses to a single element: the gradient of $\psi$. If, further, $\psi$ is strictly convex, then~\Eqref{eq:gen_breg_div} coincides with the usual Bregman divergence induced by $\psi$, defined as $D_\psi(\mathbf{y}, \mathbf{z})=\psi(\mathbf{y})-\psi(\mathbf{z}) - \innerprod{\nabla \psi (\mathbf{z})}{\mathbf{y} - \mathbf{z}}$. Bregman divergences \citep{Bre67} generalize the Euclidean squared distance in different geometries and have found numerous applications in machine learning \citep{NeYu83,Ban+05}.
\end{remark}

Next, we derive guarantees for the finite-time iterates, akin to those of Definition~\ref{def:approx_kkt}, with the difference that the following ones are with respect to a generalized Bregman divergence.

\begin{proposition}\label{prop:approx_bregman_KKT} 
For any $t > t_0$, let $\Tilde{\thetab}_t = \frac{\thetab_t}{\left(\min_{i \in [m]} y_i f(\x_i; \thetab_t)\right)^{\frac{1}{L}}}$. There exist $\lambda_i \geq 0, \Tilde{\mathbf{h}}_i^\star \in \partial f(\x_i; \Tilde{\thetab_t})$, such that for any $\Tilde{\mathbf{k}} \in \partial \frac{1}{2} \|\Tilde{\thetab_t}\|^2$, it holds:
    \begin{equation}\label{eq:approx-epsilon-delta}
        \begin{split}
            D_{\frac{1}{2} \|\cdot\|_\star^2}^{\Tilde{\thetab}_t} \left(\sum_{i = 1}^m \lambda_i y_i \Tilde{\mathbf{h}}_i, \Tilde{\mathbf{k}} \right) & \leq \frac{1}{\Tilde{\gamma}(t_0)^{\frac{2}{L}}} \left( 1 - \innerprod{\frac{\thetab_t}{\| \thetab_t\|}}{\frac{- \mathbf{g}_t^\star }{\|\mathbf{g}_t^\star\|_\star}} \right), \\
            \sum_{i=1}^m \lambda_i \left( y_i f(\x_i; \Tilde{\thetab)} - 1 \right) & \leq \frac{m}{e \Tilde{\gamma}(t_0)^{\frac{2}{L}} L \log \frac{1}{\cL}},
        \end{split}
    \end{equation}
    with $\mathbf{g}_t^\star \in \argmin_{\mathbf{u} \in \partial \cL(\thetab_t)} \|\mathbf{u}\|_\star$.
\end{proposition}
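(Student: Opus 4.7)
The plan is to reuse the same dual/primal construction that appears in the proof of Theorem~\ref{thm:main_result}, and then simply unpack the generalized Bregman divergence. Let $q_{\min} = \min_{i \in [m]} y_i f(\x_i;\thetab_t)$ so that $\Tilde{\thetab}_t = q_{\min}^{-1/L}\thetab_t$, and pick $\mathbf{h}_i^{(t)} \in \partial f(\x_i;\thetab_t)$ witnessing the chain-rule decomposition $\mathbf{g}_t^\star = -\sum_i e^{-y_i f(\x_i;\thetab_t)} y_i \mathbf{h}_i^{(t)}$ for a minimum-norm subgradient. Define $\Tilde{\mathbf{h}}_i = q_{\min}^{1/L-1}\mathbf{h}_i^{(t)}$, which lies in $\partial f(\x_i;\Tilde{\thetab}_t)$ by the homogeneous-subgradient rule (Theorem B.2 in~\citet{LyLi20}), and set $\lambda_i = \tfrac{\|\thetab_t\|}{\|\mathbf{g}_t^\star\|_\star}\, q_{\min}^{1-2/L}\, e^{-y_i f(\x_i;\thetab_t)} \geq 0$. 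An identical computation to the one in the proof of Theorem~\ref{thm:main_result} then yields
\[
\sum_{i=1}^m \lambda_i y_i \Tilde{\mathbf{h}}_i \;=\; -\|\Tilde{\thetab}_t\|\,\frac{\mathbf{g}_t^\star}{\|\mathbf{g}_t^\star\|_\star}.
\]

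For the first bound, I would unpack the generalized Bregman divergence using Fenchel--Young equality. Since $\Tilde{\mathbf{k}} \in \partial \tfrac12 \|\Tilde{\thetab}_t\|^2$, Proposition~\ref{prop:grad_duality} gives $\Tilde{\thetab}_t \in \partial \tfrac12 \|\Tilde{\mathbf{k}}\|_\star^2$, the Fenchel--Young inequality becomes an equality, and moreover $\|\Tilde{\mathbf{k}}\|_\star = \|\Tilde{\thetab}_t\|$ and $\innerprod{\Tilde{\thetab}_t}{\Tilde{\mathbf{k}}} = \|\Tilde{\thetab}_t\|^2$. Plugging $\mathbf{y} = \sum_i \lambda_i y_i \Tilde{\mathbf{h}}_i$ into the definition of $D^{\Tilde{\thetab}_t}_{\frac12\|\cdot\|_\star^2}$, and using $\|\mathbf{y}\|_\star = \|\Tilde{\thetab}_t\|$, everything collapses to
\[
D^{\Tilde{\thetab}_t}_{\frac12\|\cdot\|_\star^2}\!\left(\sum_i \lambda_i y_i \Tilde{\mathbf{h}}_i,\, \Tilde{\mathbf{k}}\right) \;=\; \|\Tilde{\thetab}_t\|^2 \left( 1 - \innerprod{\tfrac{\thetab_t}{\|\thetab_t\|}}{\tfrac{-\mathbf{g}_t^\star}{\|\mathbf{g}_t^\star\|_\star}}\right),
\]
where I used that $\Tilde{\thetab}_t$ is a positive scalar multiple of $\thetab_t$. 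A short computation with $q_{\min} = \|\thetab_t\|^L \gamma(\thetab_t)$ gives $\|\Tilde{\thetab}_t\|^2 = \gamma(\thetab_t)^{-2/L}$, and combining Lemma~\ref{lem:smooth_hard_margin} (which gives $\gamma \geq \Tilde{\gamma}$) with the monotonicity of the soft margin (Theorem~\ref{thm:margin_monotonicity}) yields $\|\Tilde{\thetab}_t\|^2 \leq \Tilde{\gamma}(t_0)^{-2/L}$, which produces the first inequality of~\eqref{eq:approx-epsilon-delta}.

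For the second bound, the computation is essentially identical to the one that established $\delta_\ell \to 0$ in the proof of Theorem~\ref{thm:main_result}: substituting the definition of $\lambda_i$ and using $\|\mathbf{g}_t^\star\|_\star \geq \tfrac{L}{\|\thetab_t\|}\, e^{-q_{\min}}\log\tfrac{1}{\cL}$ from Theorem~\ref{thm:margin_monotonicity} cancels the $q_{\min}$-factor and yields a sum of terms of the form $e^{-(y_i f(\x_i;\thetab_t) - q_{\min})}\bigl(y_i f(\x_i;\thetab_t) - q_{\min}\bigr)$, each bounded by $1/e$. Applying Lemma~\ref{lem:smooth_hard_margin} and the monotonicity of $\Tilde\gamma$ once more to lower bound $\gamma(\thetab_t)^{2/L}$ by $\Tilde\gamma(t_0)^{2/L}$ gives the desired second inequality.

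The main obstacle will be the bookkeeping of the exponents of $q_{\min}$ and $\|\thetab_t\|$ in the $\lambda_i$'s, particularly to ensure that the Bregman divergence cleanly reduces to $\|\Tilde{\thetab}_t\|^2$ times the alignment defect. The conceptual ingredients (Fenchel--Young equality at $(\Tilde\thetab_t,\Tilde{\mathbf{k}})$, homogeneity of subgradients, lower bound on $\|\mathbf{g}_t^\star\|_\star$) are all already in place in the paper.
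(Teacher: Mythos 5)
Your proposal is correct and follows essentially the same route as the paper's proof: the same choice of $\lambda_i$ and $\Tilde{\mathbf{h}}_i$, the same identification $\sum_i \lambda_i y_i \Tilde{\mathbf{h}}_i = -\|\Tilde{\thetab}_t\|\,\mathbf{g}_t^\star/\|\mathbf{g}_t^\star\|_\star$, the same use of Proposition~\ref{prop:grad_duality} to take $\mathbf{m}=\Tilde{\thetab}_t$, and the same lower bound on $\|\mathbf{g}_t^\star\|_\star$ together with Lemma~\ref{lem:smooth_hard_margin} and margin monotonicity for both conditions. The only (harmless) difference is that you evaluate the divergence exactly via the Fenchel--Young equality at $(\Tilde{\thetab}_t,\Tilde{\mathbf{k}})$, whereas the paper reaches the same bound by applying the Fenchel--Young inequality to the cross term involving $\mathbf{r}\in\partial\|\Tilde{\thetab}_t\|$.
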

\begin{proof}
    To simplify the notation, let $q_{\text{min}} = \min_{i \in [m]} y_i f(\x_i; \thetab_t)$. Let 
    $\Tilde{\mathbf{k}} \in \partial \frac{1}{2} \|\Tilde{\thetab}_t\|^2$, $\mathbf{g}_t^\star \in \argmin_{\mathbf{u} \in \partial \cL(\thetab_t)} \|\mathbf{u}\|_\star$ and $\mathbf{h}_i^\star \in \partial f(\x_i; \thetab_t), i \in [m],$ such that $\mathbf{g}_t^\star = - \sum_{i = 1}^m e^{- y_i f(\x_i; \thetab_t)} y_i \mathbf{h}_i^\star$ (whose existence is guaranteed from chain rule -- Theorem~\ref{thm:chain_rule_inclusion}). Finally,  we define $\Tilde{\mathbf{h}}_i^\star = q_{\text{min}}^{\frac{1}{L} - 1} \mathbf{h}_i^\star, i \in [m]$, for which it holds: $\Tilde{\mathbf{h}}_i^\star \in \partial f(\x_i; \Tilde{\thetab_t})$ from Theorem B.2(a) in \cite{LyLi20}.
    We set $\lambda_i = \frac{\|\thetab_t\|}{\|\mathbf{g}_t^\star \|_\star} q_{\text{min}}^{1 - \frac{2}{L}} e^{- y_i f(\x_i; \thetab_t)} \geq 0$. Then, it holds:
    \begin{equation}
        \begin{split}
            \sum_{i = 1}^m \lambda_i y_i \Tilde{\mathbf{h}}_i^\star & = \sum_{i = 1}^m \lambda_i q_{\text{min}}^{\frac{1}{L} - 1} y_i \mathbf{h}_i^\star \;\;\;\;\; (\text{Thm B.2(a) in \cite{LyLi20}}) \\
            & = \frac{\|\thetab_t\|}{q_{\text{min}}^{\frac{1}{L}} \|\mathbf{g}_t^\star\|_\star} \sum_{i=1}^m e^{- y_i f(\x_i; \thetab_t)} y_i \mathbf{h}_i
            = - \frac{\|\thetab_t\| \mathbf{g}_t^\star }{ q_{\text{min}}^{\frac{1}{L}} \|\mathbf{g}_t^\star\|_{\star}},
        \end{split}
    \end{equation}
    which is a scaled version of the (minimum norm) subderivative of the loss.
    
    Let $\psi(\thetab_t) = \frac{1}{2} \|\thetab_t\|_\star^2$ be the potential function that we shall use in order to define our divergence. For this specific $\psi$, it holds: $\psi^\star(\bm{\omega}) = \frac{1}{2} \|\bm{\omega}\|^2$ (see for instance Example 3.27 in \cite{BoVa14} for a derivation).
    Recall that in the definition of $D_{\frac{1}{2} \|\cdot\|_\star^2}^{\mathbf{m}}$ (\Eqref{eq:gen_breg_div}) there is an extra choice that we have to make; the one of the subderivative $\mathbf{m}$.
    In what follows, we will specifically measure ``distance'' between $\sum_{i = 1}^m \lambda_i y_i \Tilde{\mathbf{h}}_i$ and $\Tilde{\mathbf{k}}$ using $D_{\frac{1}{2} \|\cdot\|_\star^2}^{\Tilde{\thetab_t}} (\cdot, \cdot)$, i.e. by picking $\mathbf{m} = \Tilde{\thetab_t}$. This is possible, since from Proposition~\ref{prop:grad_duality} it holds that $\Tilde{\thetab_t} \in \partial \frac{1}{2} \|\Tilde{\mathbf{k}}\|_\star^2$. Finally, let $\mathbf{r} \in \partial \|\Tilde{\thetab_t}\|$ be the subgradient of $\lVert \cdot \rVert$ that stems from the chain rule of $\frac{1}{2} \lVert \cdot \rVert^2$ evaluated at $\Tilde{\thetab_t}$.
    We calculate the divergence between the two vectors:
    \begin{equation}\label{eq:divergence_long_calc}
        \begin{split}
            D_{\frac{1}{2} \|\cdot\|_\star^2}^{\Tilde{\thetab_t}} & \left(\sum_{i = 1}^m \lambda_i y_i \Tilde{\mathbf{h}}_i, \Tilde{\mathbf{k}} \right) \\ & = \frac{1}{2} \left \lVert - \frac{\|\thetab_t\| \mathbf{g}_t^\star}{q_{\text{min}}^{\frac{1}{L}} \|\mathbf{g}_t^\star \|_\star} \right \rVert_\star^2 - \frac{1}{2} \left\lVert \Tilde{\mathbf{k}} \right\rVert_\star^2 - \innerprod{\frac{\thetab_t}{q_{\text{min}}^{\frac{1}{L}}}}{-\frac{\|\thetab_t\| \mathbf{g}_t^\star}{q_{\text{min}}^{\frac{1}{L}} \|\mathbf{g}_t^\star \|_\star} - \Tilde{\mathbf{k}}} \\
            & = \frac{1}{2} \frac{\|\thetab_t\|^2}{q_{\text{min}}^{\frac{2}{L}}} - \frac{1}{2} \left \lVert \|\Tilde{\thetab_t}\| \mathbf{r} \right \rVert_\star^2 - \innerprod{\frac{\thetab_t}{q_{\text{min}}^{\frac{1}{L}}}}{\frac{- \|\thetab_t\| \mathbf{g}_t^\star}{q_{\text{min}}^{\frac{1}{L}} \|\mathbf{g}_t^\star \|_\star} - \|\Tilde{\thetab_t}\| \mathbf{r}} \;\;\;\;\; (\text{Chain rule}) \\
            & = \frac{\|\thetab_t\|^2}{q_{\text{min}}^{\frac{2}{L}}} \left( \frac{1}{2} - \frac{1}{2} \left\lVert \mathbf{r} \right\rVert_\star^2 - \innerprod{\frac{\thetab_t}{\|\thetab\|}}{\frac{-\mathbf{g}_t^\star}{\left \lVert \mathbf{g}_t^\star\right \rVert_\star}} + \innerprod{\frac{\thetab_t}{\|\thetab_t\|}}{\mathbf{r}} \right) \\
            & \leq \frac{\|\thetab_t\|^2}{q_{\text{min}}^{\frac{2}{L}}} \left( \frac{1}{2} - \frac{1}{2} \left\lVert \mathbf{r} \right\rVert_\star^2 - \innerprod{\frac{\thetab_t}{\|\thetab_t\|}}{\frac{-\mathbf{g}_t^\star}{\left \lVert \mathbf{g}_t^\star\right \rVert_\star}} + \frac{1}{2} \left\Vert \frac{\thetab_t}{\|\thetab_t\|} \right\rVert^2 + \frac{1}{2} \left\lVert \mathbf{r} \right\rVert_\star^2 \right) \;\;\;\;\; (\text{\Eqref{eq:fenchel_young}}) \\
            & = \frac{\|\thetab_t\|^2}{q_{\text{min}}^{\frac{2}{L}}} \left( 1 - \innerprod{\frac{\thetab_t}{\|\thetab_t\|}}{\frac{-\mathbf{g}_t^\star}{\left \lVert \mathbf{g}_t^\star\right \rVert_\star}}\right) \\ & \leq \frac{1}{\Tilde{\gamma}^{\frac{2}{L}}} \left( 1 - \innerprod{\frac{\thetab_t}{\|\thetab_t\|}}{\frac{-\mathbf{g}_t^\star}{\left \lVert \mathbf{g}_t^\star\right \rVert_\star}}\right) \leq \frac{1}{\Tilde{\gamma}(t_0)^{\frac{2}{L}}} \left( 1 - \innerprod{\frac{\thetab_t}{\|\thetab_t\|}}{\frac{-\mathbf{g}_t^\star}{\left \lVert \mathbf{g}_t^\star\right \rVert_\star}}\right),
        \end{split}
    \end{equation}
    where the last 2 inequalities follow from the relation between soft and hard margin (Lemma \ref{lem:smooth_hard_margin}), and the monotonicity of the former.

    For the second condition, we have:
\begin{equation}\label{eq:2nd_cond_approx}
        \begin{split}
            \sum_{i=1}^m \lambda_i \left( y_i f(\x_i; \Tilde{\thetab)} - 1 \right) & = \frac{\|\thetab\|}{\| \mathbf{g}_t^\star \|_\star} \sum_{i=1}^m q_{\text{min}}^{1 - \frac{2}{L}} e^{- y_i f(\x_i; \thetab)} \left( \frac{y_i f(\x_i; \thetab)}{q_{\text{min}}} - 1 \right) \\
            & = \frac{\|\thetab\| }{q_{\text{min}}^{\frac{2}{L}}\|\mathbf{g}_t^\star \|_\star} \sum_{i = 1}^m e^{-y_i f(\x_i; \thetab)} \left( y_i f(\x_i; \thetab) - q_{\text{min}} \right).
        \end{split}
    \end{equation}
    From~\Eqref{eq:norm_grad_dual_ineq} and~\Eqref{eq:homog_ineq}, we can lower bound the dual norm of the subderivate:
    \begin{equation}
        \| \mathbf{g}_t^\star \|_\star \geq \frac{L}{\| \thetab \|} \cL \log \frac{1}{\cL} \geq \frac{L}{\| \thetab \|} e^{- q_{\text{min}}} \log \frac{1}{\cL}.
    \end{equation}
    By plugging in back to~\Eqref{eq:2nd_cond_approx}, we obtain
    \begin{equation}
        \begin{split}
            \sum_{i=1}^m \lambda_i \left( y_i f(\x_i; \Tilde{\thetab)} - 1 \right) & \leq \frac{\|\thetab\|^2}{q_{\text{min}}^{\frac{2}{L}}L e^{- q_{\text{min}}} \log \frac{1}{\cL}} \sum_{i = 1}^m e^{-y_i f(\x_i; \thetab)} \left( y_i f(\x_i; \thetab) - q_{\text{min}} \right) \\
            & = \frac{\|\thetab\|^2}{q_{\text{min}}^{\frac{2}{L}}L \log \frac{1}{\cL}} \sum_{i = 1}^m e^{- \left(y_i f(\x_i; \thetab) - q_{\text{min}} \right)} \left( y_i f(\x_i; \thetab) - q_{\text{min}} \right) \\ & \leq \frac{1}{ \Tilde{\gamma}(t_0)^{\frac{2}{L}} L \log \frac{1}{\cL}} \sum_{i = 1}^m e^{- \left(y_i f(\x_i; \thetab) - q_{\text{min}} \right)} \left( y_i f(\x_i; \thetab) - q_{\text{min}} \right) \;\;\;\;\; (\text{Lemmata \ref{lem:smooth_hard_margin}, \ref{thm:margin_monotonicity}}) \\
            & \leq \frac{m}{e \Tilde{\gamma}(t_0)^{\frac{2}{L}} L \log \frac{1}{\cL}},
        \end{split}
    \end{equation}
    since the function $u \mapsto e^{-u} u, u > 0$ has a maximum value of $e^{-1}$.
\end{proof}

Building on the previous Proposition, we can provide an explicit characterization of proximity to stationarity at finite time for some algorithmic norms. The proof relies on a fundamental relationship between smoothness of a function and strong convexity of its convex conjugate.
\begin{proposition}{(Conjugate Correspondence Theorem - Thm. 5.26 in \cite{Beck17})}\label{prop:smooth_strcnvx}
    Let $\sigma > 0$. If $\psi$ is a $\frac{1}{\sigma}$-smooth convex function, then its conjugate $\psi^\star$ is $\sigma$-strongly convex.
\end{proposition}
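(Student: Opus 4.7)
The plan is to prove the Conjugate Correspondence Theorem directly from the definitions, using the quadratic upper bound (``descent lemma'') implied by smoothness together with the variational definition of $\psi^\star$. Concretely, I would first observe that since $\psi$ is $\frac{1}{\sigma}$-smooth and convex, for any $x, y$ we have the descent lemma
\begin{equation}
    \psi(y) \leq \psi(x) + \innerprod{\nabla \psi(x)}{y - x} + \frac{1}{2\sigma}\|y - x\|^2.
\end{equation}
Under the standing assumption that $\psi$ is closed, proper, and convex (implicit in working with $\psi^\star$ meaningfully), the supremum in $\psi^\star(u) = \sup_x \{\innerprod{u}{x} - \psi(x)\}$ is attained, and for any maximizer $x_u$ we have $u = \nabla \psi(x_u)$ and $x_u \in \partial \psi^\star(u)$ (by the conjugate subgradient theorem, Proposition~\ref{prop:grad_duality}).

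Second, I would fix $u_1, u_2$ and let $x_1$ be a point with $u_1 = \nabla \psi(x_1)$, so that $x_1 \in \partial \psi^\star(u_1)$ and $\psi^\star(u_1) = \innerprod{u_1}{x_1} - \psi(x_1)$. Then, for an arbitrary $y$, the Fenchel--Young inequality gives $\psi^\star(u_2) \geq \innerprod{u_2}{y} - \psi(y)$. Combining this with the descent lemma applied at the base point $x_1$ yields
\begin{equation}
    \psi^\star(u_2) \geq \innerprod{u_2}{y} - \psi(x_1) - \innerprod{u_1}{y - x_1} - \frac{1}{2\sigma}\|y - x_1\|^2.
\end{equation}
Substituting $\psi(x_1) = \innerprod{u_1}{x_1} - \psi^\star(u_1)$ and rearranging gives
\begin{equation}
    \psi^\star(u_2) \geq \psi^\star(u_1) + \innerprod{u_2 - u_1}{y} - \innerprod{u_1}{y - x_1} - \innerprod{u_1}{x_1} + \innerprod{u_1}{x_1} \cdot 0 \text{ (collected)} - \frac{1}{2\sigma}\|y - x_1\|^2,
\end{equation}
which after cancellation becomes $\psi^\star(u_2) \geq \psi^\star(u_1) + \innerprod{u_2 - u_1}{y - x_1} + \innerprod{u_2 - u_1}{x_1} - \frac{1}{2\sigma}\|y - x_1\|^2 \cdot (\text{pending max})$.

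Third, I would maximize the right-hand side over $y$. This is a concave quadratic in $y$; setting the derivative to zero yields $y^\star = x_1 + \sigma(u_2 - u_1)$. Plugging this back yields the clean bound
\begin{equation}
    \psi^\star(u_2) \geq \psi^\star(u_1) + \innerprod{x_1}{u_2 - u_1} + \frac{\sigma}{2}\|u_2 - u_1\|^2.
\end{equation}
Since $x_1$ is an arbitrary element of $\partial \psi^\star(u_1)$, this is precisely the subgradient characterization of $\sigma$-strong convexity of $\psi^\star$, completing the proof.

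The main (and essentially only) obstacle is the standard regularity bookkeeping: one must know that the sup in the definition of $\psi^\star$ is attained and that $\nabla \psi$ is indeed surjective onto $\mathrm{dom}(\psi^\star)$, so that a primal witness $x_1$ for $u_1$ exists. This is ensured by the hypotheses that $\psi$ is a proper closed convex function which is smooth on the whole space (as in Beck's Theorem 5.26), together with the conjugate subgradient theorem (Proposition~\ref{prop:grad_duality}). Once this is in place, the three steps above give strong convexity of $\psi^\star$ with the sharp constant $\sigma$, which is exactly the constant we need in Appendix~\ref{app:bregman_measure} to convert the Bregman-style bounds of Proposition~\ref{prop:approx_bregman_KKT} into approximate-KKT guarantees.
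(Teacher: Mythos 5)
The paper offers no proof of this proposition --- it is imported verbatim as Theorem 5.26 from Beck's book --- so there is nothing internal to compare against; what you have written is the standard textbook argument for the ``smooth $\Rightarrow$ strongly convex conjugate'' direction, and it is essentially correct. The skeleton is right: take $x_1 \in \partial\psi^\star(u_1)$, note $u_1 = \nabla\psi(x_1)$ by the conjugate subgradient theorem and differentiability of $\psi$, plug the descent-lemma upper bound on $\psi(y)$ into $\psi^\star(u_2) \ge \innerprod{u_2}{y} - \psi(y)$, use the Fenchel--Young equality $\psi(x_1) = \innerprod{u_1}{x_1} - \psi^\star(u_1)$, and optimize over $y$; the points $u_1$ with $\partial\psi^\star(u_1) = \emptyset$ are handled by the usual limiting argument, as you note. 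Two caveats. First, the algebra in your second display is garbled (the stray ``$\cdot\, 0$ (collected)'' and ``(pending max)'' terms), though the final bound $\psi^\star(u_2) \ge \psi^\star(u_1) + \innerprod{x_1}{u_2-u_1} + \frac{\sigma}{2}\norm{u_2-u_1}^2$ that you land on is the correct one. Second, and more substantively for this paper: your maximization step ($y^\star = x_1 + \sigma(u_2-u_1)$, yielding $\frac{\sigma}{2}\norm{u_2-u_1}^2$) is the Euclidean computation, whereas the corollary in Appendix~\ref{app:bregman_measure} applies the proposition with smoothness measured in an arbitrary norm $\lVert\cdot\rVert$ and strong convexity in its dual $\lVert\cdot\rVert_\star$. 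In that generality you should not differentiate; instead observe that $\sup_z\{\innerprod{u_2-u_1}{z} - \frac{1}{2\sigma}\norm{z}^2\} = \frac{\sigma}{2}\norm{u_2-u_1}_\star^2$, because the conjugate of $\frac{1}{2\sigma}\lVert\cdot\rVert^2$ is $\frac{\sigma}{2}\lVert\cdot\rVert_\star^2$. With that one substitution the argument proves the norm-paired version that the paper actually uses.
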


Indeed, we show the following corollary for a special class of steepest flows.

\begin{corollary}
    For steepest flow (\Eqref{eq:steep_flow}) with respect to a norm $\lVert \cdot \rVert$, whose square is a smooth function, on the exponential loss, under assumptions \ref{ass:1}, \ref{ass:2}, \ref{ass:3}, parameter iterates $\thetab_t$ are along the direction of an approximate KKT point 
    of optimization problem~\ref{eq:min_norm_problem_main} for any $t \geq t_0$.
\end{corollary}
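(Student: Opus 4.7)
The plan is to bootstrap on Proposition~\ref{prop:approx_bregman_KKT}, which already supplies concrete multipliers $\lambda_i \geq 0$ and subgradients $\tilde{\mathbf{h}}_i^\star \in \partial f(\x_i;\tilde{\thetab}_t)$ for the feasible point $\tilde{\thetab}_t = \thetab_t/(\min_i y_i f(\x_i;\thetab_t))^{1/L}$, and already bounds the complementary-slackness residual by $\frac{m}{e\,\tilde\gamma(t_0)^{2/L} L \log(1/\cL(\thetab_t))}$. This latter bound can be taken directly as the $\delta(t)$ required by Definition~\ref{def:approx_kkt}, and it tends to $0$ as $t \to \infty$ by Lemma~\ref{lem:loss_and_norm_at_inf}. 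What is left is the $\varepsilon$-condition: transforming the generalized Bregman-divergence bound
\[
D_{\frac{1}{2}\|\cdot\|_\star^2}^{\tilde{\thetab}_t}\!\left(\textstyle\sum_i \lambda_i y_i \tilde{\mathbf{h}}_i^\star,\; \tilde{\mathbf{k}}\right) \;\leq\; \frac{1}{\tilde\gamma(t_0)^{2/L}}\!\left(1 - \Big\langle \tfrac{\thetab_t}{\|\thetab_t\|},\tfrac{-\mathbf{g}_t^\star}{\|\mathbf{g}_t^\star\|_\star}\Big\rangle\right)
\]
into an ordinary $\ell_2$ bound on $\lVert \sum_i \lambda_i y_i \tilde{\mathbf{h}}_i^\star - \tilde{\mathbf{k}} \rVert_2$.

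The key tool is Proposition~\ref{prop:smooth_strcnvx}: the assumption that $\|\cdot\|^2$ is smooth means $\frac{1}{2}\|\cdot\|^2$ is $\frac{1}{\sigma}$-smooth for some $\sigma > 0$, hence its convex conjugate $\frac{1}{2}\|\cdot\|_\star^2$ is $\sigma$-strongly convex with respect to $\|\cdot\|_2$. A standard consequence of strong convexity is that for any $\mathbf{m} \in \partial \frac{1}{2}\|\mathbf{z}\|_\star^2$,
\[
D_{\frac{1}{2}\|\cdot\|_\star^2}^{\mathbf{m}}(\mathbf{y},\mathbf{z}) \;\geq\; \tfrac{\sigma}{2}\|\mathbf{y}-\mathbf{z}\|_2^2.
\]
Applying this with $\mathbf{m} = \tilde{\thetab}_t$ (which indeed lies in $\partial \frac{1}{2}\|\tilde{\mathbf{k}}\|_\star^2$ by the conjugate subgradient theorem, Proposition~\ref{prop:grad_duality}, exactly as exploited in the proof of Proposition~\ref{prop:approx_bregman_KKT}) yields
\[
\left\lVert \textstyle\sum_i \lambda_i y_i \tilde{\mathbf{h}}_i^\star - \tilde{\mathbf{k}} \right\rVert_2 \;\leq\; \sqrt{\tfrac{2}{\sigma\,\tilde\gamma(t_0)^{2/L}}\Big(1 - \big\langle \tfrac{\thetab_t}{\|\thetab_t\|},\tfrac{-\mathbf{g}_t^\star}{\|\mathbf{g}_t^\star\|_\star}\big\rangle\Big)} =: \varepsilon(t).
\]
Together with the $\delta(t)$ above and feasibility of $\tilde{\thetab}_t$ (Lemma~\ref{lem:feasible}), this establishes that $\tilde{\thetab}_t$ is an $(\varepsilon(t),\delta(t))$-approximate KKT point of~(\ref{eq:min_norm_problem_main}) in the sense of Definition~\ref{def:approx_kkt}, i.e.\ $\thetab_t$ is along the direction of such a point, for every $t \geq t_0$.

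To make the guarantee non-vacuous, I would finally note that the alignment $\langle \thetab_t/\|\thetab_t\|,\, -\mathbf{g}_t^\star/\|\mathbf{g}_t^\star\|_\star\rangle$ is bounded above by $1$ by the definition of the dual norm, and converges to $1$ along the sequence constructed in Lemma~\ref{lemma:gradient_sequence}; combined with $\log(1/\cL(\thetab_t)) \to \infty$, this means both $\varepsilon(t_n)$ and $\delta(t_n)$ vanish along that sequence. The main subtlety I expect is the one already handled in Proposition~\ref{prop:approx_bregman_KKT}: the dual-squared-norm $\frac{1}{2}\|\cdot\|_\star^2$ need not be differentiable, so one must argue carefully that the chosen subgradient $\tilde{\thetab}_t$ is consistent both inside the Bregman-divergence upper bound inherited from Proposition~\ref{prop:approx_bregman_KKT} and inside the strong-convexity lower bound used to get the $\ell_2$ inequality. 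Since the strong-convexity inequality holds uniformly in the choice of subgradient at the base point, this matching is automatic, and the two bounds can be chained without any additional assumption on uniqueness.
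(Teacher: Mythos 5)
Your proposal is correct and follows essentially the same route as the paper's proof: both bootstrap on Proposition~\ref{prop:approx_bregman_KKT} and use the conjugate correspondence theorem (Proposition~\ref{prop:smooth_strcnvx}) to turn the generalized Bregman divergence into a proper one that lower-bounds a norm distance, taking the complementary-slackness bound verbatim as $\delta(t)$. The one small slip is that the theorem gives $\sigma$-strong convexity of $\frac{1}{2}\lVert\cdot\rVert_\star^2$ with respect to $\lVert\cdot\rVert_\star$, not $\lVert\cdot\rVert_2$; like the paper, you then need equivalence of norms in finite dimensions to reach the $\ell_2$ condition of Definition~\ref{def:approx_kkt}, which only changes the constant in your $\varepsilon(t)$.
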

\begin{proof}
    \sloppy
    From Proposition~\ref{prop:smooth_strcnvx}, if $\frac{1}{2} \lVert \cdot \rVert^2$ is $\frac{1}{\sigma}$-smooth w.r.t. $\lVert \cdot \rVert$, then the function $\frac{1}{2} \lVert \cdot \rVert_\star^2$ is $\sigma$-strongly convex w.r.t. $\lVert \cdot \rVert_\star$. Thus, the function $D_{\frac{1}{2} \|\cdot\|_\star^2}^{\thetab}$ is defined with respect to a strongly convex function and it becomes a proper Bregman divergence. Hence, from Theorem 5.24 in \cite{Beck17}, for $\Tilde{\mathbf{h}}_i^\star = q_{\text{min}}^{\frac{1}{L} - 1} \mathbf{h}_i^\star$, where $\Tilde{\mathbf{h}}_i^\star \in \partial f(\x_i; \Tilde{\thetab}), i \in [m]$ such that $\mathbf{g}_t^\star = - \sum_{i = 1}^m e^{- y_i f(\x_i; \thetab)} y_i \mathbf{h}_i^\star$ and $\Tilde{\mathbf{k}} \in \partial \frac{1}{2} \|\Tilde{\thetab}\|^2$, it holds:
    \begin{equation}
           D_{\frac{1}{2} \|\cdot\|_\star^2}^{\Tilde{\thetab}} \left(\sum_{i = 1}^m \lambda_i y_i \Tilde{\mathbf{h}}_i^\star, \Tilde{\mathbf{k}} \right) \geq \sigma \left \lVert \sum_{i = 1}^m \lambda_i y_i \Tilde{\mathbf{h}}_i^\star - \Tilde{\mathbf{k}} \right \rVert_\star.
    \end{equation}
    From the equivalence of the norms, this implies that the normalized iterates $\Tilde{\thetab}_{t} = \frac{\thetab_{t}}{\left(\min_{i \in [m]} y_i f(\x_i; \thetab_{t})\right)^{\frac{1}{L}}}$  satisfy all the conditions of an approximate KKT point.
\end{proof}

\begin{remark}
    Leveraging the proof of Theorem~\ref{thm:main_result_main}, and in particular Lemma~\ref{lemma:gradient_sequence}, one can show that there exists a subsequence of the iterates that (a) converges to a limit point, and (b) has vanishing approximation errors $\epsilon(t)$ and $\delta(t)$ as $t \to \infty$. This is because the alignment in the RHS of ~\Eqref{eq:approx-epsilon-delta} goes to 1 for such a subsequence. Therefore, the previous Corollary defines well-motivated approximate KKT points.
\end{remark}

\section{Relationship to Adam and Shampoo}\label{sec:steep_adaptive}

The family of steepest descent algorithms includes simplified versions (momentum turned-off) of two adaptive methods, Adam and Shampoo, which have been very popular for training deep neural networks.

\subsection{Adam}\label{sec:adam}
Adam~\citep{KiBa15} is a popular adaptive optimization method, which is frequently used in deep learning. Following our previous notation, the update rule of Adam amounts to:
\begin{equation}\label{eq:adam}
    \begin{split}
        \mathbf{m}_t & = \beta_1 \mathbf{m}_{t-1} + (1-\beta_1) \nabla \cL (\thetab_{t-1}) \\
        \mathbf{v}_t & = \beta_2 \mathbf{v}_{t-1} + (1 - \beta_2) \nabla \cL (\thetab_{t-1})^2 \\
        \hat{\mathbf{m}}_t & = \frac{\mathbf{m}_t}{1 - \beta_1^t}; \hat{\mathbf{v}}_t = \frac{\mathbf{v}_t}{1 - \beta_2^t} \\
        \thetab_{t} & = \thetab_{t-1} - \eta_t \frac{\hat{\mathbf{m}}_t}{\sqrt{\hat{\mathbf{v}}_t} + \epsilon},
    \end{split}
\end{equation}
where the $\sqrt{}, {}^2, \div$ operations are overloaded to operate elementwise in vectors. Parameters $\beta_1, \beta_2$ control the memory of the update rule, while $\epsilon$ is a numerical precision parameter. Notice that for $\beta_1 = \beta_2 = \epsilon = 0$, we recover sign-gradient descent, a version of \Eqref{eq:steep_desc} with $\lVert \cdot \rVert = \lVert \cdot \rVert_\infty$ where $\Delta \thetab_t$ is normalized to satisfy $\left \lVert \Delta \thetab_t\right \rVert = 1$.

\citet{Wan+22} studied the implicit bias of (\ref{eq:adam}) for $\epsilon > 0$ in linear networks establishing bias towards $\ell_2$ margin maximization, while \citet{ZZC24} analyzed the case of $\epsilon=0$ and generic $\beta_1, \beta_2 \in [0, 1)$ also in linear networks and showed bias towards $\ell_\infty$ margin maximization.

\subsection{Shampoo}\label{ssec:shampoo}

Shampoo~\citep{Gup+18} is an adaptive optimization algorithm, which has recently gained popularity in deep learning applications. For each weight matrix $\mathbf{W}_t$ and its corresponding gradient matrix $\mathbf{G}_t$, the update rule amounts to:
\begin{equation}
    \begin{split}
    \mathbf{L}_{t} & = \mathbf{L}_{t-1} + \mathbf{G}_t \mathbf{G}_t^T \\
    \mathbf{R}_{t} & = \mathbf{R}_{t-1} + \mathbf{G}_t^T \mathbf{G}_t \\
    \mathbf{W}_{t+1} & = \mathbf{W}_t - \eta \mathbf{L}_t^{-1/4} \mathbf{G}_t \mathbf{R}_t^{-1/4}.
    \end{split}
\end{equation}

As \citet{BeNe24} recently observed, with momentum turned off, this simplifies to:
\begin{equation}
    \begin{split}
        \mathbf{W}_{t+1} & = \mathbf{W}_{t} - \eta_t \mathbf{U}_t \mathbf{V}_t^T ,
    \end{split}
\end{equation}
where $\mathbf{U}_t, \mathbf{V}_t$ contain the left and right singular vectors of $\mathbf{G}_t$, i.e., $\mathbf{G}_t = \mathbf{U}_t \mathbf{\Sigma}_t \mathbf{V}_t$. \citet{BeNe24} further remarked that this update corresponds to steepest descent (in matrix space) with respect to the \textit{spectral norm} $\sigma_{\mathrm{max}}(\cdot)$. This is equivalent to an architecture-dependent norm in parameter space. For instance, if $\thetab = (\mathbf{W}_1, \ldots, \mathbf{W}_L)$, then Shampoo without momentum corresponds to steepest descent with respect to the norm $\|\thetab\|_S := \max\left(\sigma_{\mathrm{max}}(\mathbf{W}_1), \ldots, \sigma_{\mathrm{max}}(\mathbf{W}_L)\right)$.

\section{Experimental Details}\label{app:expr_details}

All experiments are implemented in PyTorch \citep{Pas+17}.

\paragraph{Teacher-student experiments}
We use the following hyperparameters: $d = 2^{32}, k = 64, k^\prime = 1024, m = 250$,  learning rate $\eta = 6\times10^{-3}$ and density $\frac{\|\thetab^\star\|_0}{k^\prime (d+1)} = 0.0001$ (3 coordinates active per neuron). We vary the scale of initialization in $\{0.1, 0.01, 0.001\}$ and we train for $10^5$ epochs. Each random seed affects the draw of the datasets and the initialization of the parameters of the network. Test accuracy is estimated using 20,000 unseen data drawn from the same generative process.
\begin{figure}
    \centering
    \includegraphics[width=0.47\linewidth]{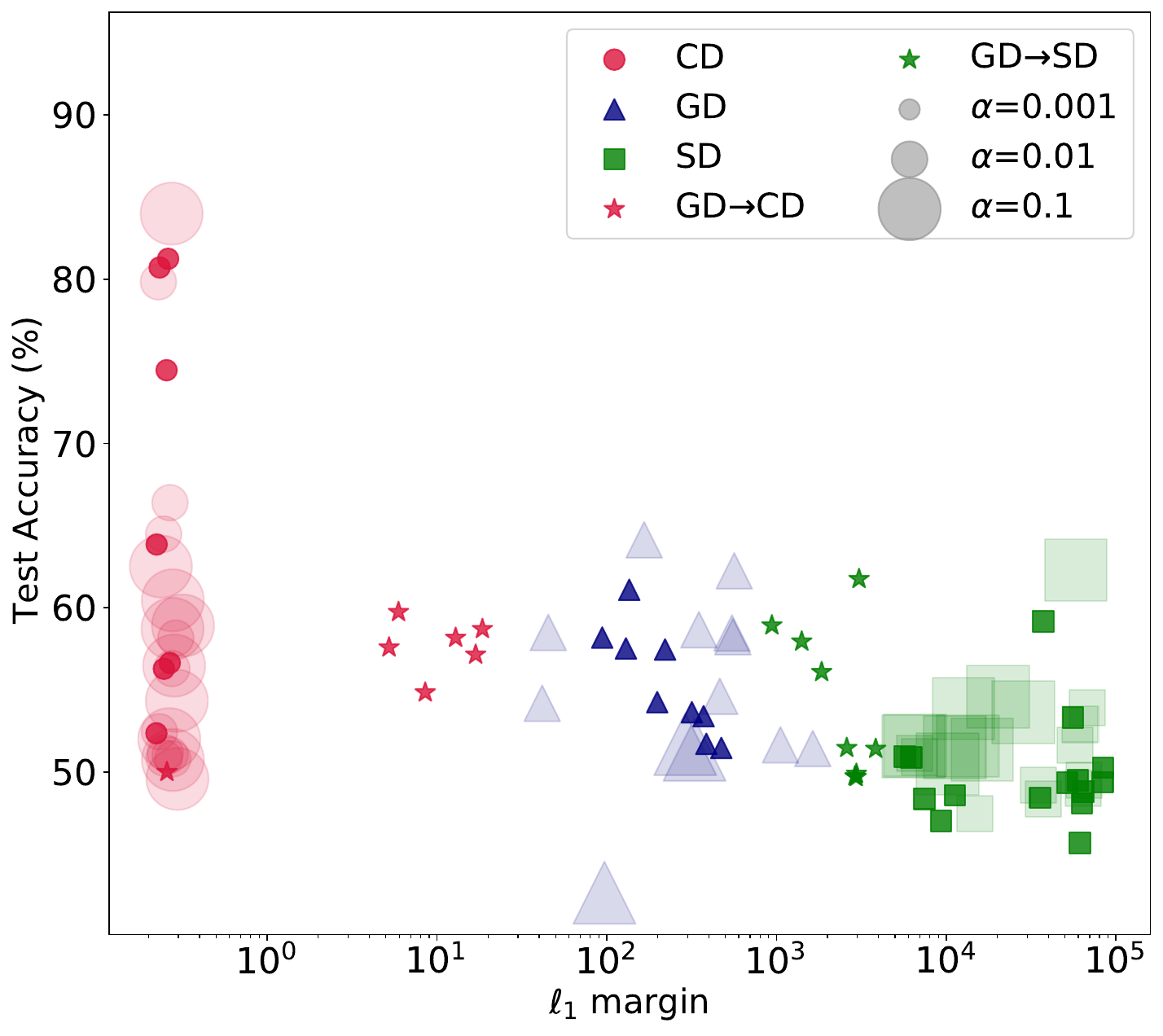}
    \includegraphics[width=0.47\linewidth]{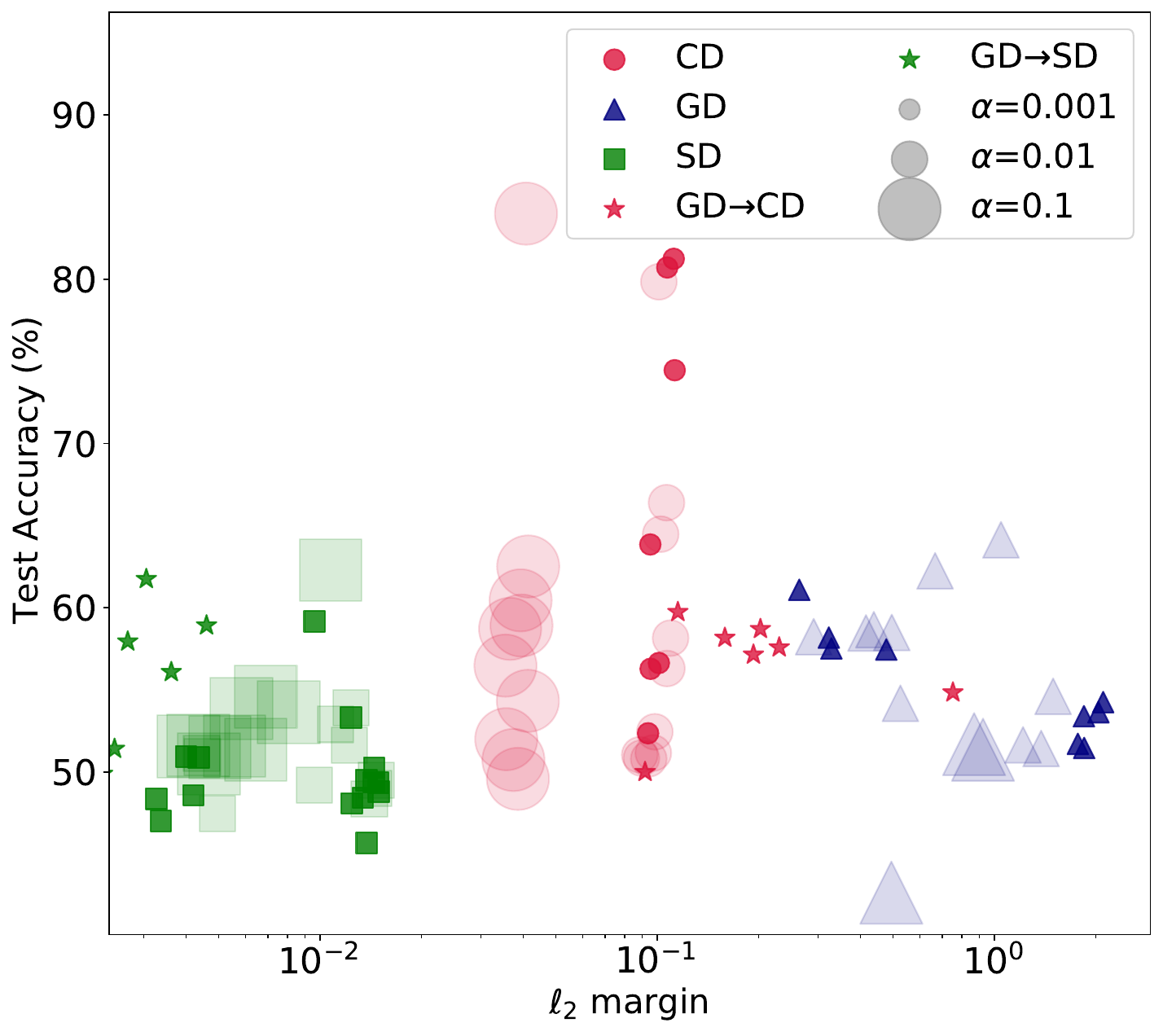}
    \caption{\textbf{Geometric margins vs test accuracy in a teacher-student setup.} \textit{Left:} $\ell_1$ margin. \textit{Right:} $\ell_2$ margin. Each point corresponds to a different run (different random seed).}
    \label{fig:add_margin}
\end{figure}

\paragraph{Adam experiments}
We use a constant learning rate of $3 \times 10^{-3}$ and 1-hidden layer neural networks of width $128$, optimizing the logistic loss. The digits that we extract are '3' and '6' (100 training points). Each random seed corresponds to a different draw of the training dataset and different initialization. Sign gradient descent runs were very effective in minimizing the training loss, and we stopped the training early after the loss reached value smaller than $10^{-7}$ in order to avoid numerical issues. We depict the final value, repeated for as many epochs as shown in the figures (as if the model has indeed converged).

Figure~\ref{fig:adam_sd27} shows accuracy and margins for a different pair of digits ("2" vs "7").

\paragraph{Shampoo experiments} We trained 1-hidden layer neural networks of width $128$ with the second layer frozen at initialization. The scale of initialization was set to $10^{-2}$. We used a learning rate of $10^{-2}$ for \texttt{Adam} and $10, 20$ for \texttt{GD} and \texttt{Shampoo}, respectively (the unusually large learning rate is due to a frozen second layer -- the models were still away from the EOS regime).

\begin{figure}
    \centering
    \includegraphics[scale=0.4]{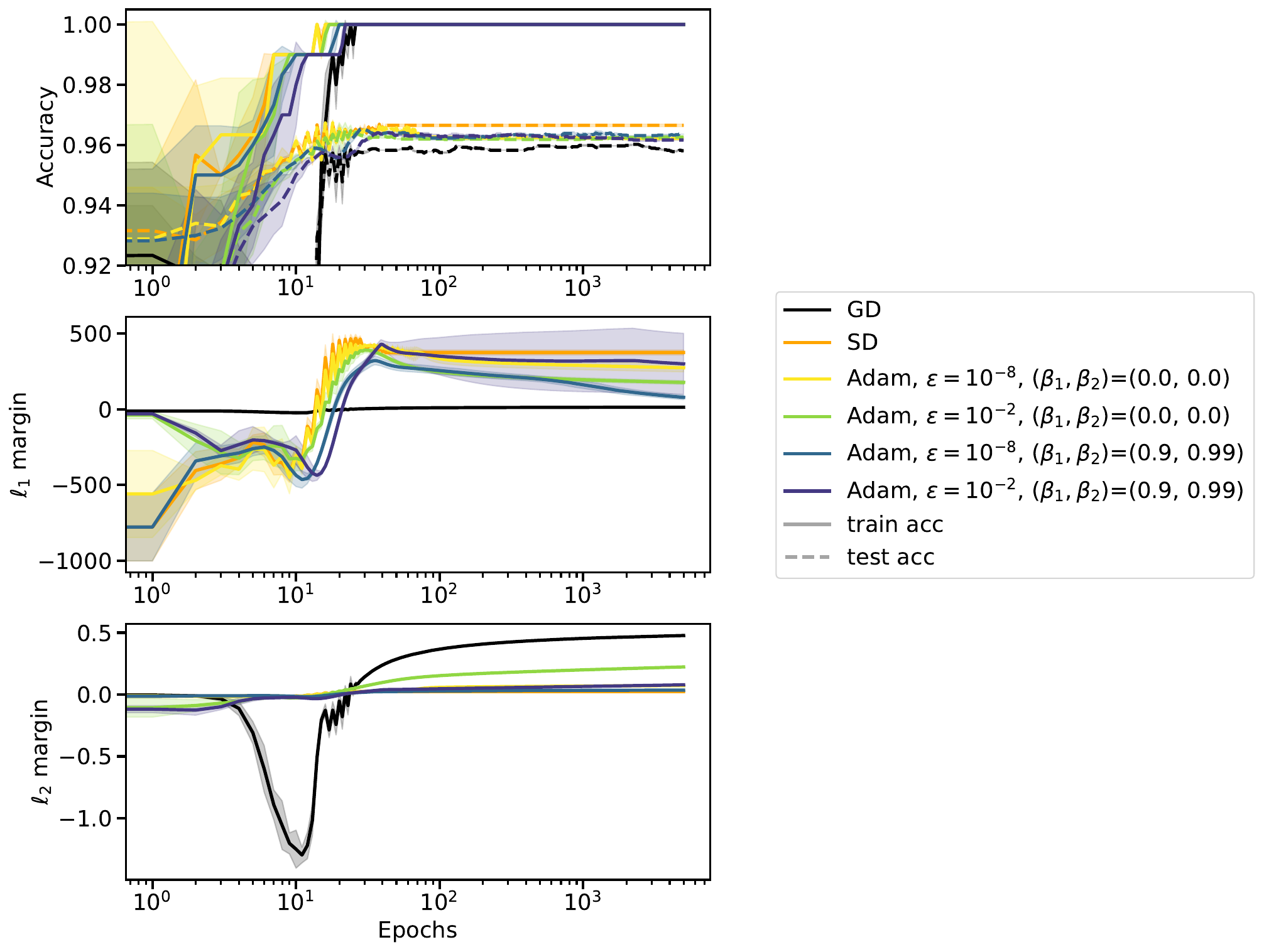}
    \caption{\textbf{Relationship between Adam and steepest descent algorithms.} Digits '2' and '7'.}
    \label{fig:adam_sd27}
\end{figure}

\end{document}